\documentclass{article}

\usepackage{microtype}
\usepackage{graphicx}
\usepackage{subcaption}
\usepackage{booktabs} 

\usepackage{hyperref}

\usepackage[accepted]{icml2026}

\usepackage{amsmath}
\usepackage{amssymb}
\usepackage{mathtools}
\usepackage{amsthm}

\usepackage[capitalize,noabbrev]{cleveref}

\theoremstyle{plain}
\newtheorem{theorem}{Theorem}[section]

\newtheorem{lemma}[theorem]{Lemma}

\theoremstyle{definition}

\theoremstyle{remark}

\usepackage[textsize=tiny]{todonotes}

\usepackage{float}             
\usepackage{wrapfig}           
\usepackage{xcolor}            
\usepackage{booktabs}          
\usepackage{multirow}
\usepackage{nicefrac}          
\usepackage{microtype}         
\usepackage{type1cm}           
\usepackage{bm}                
\usepackage{dsfont}            
\usepackage{cases}             
\usepackage{array}             

\def\nn{\nonumber \\ }
 
\def\mbr{\mathbb{R}} 

\def\mbz{\mathbb{Z}}
\def\mbn{\mathbb{N}}

\def\mbe{\mathbb{E}}

\def\mbone{\mathds{1}}

\icmltitlerunning{Accurate Evaluation of Quickest Changepoint Detectors via Non-parametric Survival Analysis}

\begin{document}

\twocolumn[
  \icmltitle{Accurate Evaluation of Quickest Changepoint Detectors via\\
  Non-parametric Survival Analysis}



  \icmlsetsymbol{equal}{*}

  \begin{icmlauthorlist}
    \icmlauthor{Taiki Miyagawa}{NEC}
    \icmlauthor{Akinori F. Ebihara}{NEC}

  \end{icmlauthorlist}

  \icmlaffiliation{NEC}{NEC Corporation, Japan}

  \icmlcorrespondingauthor{T. Miyagawa}{miyagawataik@nec.com}

  \icmlkeywords{Machine Learning, ICML}
  \vskip 0.3in
]



\printAffiliationsAndNotice{}  

\begin{abstract}
We propose non-parametric estimators for the average run length (ARL) and average detection delay (ADD) in quickest changepoint detection (QCD) under finite and irregular sequence lengths. 
Although ARL and ADD are widely used as optimality criteria in theoretical and simulation studies, their application to real-world datasets is hindered by limited and irregular sequence lengths. 
To address this issue, we propose non-parametric estimators for the ARL and ADD, termed \textit{KM-ARL and KM-ADD}, by drawing an analogy between QCD and survival analysis to model detection probabilities under sequence truncation. 
We derive estimation bias bounds and prove that they are asymptotically unbiased unless extrapolation is required.
Experiments on simulated and real-world datasets demonstrate their practical utility, enhancing robustness against limited and irregular sequence lengths, improving interpretability, and facilitating empirical, intuitive model selection.
Our Python code is provided at \url{https://github.com/TaikiMiyagawa/Kaplan-Meier-Average-Run-Length},
offering ready-to-use implementations for practitioners.
\end{abstract}

\section{Introduction}
Progress in machine learning depends not only on better models but on reliable performance measurement \citep{hernandez20041st, LFMR05, LFM06, roc2025}. 
Without principled estimation, improvements may be illusory and decisions may be incorrect. 
Therefore, reproducible and interpretable evaluation is a first-class concern in machine learning, not an afterthought.

We study evaluation metrics for models in online quickest changepoint detection (QCD) with unknown pre- and post-change distributions, where their datasets with changepoint labels are available.
QCD has been extensively studied theoretically, with many models proposed and their optimality proven \citep{tartakovsky2019sequential_TartarBookQCD}.
It also has diverse real-world applications, including 
statistical process control \citep{hawkins2003changepoint}, 
industrial quality control \citep{wadinger2024change}, 
epidemiology \citep{johnson2025bayesian}, 
wireless sensor networks \citep{hadjiliadis2009quickest}, 
health monitoring \citep{tan2023network},
radar target detection \citep{xiang2021target},
and seismic sensing \citep{li2016online}.

The average run length (ARL) and the average detection delay (ADD) are central to the theoretical and simulation analysis of QCD models \citep{tartakovsky2014TartarBook, tartakovsky2019sequential_TartarBookQCD}.
The ARL is defined as the average time a detector takes to raise a false alarm, while the ADD refers to the average delay a detector takes to identify a changepoint after it has occurred.
These metrics exhibit a tradeoff: reducing ADD typically shortens ARL (making the detector trigger-happy) and vice versa.

Although the ARL and ADD are widely used as optimality criteria in theoretical and simulation studies, their application to real data is challenging because practical sequences are often limited and irregular in length.
Fig.~\ref{fig:ARLADDcurve_WISDM_cpmodel_main}(a) shows that naive ARL and ADD estimators yield substantial bias and variance, hindering reliable evaluation of QCD models.

To address this issue, we draw an analogy between QCD and survival analysis \citep{kleinbaum1996survival} and propose non-parametric estimators for the ARL and ADD.
We adapt the Kaplan-Meier estimator (KME) \citep{kaplan1958nonparametric_KME_org} to the QCD setting to model detection probabilities under sequence truncation.
These estimators are non-parametric, requiring no assumptions on the underlying data distribution (e.g., exponential).
We derive estimation bias bounds for these estimators, termed \textit{KM-ARL} and \textit{KM-ADD}, by decomposing the estimation bias into \textit{finite-sample bias} and \textit{truncation bias}.
We then show that the finite-sample bias decays exponentially with increasing dataset size and that the truncation bias is smaller than that of conventional estimators.
Building on these findings, we prove that the KM-ARL and KM-ADD are \textit{asymptotically unbiased} unless extrapolation is required.

\begin{figure*}[htbp]
    \vskip 0.2in
    \centering
    \includegraphics[width=1.0\textwidth]{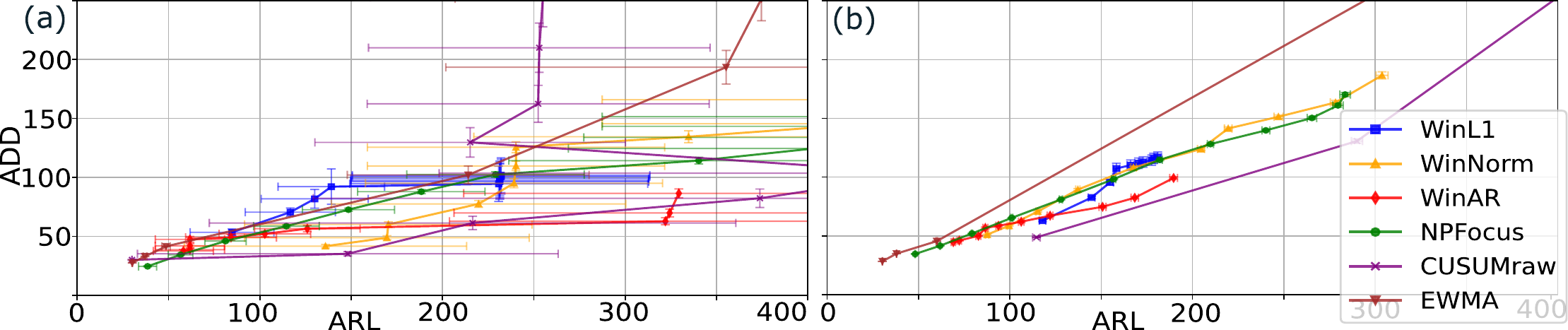} 
    \caption{
        \textbf{Evaluation of QCD models on real-world dataset (machine-labeled subset of WISDM Actitracker).} 
        \textbf{
            (a) LB-ARL \& LB-ADD.
            (b) KM-ARL \& KM-ADD.
        }
        Error bars represent the standard error of the mean. 
        Our KM-ARL and KM-ADD are more robust to irregular lengths than the conventional estimators (LB-ARL and LB-ADD: see Sec.~\ref{sec:KM-ARL and KM-ADD}). 
        The figures are shown at the same magnification scale for readability. 
        The complete figure is provided in Fig.~\ref{fig:ARLADDcurve_WISDM_cpmodel_wide_appendix} in App.~\ref{app: WISDM Actitracker Dataset}.
        Additional evaluation on a Gaussian process dataset is provided in App.~\ref{app:Gaussian Process Dataset}.
        }
    \label{fig:ARLADDcurve_WISDM_cpmodel_main}
\end{figure*}

To demonstrate practical applicability,
we conduct experiments on both simulated and real-world datasets with limited and irregular lengths. 
The results show that our estimators reduce estimation bias compared to baseline estimators and enhance robustness to limited and irregular sequence lengths, thereby improving interpretability and facilitating empirical, intuitive model selection (Fig.~\ref{fig:ARLADDcurve_WISDM_cpmodel_main}(b)).
Our code is provided at \url{https://github.com/TaikiMiyagawa/Kaplan-Meier-Average-Run-Length}, offering off-the-shelf, ready-to-use implementations in Python \citep{Python3} for practitioners.

Our contributions are threefold: 
(1) we propose KM-ARL and KM-ADD, non-parametric estimators for the ARL and ADD, enabling their evaluation on real-world datasets with limited and irregular-length sequences;
(2) we derive estimation bias bounds for these estimators and prove that they are asymptotically unbiased unless extrapolation is required;
and (3) we demonstrate their practical utility through experiments and provide ready-to-use Python implementations.

\section{Related Work} \label{sec:Related Work}
We highlight our contribution in the context of prior research on ARL and ADD estimation using survival analysis. 
To our knowledge, in QCD, no metric in its vanilla form explicitly accounts for irregular sequence lengths.
\citet{sahki2020performance} empirically estimate ARLs and ADDs on truncated sequences of \textit{fixed} length, leveraging \textit{parametric} survival analysis, assuming the survival function decays exponentially.
\citet{bradley2023novelty} employ a \textit{semi-parametric} survival analysis method, the Cox proportional hazards model \citep{cox1972regression}, which assumes an exponential response of the estimator to covariates.
They also use the KME to empirically evaluate intrusion detectors under \textit{fixed} length truncation; however, the ARL and ADD are not directly estimated, and no theoretical analysis is provided.
\citet{lim2025efficient} propose ad hoc estimators for the ARL and ADD on sequences of \textit{fixed} length; however, their theoretical justification has yet to be established, as these estimators diverge when the sequence length $\rightarrow \infty$.
In contrast, we build our estimators on \textit{non-parametric} survival analysis, eliminating the exponential assumptions, derive bias bounds, and prove asymptotic unbiasedness for the proposed estimators. 
Moreover, our analysis includes \textit{irregular} sequence lengths, thereby addressing more practical scenarios.
We also provide a ready-to-use code of our estimators for practitioners.
Supplementary related work is given in App.~\ref{app:Supplementary Related Work}.

\section{Kaplan-Meier ARL \& ADD} \label{sec:KM-ARL and KM-ADD}
We first introduce our notation. For comparison, we then introduce conventional estimators under random length truncation. Finally, we present our proposed estimators.

\paragraph{Definitions.}
$[n]$ denotes $\{1,2, \ldots, n\}$ with $n \in \mbn$.
We use $P$ as a probability distribution.
$X^{(0, t)} = (X^{(0)}, X^{(1)}, \ldots, X^{(t)})$ denote real-valued random variables representing a sequence of length $t+1$ ($t \in \mbz_{\geq0}$).
Each frame $X^{(s)}$ ($s \in \mbz_{\geq 0}$) is sampled from the pre-change density $g(X^{(s)} \mid X^{(0, s-1)})$ when $s < \nu$ and from the post-change density $f (X^{(s)} \mid X^{(0, s-1)} )$ when $s \geq \nu$, 
where the changepoint $\nu \in \mbz_{\geq 0} \cup \{\infty\}$ is a random variable independent of the observations.
$\nu = \infty$ and $\nu = 0$ indicate that all frames in the sequence are sampled from the pre-change and post-change densities, respectively.
Let $T \in \mbz_{\geq 0}$ denote a random variable representing the sequence length, which is independent of the observations.
Let $\tau: X^{(0, T)} \mapsto \tau (X^{(0, T)}) \in \mbz_{\geq 0}\cup\{\infty\}$ be a changepoint detector, where $\tau (X^{(0, T)})$ is the detection point of sequence $X^{(0, T)}$.
We set $\tau (X^{(0, T)}) = \infty$ when no change is detected within the input sequence of finite length. 
The ARL and ADD are defined as 
\begin{align}
    & \, \mu_\infty := \mbe[\tau \mid \nu=\infty] \\
    \text{and } & M_\infty := \mbe[\Delta\tau \mid \Delta \tau \geq 0, \nu < \infty], 
\end{align}
respectively, where $\Delta \tau := \tau - \nu$. $T=\infty$ is assumed here, and the expectation is also taken over $\nu$.
We consider estimation of the ARL and ADD from a dataset $\{ (X_i^{(0, T_i)}, \nu_i) \}_{i=1}^N$ with predicted detection points $\{ \tau_i \}_{i=1}^N$ from a detector $\tau$, where $N \in \mbn$ denotes the dataset size, $X_i^{(0, T_i)}$ is a sequence with a random length $T_i$, $\nu_i$ is the corresponding changepoint, and $\tau_i$ is the detection point of $X_i^{(0, T_i)}$.
Let $\langle \, \cdot \, \rangle_{i:\mathcal{C}} := \sum_{i:\mathcal{C}} \, \cdot \, / |\{i : \mathcal{C}\}|$ denote the empirical expectation under condition $\mathcal{C}$.

\paragraph{Conventional estimators.}
A conventional estimator of ARL under random sequence lengths, adapted from \citep{qiu2013introduction} and referred to as the Less-Biased ARL (LB-ARL) in this paper, follows the definition of $\mu_\infty$:
$\hat{\mu}_{T}^\mathrm{(LB)} := \langle \tau_i \rangle_{i:\nu_i = \infty,, \tau_i \leq T_i}$,
where only sequences satisfying $\tau_i \leq T_i$ are included.
Similarly, a conventional ADD estimator, the LB-ADD, can be defined as $\hat{M}_T^\mathrm{(LB)} := \langle \Delta \tau_i \rangle_{i: \Delta \tau_i \geq 0, \nu_i < \infty, \Delta \tau_i \leq \Delta T_i}$, where $\Delta T_i := T_i - \nu_i$.

A critical drawback of these conventional estimators is that they ignore sequences in which the QCD model fails to raise an alarm before the horizon ($\tau > T$). Although the LB-ARL has been employed in Monte Carlo simulations of the ARL \citep{qiu2013introduction, lim2025efficient} when $T = \mathrm{const.} < \infty$, it exhibits a more substantial negative bias than our estimator due to truncation, as proven in Sec.\ref{sec:Bias Analysis} and demonstrated in Sec.\ref{sec:Experiments}.

\paragraph{Key ideas from survival analysis.}
To overcome this challenge, we model the detection probability beyond the truncation length, using a non-parametric approach inspired by \textit{survival analysis} \citep{kleinbaum1996survival}.
A central interest in survival analysis is the \textit{survival function} $S(t) := P({\rm event \,\, time} > t)$, representing the probability that a patient survives beyond time $t$ ($t \in \mbr_{\geq 0}$), where the \textit{event time} refers to the time of death.
Crucially, the estimation of $S(t)$ on a dataset is performed under \textit{right-censoring}; i.e., the exact event times of several patients are unknown because the patients are lost to follow-up or are still under observation at the end of the study. 
Thus, we only know the lower bound of the event times of these patients, called \textit{censoring times}.
The mean survival time is given by the integral of $S(t)$ over time, i.e., the area under the survival curve, because
$
\int_0^\infty S(t) dt 
= \int_0^\infty P({\rm event \,\, time} > t) dt 
= \int_0^\infty \mbe[\mbone({\rm event \,\, time} > t)] dt
= \mbe[ \int_0^\infty \mbone({\rm event \,\, time} > t) dt]
= \mbe[{\rm event \,\, time}]
$, 
where $\mbone$ is the indicator function.

\paragraph{KM-ARL.}
To estimate the ARL under irregular sequence lengths, we draw an analogy by regarding a patient as a sequence, the event time as the detection point, the censoring time as the minimum of the changepoint and the sequence length, and the mean survival time as the ARL.
Under this analogy, we estimate the \textit{survival function of detection points} $S^\mathrm{ARL}(t):= P(\tau > t \mid \nu=\infty)$ using the Kaplan-Meier estimator (KME) \citep{kaplan1958nonparametric_KME_org}, a non-parametric estimator of the survival function (see Eq.~\eqref{eq:app_KME} in App.~\ref{app:Correspondence} for a simple derivation): 
$\hat{S}^\mathrm{ARL}(t) = \prod_{j: t_j^\mathrm{ARL} \leq t} (1 - \frac{d_j^\mathrm{ARL}}{n_j^\mathrm{ARL}})$, 
where 
$0 < t_1^\mathrm{ARL} < t_2^\mathrm{ARL} < \cdots < t_{N'}^\mathrm{ARL}$ are distinct detection points, with $N'_\mathrm{ARL} \in \mbn$ ($\leq N$);
$d_j^\mathrm{ARL} := | \{ i \in [N] \mid \tau_i = t_j^\mathrm{ARL} \} |$ is the number of sequences with a detection at $t_j^\mathrm{ARL}$ ($j \in [N'_\mathrm{ARL}]$);
and 
$n_j^\mathrm{ARL} := |\{i \in [N] \mid \min \{ \tau_i, C_i^\mathrm{ARL} \} \geq t_j^\mathrm{ARL} \}|$ is the number of sequences neither detected nor censored prior to $t_j^\mathrm{ARL}$ ($j \in [N'_\mathrm{ARL}]$),
with the censoring time of sequence $i$ defined as $C_i^\mathrm{ARL} := \min \{ \nu_i, T_i \}$.
An example of $\hat{S}^\mathrm{ARL}(t)$ is shown in App.~\ref{app:Survival Curve Figure}.
We propose a non-parametric estimator of the ARL under irregular sequence lengths, termed KM-ARL, as the integral of $\hat{S}^\mathrm{ARL}(t)$ over the range $[0, a]$ for arbitrary $a \in \mbr_{\geq 0}$: 
\begin{align}
    \hat{\mu}_T^\mathrm{(KM)} := \int_0^{a} \hat{S}^\mathrm{ARL}(t) dt.
    \label{eq:def_KM_ARL}
\end{align}
In practice, we set $a = T_\mathrm{max} := \max_i\{\min\{\tau_i, C_i^\mathrm{ARL}\}\}$, i.e., the maximum last-observed time, following standard practice in survival analysis \citep{tony2018calculating, calkins2018application}.
For a given dataset, choosing $a > T_\mathrm{max}$ is irrelevant because it is extrapolation beyond the observed support.
Theoretically ideal choices of $a$ are given in Sec.~\ref{sec:Bias Analysis}.
Finally, see App.~\ref{app:Correspondence} for additional motivation and background on our analogy between QCD and survival analysis.


\paragraph{KM-ADD.}
For the ADD, we regard a patient as a sequence with $\nu_i < \infty$ and $\tau_i \geq \nu_i$, the event time as the detection delay $\Delta \tau_i (\geq 0$), the censoring time as the sequence length measured from the changepoint ($C_i^\mathrm{ADD} := \Delta T_i = T_i - \nu_i$), and the mean survival time as the ADD (see Tab.~\ref{tab:correspondence} in App.~\ref{app:Correspondence} for reference).
Under this analogy, we propose a non-parametric estimator of the ADD under irregular lengths, termed KM-ADD, as 
\begin{align}
    \hat{M}_T^\mathrm{(KM)} := \int_0^{b} \hat{S}^\mathrm{ADD}(t) dt, 
\end{align}
where 
$\hat{S}^\mathrm{ADD}(t) := \prod_{j: t_j^\mathrm{ADD} \leq t} (1 - \frac{d_j^\mathrm{ADD}}{n_j^\mathrm{ADD}})$ is a non-parametric estimate of the \textit{survival function of detection delays} $S^\mathrm{ADD}(t) := P(\Delta \tau > t \mid \Delta \tau \geq 0, \nu < \infty)$;
$0 \leq t_1^\mathrm{ADD} < t_2^\mathrm{ADD} < \cdots < t_{N'_\mathrm{ADD}}^\mathrm{ADD}$ are the distinct detection delays, i.e., the sorted unique values of $\Delta \tau_i \geq 0$, with $N'_\mathrm{ADD} \in \mbn$ ($\leq N$);
$d_j^\mathrm{ADD} := | \{ i \in [N] \mid 0 \leq \Delta \tau_i = t_j^\mathrm{ADD} \} |$ is the number of sequences with positive detection delay equal to $t_j^\mathrm{ADD}$ ($j \in [N'_\mathrm{ADD}]$);
and $n_j^\mathrm{ADD} := |\{i \in [N] \mid \min \{ \Delta \tau_i,  C_i^\mathrm{ADD} \} \geq t_j^\mathrm{ADD}, \Delta \tau_i \geq 0 \}|$ is the number of sequences neither detected nor censored prior to $t_j^\mathrm{ADD}$ ($j \in [N'_\mathrm{ADD}]$).
Again, the upper limit $b$ is arbitrary, and we set $b = \Delta T_\mathrm{max} := \max_i \{ \min \{ \Delta \tau_i, C_i^\mathrm{ADD} \} \mid \Delta \tau_i \geq 0, \nu_i < \infty \}$ in our experiments.
Finally, we provide a summary of our analogy between QCD and survival analysis in Tab.~\ref{tab:correspondence} in App.~\ref{app:Correspondence}.

\section{Bias Analysis} \label{sec:Bias Analysis}
We derive bias bounds for the KM-ARL and KM-ADD and prove that they are asymptotically unbiased unless extrapolation is required. 
We first examine the estimation bias of the KM-ARL:
\begin{align}
    \mathcal{B}(\hat{\mu}_T^\mathrm{(KM)}) 
    := \mbe[\hat{\mu}_T^\mathrm{(KM)}] - \mu_\infty, 
\end{align}
where $\mu_\infty := \mbe[\tau \mid \nu=\infty] < \infty$ is the true ARL under infinite sequence length (we assume all relevant finiteness hereafter).
We decompose this bias into two components: 
the \textit{finite-sample bias} and the \textit{truncation bias} 
\begin{align}
    &\mathcal{B}_\mathrm{FS} (\hat{\mu}_T^\mathrm{(KM)}) := \mbe[\hat{\mu}_T^\mathrm{(KM)}] - \mu_T^\mathrm{(KM)} \\
    &\mathcal{B}_\mathrm{TR}(\hat{\mu}_T^\mathrm{(KM)}) := \mu_T^\mathrm{(KM)} - \mu_\infty^\mathrm{(KM)}
\end{align}
so that $\mathcal{B}(\hat{\mu}_T^\mathrm{(KM)}) = \mathcal{B}_\mathrm{FS}(\hat{\mu}_T^\mathrm{(KM)})  + \mathcal{B}_\mathrm{TR}(\hat{\mu}_T^\mathrm{(KM)})$, 
where $\mu_T^\mathrm{(KM)} := \int_0^{a} S_\mathrm{ARL}(t)dt$ is the KM-ARL with the true survival function of detection points.

\paragraph{Finite-sample bias.}
$\mathcal{B}_\mathrm{FS}(\hat{\mu}_T^\mathrm{(KM)})$ quantifies the error arising from a finite dataset of sequences with finite and irregular lengths and dominates the total bias when the dataset size is small.
We derive a bound for $\mathcal{B}_\mathrm{FS}(\hat{\mu}_T^\mathrm{(KM)})$, indicating that it decays exponentially as $N \rightarrow \infty$:
\begin{theorem}[Finite-sample bias bounds for KM-ARL] \label{thm:Finite-sample bias bounds for KM-ARL}
    We idealize the time index as continuous without loss of generality, for technical convenience.
    Let $F^\mathrm{ARL}, G^\mathrm{ARL},$ and $H^\mathrm{ARL}$ be the cumulative distribution functions (CDFs) of $\tau$, $C^\mathrm{ARL}$, and $\min\{\tau, C^\mathrm{ARL}\}$, respectively. 
    Assume that (i) $F^\mathrm{ARL}$ and $G^\mathrm{ARL}$ do not have common discontinuities,  (ii) $\tau$ and $C^\mathrm{ARL}$ are independent, known as the \textit{independent censoring} (or non-informative censoring) assumption \citep{ranganathan2012censoring}, and (iii) $H^\mathrm{ARL}$ is continuous. Then, we have for any $a \in \mbr_{\geq 0}$
    \begin{align}
        &-\int_0^{a} t \,G^\mathrm{ARL}(t)\,{H^\mathrm{ARL}}(t)^{N-1}\,dF^\mathrm{ARL}(t) 
        \leq
        \mathcal{B}_\mathrm{FS}(\hat{\mu}_T^\mathrm{(KM)}) \nn  
        &\leq
        \int_0^{a} a\,G^\mathrm{ARL}(t)\,{H^\mathrm{ARL}}(t)^{N-1}\,dF^\mathrm{ARL}(t).
    \end{align}
\end{theorem}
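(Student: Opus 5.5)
This bound is the classical Gill / Stute--Wang bias bound for the Kaplan--Meier estimator, integrated over $[0,a]$. The proof therefore reduces to a pointwise statement about $\mbe[\hat{S}^\mathrm{ARL}(t)] - S^\mathrm{ARL}(t)$ followed by Fubini.

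\textbf{Step 1: reduce to a pointwise bias.} By Fubini (the integrand is bounded by $1$ on a finite interval),
\[
\mathcal{B}_\mathrm{FS}(\hat{\mu}_T^\mathrm{(KM)})=\int_0^a \big(\mbe[\hat{S}^\mathrm{ARL}(t)]-S^\mathrm{ARL}(t)\big)\,dt .
\]
It suffices to control $\mbe[\hat{F}_N(t)]-F^\mathrm{ARL}(t)$, where $\hat F_N=1-\hat S^\mathrm{ARL}$.

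\textbf{Step 2: Gill's exact representation.} Under (i)--(iii), use the martingale/Duhamel representation of the Kaplan--Meier error. As in Gill (1980) and Stute--Wang (1993), one has for each $t$
\[
F^\mathrm{ARL}(t)-\mbe[\hat F_N(t)]
= \mbe\!\left[\int_0^t \frac{1-\hat F_N(s^-)}{1-F^\mathrm{ARL}(s)}\,\mbone\{Y(s)=0\}\,dF^\mathrm{ARL}(s)\right]\ \ge 0 .
\]
Here $Y(s)=\#\{i:\min(\tau_i,C_i)\ge s\}$ is the at-risk count. The identity comes from writing $\hat F_N-F$ as a stochastic integral against the counting-process martingale $M=N-\int Y\,d\Lambda$, restricted to $\{Y>0\}$, plus the remainder on the event $\{Y(s)=0\}$. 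Only the remainder survives taking expectations. Independent censoring (ii) is what makes $M$ a martingale. The absence of common jumps (i) and continuity of $H$ (iii) remove tie corrections, so the Duhamel equation holds with no extra terms. Bounding $1-\hat F_N(s^-)\le 1$ gives $0\le F-\mbe\hat F_N\le \int_0^t \frac{P(Y(s)=0)}{1-F(s)}\,dF(s)$, and $P(Y(s)=0)=H(s)^N$.

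\textbf{Step 3: sharpen to the stated form.} Next refine so that the factor $(1-F(s))^{-1}H(s)^N$ becomes $G(s)H(s)^{N-1}$. Stute--Wang obtain this by conditioning on the last-observed individual. The event $\{Y(s)=0\}$ with $\hat F_N(s^-)<1$ requires the largest observation to be a censoring time below $s$. Its density involves $N\,H^{N-1}\,dG\,(1-F)$, and after integrating by parts this yields a bound of the form $\int_0^t G(s)H(s)^{N-1}\,dF(s)$. This gives
\[
-\int_0^t G\,H^{N-1}\,dF\le \mbe[\hat S(t)]-S(t)\le 0,
\]
which I would then integrate over $t\in[0,a]$. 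Exchanging the order of integration gives $\int_0^a (a-s)\,G H^{N-1}\,dF(s)$, whose absolute value is at most $\int_0^a a\,G H^{N-1}\,dF$. This matches the stated bound. The stated lower bound with weight $t$ is then the weaker form, or it arises from the alternative representation via $\hat\mu=\int t\,d\hat F$ on $[0,a]$, where the mass of $\hat F$ missing at the largest observation gives the $t$-weighted term. I would check which sign convention the intended bound corresponds to and state both inequalities accordingly. Either way, $H(t)<1$ on $[0,a]$ gives exponential decay in $N$.

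\textbf{Main obstacle.} The hard step is Step 3: getting the exact $G H^{N-1}$ factor rather than the cruder $H^N/(1-F)$. My first attempt would be the Stute--Wang argument of conditioning on the order statistics of $\min(\tau_i,C_i)$ and on the censoring indicator of the maximum. A fallback is to cite Stute and Wang's bias theorem directly, since its hypotheses are exactly (i)--(iii).
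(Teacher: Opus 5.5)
Your pointwise-then-Fubini route differs from the paper's and, once a sign slip is corrected, proves a sharper statement. As written, however, Step 3 has the wrong sign, and the stated lower bound is not obtained. Stute's Cor.~1.1 with $\varphi=\mbone_{[0,t]}$ (the lemma the paper uses) gives $-\int_0^t GH^{N-1}\,dF\le \mbe[\hat F_N(t)]-F(t)\le 0$, i.e.
\[
0\;\le\;\mbe[\hat S^\mathrm{ARL}(t)]-S^\mathrm{ARL}(t)\;\le\;\int_0^t G(s)H(s)^{N-1}\,dF(s).
\]
So the Kaplan--Meier survival curve is biased upward, which is exactly what your own Step 2 asserts ($F-\mbe\hat F_N\ge 0$). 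For $N=1$ one checks directly that $\mbe[\hat S(t)]-S(t)=\int_0^t G\,dF\ge 0$. You wrote the inequality for $\hat S$ with the signs that belong to $\hat F_N$.

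With your sign, integrating gives $-\int_0^a (a-s)GH^{N-1}dF(s)\le\mathcal{B}_\mathrm{FS}\le 0$. The stated lower bound $-\int_0^a s\,GH^{N-1}dF(s)$ does not follow from this, because the weights $a-s$ and $s$ are not comparable. Your remark that the $t$-weighted bound is ``the weaker form'' is therefore unjustified. With the correct sign, Fubini yields
\[
0\;\le\;\mathcal{B}_\mathrm{FS}(\hat{\mu}_T^\mathrm{(KM)})\;\le\;\int_0^a (a-s)\,G(s)H(s)^{N-1}\,dF(s)\;\le\;\int_0^a a\,G(s)H(s)^{N-1}\,dF(s).
\]
The stated upper bound then holds, and the stated lower bound is trivial because its left-hand side is nonpositive. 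No sign-convention check or alternative representation is needed.

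The paper proceeds differently. It integrates by parts, $\int_0^a\hat S=a-a\hat F_N(a)+\int_0^a t\,d\hat F_N$, so that $\mathcal{B}_\mathrm{FS}=I_1-aI_2$ with $I_1=\mbe\int_0^a t\,d\hat F_N-\int_0^a t\,dF$ and $I_2=\mbe[\hat F_N(a)]-F(a)$. It applies Cor.~1.1 twice, with $\varphi=t\mbone_{[0,a]}$ and $\varphi=\mbone_{[0,a]}$, placing each of $I_1,I_2$ in $[-\int\varphi GH^{N-1}dF,\,0]$. It then bounds the two pieces separately via $I_1\le I_1-aI_2\le -aI_2$. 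This discards the cancellation between $I_1$ and $aI_2$, which is where the loose $t$-weighted lower bound comes from. Your argument keeps that cancellation and shows the finite-sample bias is in fact nonnegative, with weight $a-s$.

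Both proofs rest on Stute's corollary as a black box. Your Steps 2--3 do not actually derive the $GH^{N-1}$ factor. Also, the Step 2 display is not an exact identity: the Duhamel computation carries an extra factor $S(t)/S(s)\le 1$ inside the integral, so your expression is only an upper bound (harmless for how you use it). The fallback of citing Stute is what carries your proof, exactly as in the paper.
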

Our proof is based on \citep{stute1993strong, stute1994bias} and given in App.~\ref{app:Proof of B_FS for KM-ARL}.
Assumption (i) is a technical requirement and is valid unless pathological situations are considered. 
Assumption (ii) holds for online QCD models that do not look ahead the input sequence, which are the focus in this paper.
This is because: detection points $\tau_i$ prior to censoring can be regarded as samples from $P(\tau \mid \nu=\infty)$;  $P(\tau \mid \nu=\infty)$, $P(\nu)$, and $P(T)$ are independent; and thus, $P(\tau \mid \nu=\infty)$ and $P(C:= \min\{\nu, T\})$ are also independent.
In contrast, Assumption (ii) does not hold for offline changepoint detection models because $\tau$ depends on $X^{(0, T)}$, which in turn depends on $\nu$ and $T$.
Assumption (iii) is also a technical requirement.

According to Thm.~\ref{thm:Finite-sample bias bounds for KM-ARL}, the finite-sample bias exhibits the following properties. 
First, it decays exponentially to zero as $N \rightarrow \infty$ if $H^\mathrm{ARL}(t) < 1$ for $t \in [0,a]$.
Second, it vanishes when $G^\mathrm{ARL} = 0$, i.e., in the absence of censoring (no changepoints or horizons) in $t \in [0, a]$, which is desirable because estimation becomes more accurate as censoring decreases.
Third, the bound becomes looser for larger $a$ because $G^\mathrm{ARL}$ and $H^\mathrm{ARL}$ approach $1$ monotonically as $t$ increases.
However, the truncation bias decreases for larger $a$, leading to a tradeoff between the finite-sample and truncation biases.
Finally, we note that empirically verifying the convergence of the finite-sample bias is challenging because, for small sample sizes, estimation variance overshadows the bias.

We can derive a similar bound for the finite-sample bias of the KM-ADD (proof is given in App.~\ref{app:Proof of B_FS for KM-ADD}), ensuring that it also decays exponentially as $N'' \rightarrow \infty$, where $N''$ is the number of sequences with $\nu_i < \infty$ and $\Delta \tau_i \geq 0$:
\begin{theorem}[Finite-sample bias bounds for KM-ADD] \label{thm:Finite-sample bias bounds for KM-ADD}
    We idealize the time index as continuous without loss of generality, for technical convenience.
    Consider only sequences with $\nu_i < \infty$ and $\Delta \tau_i \geq 0$ in the dataset.
    Let $F^\mathrm{ADD}, G^\mathrm{ADD},$ and $H^\mathrm{ADD}$ be the CDFs of $\Delta\tau$, $C^\mathrm{ADD}$, and $\min\{\Delta\tau, C^\mathrm{ADD}\}$, respectively. 
    Assume that (i) $F^\mathrm{ADD}$ and $G^\mathrm{ADD}$ do not have common discontinuities,  (ii) $\Delta\tau$ and $C^\mathrm{ADD}$ are independent, and (iii) $H^\mathrm{ADD}$ is continuous. Then, for any $b \in \mbr_{\geq 0}$,
    \begin{align}
        &-\int_0^{b} t \,G^\mathrm{ADD}(t)\,{H^\mathrm{ADD}}(t)^{N''-1}\,dF^\mathrm{ADD}(t) 
        \leq \nn
        &\mathcal{B}_\mathrm{FS}(\hat{M}_T^\mathrm{(KM)}) 
        \leq 
        \int_0^{b} b\,G^\mathrm{ADD}(t)\,{H^\mathrm{ADD}}(t)^{N''-1}\,dF^\mathrm{ADD}(t).
    \end{align}
\end{theorem}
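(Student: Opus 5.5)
The statement is the KM-ADD counterpart of Thm.~\ref{thm:Finite-sample bias bounds for KM-ARL}, so the plan is to reduce it to that theorem (or to the underlying Stute--Wang result) by restricting to the relevant subpopulation and relabeling. Condition on the sub-dataset $\mathcal{I} := \{i : \nu_i < \infty, \Delta\tau_i \geq 0\}$ of size $N''$. Under this conditioning the pairs $(\Delta\tau_i, C_i^\mathrm{ADD})$, $i \in \mathcal{I}$, are i.i.d. draws from the conditional law of $(\Delta\tau, \Delta T)$ given $\nu<\infty$ and $\Delta\tau \geq 0$. This holds because the original triples $(X_i, \nu_i, T_i)$ are i.i.d. and the selection event depends only on sample $i$. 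By definition, $\hat S^\mathrm{ADD}$ is exactly the Kaplan--Meier estimator built from these $N''$ pairs: event times $\Delta\tau_i$, censoring times $C_i^\mathrm{ADD}$, observed times $\min\{\Delta\tau_i, C_i^\mathrm{ADD}\}$ with CDF $H^\mathrm{ADD}$. Moreover $\hat M_T^\mathrm{(KM)} = \int_0^b \hat S^\mathrm{ADD}$ and $M_T^\mathrm{(KM)} = \int_0^b S^\mathrm{ADD}$. This is precisely the setting of Thm.~\ref{thm:Finite-sample bias bounds for KM-ARL}, with $(\tau, C^\mathrm{ARL}, a, N)$ replaced by $(\Delta\tau, C^\mathrm{ADD}, b, N'')$.

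Next I would check that the hypotheses transfer. Assumptions (i)--(iii) are imposed directly on the conditional distributions, so they transfer verbatim. The proof of Thm.~\ref{thm:Finite-sample bias bounds for KM-ARL} uses only three ingredients: i.i.d. sampling, independence of the event and censoring times, and the CDFs $F$, $G$, $H$. It never uses that the event time is a detection point rather than a delay, nor that the censoring time has the special form $\min\{\nu, T\}$. Applying the theorem conditionally on $N''$ therefore gives the stated two-sided bound with $N''$ in place of $N$.

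For completeness I would recall the structure of the argument being invoked. Stute (1994) writes the KM integral $\int \varphi\, d\hat F$ with $\varphi(t) = \min\{t, b\}$, using $\int_0^b \hat S = \int \min\{t,b\}\, d\hat F(t)$ up to the mass beyond the last observation. The bias then equals
\[
-\int \varphi(t)\, G(t^-)\, H(t^-)^{N''-1}\, dF(t)
\]
plus a correction term. That correction arises from the mass of $\hat F$ left undistributed when the largest observation is censored. The correction is nonnegative and bounded by $b$ times the same kernel, since $\varphi \le b$. Restricting to $t \le b$ and using $0 \le \varphi(t) \le t$ yields the lower and upper bounds. Continuity of $H$ lets us replace $G(t^-)$ and $H(t^-)$ by $G(t)$ and $H(t)$.

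The main obstacle is the conditioning on $N''$, which is itself random. The clean statement holds conditionally on $N''=n$, and the bound is then read as conditional on the realized subset size. I would state this explicitly. I would also verify that independence of $\Delta\tau$ and $\Delta T$ survives the conditioning on $\Delta\tau \geq 0$: the selection event involves only $\Delta\tau$ and $\nu$, and assumption (ii) is posed on the conditional law anyway. Degenerate cases where $N''=0$ are excluded or trivially handled.
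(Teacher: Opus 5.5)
Your reduction is the paper's proof: the appendix restricts to the sequences with $\nu_i<\infty$ and $\Delta\tau_i\ge 0$, relabels $N$ as their number, and repeats the KM-ARL argument verbatim with $(\cdot)^{\mathrm{ARL}}\to(\cdot)^{\mathrm{ADD}}$ and $a\to b$. Your ``for completeness'' recollection, however, misdescribes that argument. The KM-ARL proof does not use $\varphi=\min\{t,b\}$ together with an exact bias identity. Instead it splits $\int_0^b \hat S^{\mathrm{ADD}} = \int_0^b t\,d\hat F^{\mathrm{ADD}} + b\,(1-\hat F^{\mathrm{ADD}}(b))$ and applies Stute's Cor.~1.1, which is a two-sided inequality rather than an equality, to $\varphi=t\,\mbone_{[0,b]}$ and $\varphi=\mbone_{[0,b]}$; this is what keeps every integral on $[0,b]$.
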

Here, we defined the finite-sample bias of the KM-ADD similarly to that of the KM-ARL:
$\mathcal{B}_\mathrm{FS} (\hat{M}_T^\mathrm{(KM)}) := \mbe[\hat{M}_T^\mathrm{(KM)}] - M_T^\mathrm{(KM)}$, where
$M_T^\mathrm{(KM)} := \int_0^b S^\mathrm{ADD}(t) dt$ is the KM-ADD with the true survival function of detection delays.
Assumption (i) and (iii) are technical conditions, as previously noted below Thm.~\ref{thm:Finite-sample bias bounds for KM-ARL}.
Assumption (ii) is the independent censoring assumption for the KM-ADD, which can be justified in online QCD models by the approximation $P(\Delta \tau \mid \Delta \tau \geq 0, \nu < \infty) \approx P(\tau \mid \nu=0)$ and $P(C:=T-\nu \mid \Delta \tau \geq 0, \nu < \infty) \approx P(C:= T \mid \nu = 0)$ because $P(\tau \mid \nu=0)$ and $P(C:= T \mid \nu = 0)$ are independent.
This approximation implies that the distributions of the detection delay $\tau - \nu$ and the censoring time $T - \nu$ are approximately equal to their respective distributions measured from $t = 0$ rather than from $\nu$.
In App.~\ref{app:More_About_Independent_Censoring_Assumption}, we further quantify violations of the independent censoring assumption and provide its potential remedies.

\paragraph{Truncation bias.}
$\mathcal{B}_\mathrm{TR}(\hat{\mu}_T^\mathrm{(KM)}) $ captures the error from sequence truncation, which dominates the total bias when sequence lengths are short.
It vanishes as 
$a$ increases because 
$\mathcal{B}_\mathrm{TR}(\hat{\mu}_T^\mathrm{(KM)}) 
= \int_0^a S^\mathrm{ARL}(t)dt - \mbe[\tau \mid \nu<\infty] 
= - \int_a^\infty S^\mathrm{ARL}(t) \rightarrow 0$,
where we used $\mbe[\tau \mid \nu<\infty] = \int_0^\infty S^\mathrm{ARL}(t)dt$.
The convergence rate depends on the underlying distributions.
In Thm.~\ref{thm:Truncation bias bounds for KM-ARL} below, we derive a bound for $\mathcal{B}_\mathrm{TR}(\hat{\mu}_T^\mathrm{(KM)})$, indicating that it is non-positive and minor than that of the conventional LB-ARL.
In Sec.~\ref{sec:Experiments}, we will empirically justify our result (Fig.~\ref{fig:ARLvsThresh_Gauss_main}).
Proof is given in App.~\ref{app:Proof of B_TR for KM-ARL}.
Note that given an evaluation dataset, we can easily specify $T_\mathrm{max}^*$ defined below (App.~\ref{app:Supplementary Discussions}).
\begin{theorem}[Truncation bias bound for KM-ARL] \label{thm:Truncation bias bounds for KM-ARL}
    Define the truncation bias of the LB-ARL as $\mathcal{B}_\mathrm{TR}(\hat{\mu}_T^\mathrm{(LB)}) := \mu_T^\mathrm{(LB)} - \mu_\infty$, 
    where $\mu_T^\mathrm{(LB)} := \mbe[ \tau \mid \nu=\infty, \tau \leq T ]$ is the true ARL under random lengths.
    For $a = T_\mathrm{max}^*$, where $T_\mathrm{max}^* := \inf\{T \mid \mathrm{CDF}(T) =1\}$ is the least upper bound for the support of the CDF of $T$, we have
    $\mathcal{B}_\mathrm{TR}(\hat{\mu}_T^\mathrm{(LB)}) \leq \mathcal{B}_\mathrm{TR}(\hat{\mu}_T^\mathrm{(KM)}) \leq 0$.
\end{theorem}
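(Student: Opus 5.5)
We need to show two inequalities: $\mathcal{B}_\mathrm{TR}(\hat{\mu}_T^\mathrm{(KM)}) \leq 0$, and $\mu_T^\mathrm{(LB)} \leq \mu_T^\mathrm{(KM)}$ with $a = T_\mathrm{max}^*$. The second implies the claimed ordering, because both truncation biases subtract the same $\mu_\infty$.

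\textbf{Upper bound.} This is immediate from the identity noted just before the theorem. Since $\mu_\infty = \mbe[\tau \mid \nu=\infty] = \int_0^\infty S^\mathrm{ARL}(t)\,dt$ by the tail-integral formula, we have
\begin{align}
\mathcal{B}_\mathrm{TR}(\hat{\mu}_T^\mathrm{(KM)}) = -\int_{a}^\infty S^\mathrm{ARL}(t)\,dt \leq 0,
\end{align}
because $S^\mathrm{ARL}\geq 0$.

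\textbf{Lower bound.} We need $\mbe[\tau \mid \nu=\infty, \tau \leq T] \leq \int_0^{T_\mathrm{max}^*} S^\mathrm{ARL}(t)\,dt$. Since $T$ is independent of the observations and of $\nu$, $\tau$ given $\nu=\infty$ (computed on the infinite sequence) is independent of $T$. Also, $T \leq T_\mathrm{max}^*$ almost surely, so the event $\{\tau \leq T\}$ is contained in $\{\tau \leq T_\mathrm{max}^*\}$.

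First I would write $\int_0^{a} S^\mathrm{ARL}(t)\,dt = \mbe[\min\{\tau, a\} \mid \nu=\infty]$, again by the tail-integral formula. Then I would argue that conditioning on $\{\tau\leq T\}$ yields a conditional mean no larger than the mean of $\min\{\tau,a\}$. On $\{\tau \leq T\}$ we have $\tau \leq T \leq a$, hence $\tau = \min\{\tau,a\}$.

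The key step is that the truncated variable $Y:=\min\{\tau,a\}$ and the event $\{\tau \leq T\} = \{Y \leq T\}$ (valid since $T \leq a$) are negatively associated. The function $y \mapsto P(T \geq y)$ is non-increasing, and $Y$ is independent of $T$. Writing $w(y) := P(T\geq y)$, we get
\begin{align}
\mu_T^\mathrm{(LB)} = \frac{\mbe[Y w(Y)]}{\mbe[w(Y)]} .
\end{align}
Chebyshev's association (covariance) inequality for the non-decreasing function $y\mapsto y$ and the non-increasing function $w$ gives $\mbe[Y w(Y)] \leq \mbe[Y]\,\mbe[w(Y)]$. Hence $\mu_T^\mathrm{(LB)} \leq \mbe[Y] = \mu_T^\mathrm{(KM)}$, and subtracting $\mu_\infty$ gives the claim.

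\textbf{Main obstacle.} The delicate part is the justification of independence between $\tau$ (under $\nu=\infty$) and $T$, together with handling $\tau$ on truncated sequences. I would note that for online detectors, $\tau$ computed on $X^{(0,T)}$ agrees with $\tau$ computed on the full sequence whenever $\tau\leq T$, which is the only event on which the LB-ARL uses $\tau$. I would also require $P(\tau\leq T \mid \nu=\infty)>0$ so that $\mu_T^\mathrm{(LB)}$ is well defined, and note that if $T_\mathrm{max}^* = \infty$ the argument still holds with $a=\infty$ and equality to $\mu_\infty$ in the upper bound.
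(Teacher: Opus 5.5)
\textbf{Gap in the lower bound.} Your upper bound is the paper's argument. Your strategy for the lower bound also matches the paper's key lemma (Lemma~\ref{lem:ARL_last_inequality}): write $\mu_T^{(\mathrm{LB})}$ as a mean reweighted by the non-increasing function $x\mapsto P(T\ge x)$, then apply Chebyshev's covariance inequality. However, the step where you pass to $Y=\min\{\tau,a\}$ is false as stated.

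Since $T\le a$ a.s., we have $\{Y\le T\}=\{\tau\le T\}\cup\{T=a\}$. So the two events differ on $\{T=a,\ \tau>a\}$. This set has positive probability whenever $T$ has an atom at $T_\mathrm{max}^*$ and $P(\tau>a\mid\nu=\infty)>0$. That is the typical situation here: it covers fixed-length sequences, and also the paper's own suggestion of taking the distribution of $T$ to be the empirical CDF of the lengths.

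Correspondingly, $\mbe[Yw(Y)]/\mbe[w(Y)]=\mbe[Y\mid\nu=\infty,\,Y\le T]$, which is not $\mu_T^{(\mathrm{LB})}$. On $\{\tau>a\}$ your formula assigns weight $w(a)=P(T=a)$, but the true conditional probability of $\{\tau\le T\}$ given $\tau$ is $0$ there. For $T\equiv a$ you get $w(Y)\equiv1$, so your display would assert $\mu_T^{(\mathrm{LB})}=\mu_T^{(\mathrm{KM})}$. In fact
$\mu_T^{(\mathrm{KM})}-\mu_T^{(\mathrm{LB})}=P(\tau>a\mid\nu=\infty)\,\bigl(a-\mbe[\tau\mid\nu=\infty,\tau\le a]\bigr)$,
which is generally strictly positive. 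The conclusion survives, but your identity does not.

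\textbf{The repair.} Keep $\tau$ itself as the variable. Let $h(x):=P(T\ge x)$, which is non-increasing and vanishes for $x>a$. Let $f(x):=\min\{x,a\}$, which is non-decreasing. By independence,
$\mbe[\tau\,\mbone(\tau\le T)\mid\nu=\infty]=\mbe[\tau h(\tau)\mid\nu=\infty]=\mbe[f(\tau)h(\tau)\mid\nu=\infty]$.
The second equality holds because $h(\tau)=0$ whenever $\tau>a$. Also $P(\tau\le T\mid\nu=\infty)=\mbe[h(\tau)\mid\nu=\infty]$. Chebyshev's inequality for $f,h$ under the law of $\tau$ then gives $\mu_T^{(\mathrm{LB})}\le\mbe[f(\tau)\mid\nu=\infty]=\mu_T^{(\mathrm{KM})}$ in a single step.

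\textbf{Comparison with the paper.} The paper instead splits
$\mu_T^{(\mathrm{KM})}=\mbe[\tau\mid\nu=\infty,\tau\le a]+P(\tau>a\mid\nu=\infty)\bigl(a-\mbe[\tau\mid\nu=\infty,\tau\le a]\bigr)$.
It applies the covariance inequality only to $\tau$ restricted to $[0,a]$, giving $\mu_T^{(\mathrm{LB})}\le\mbe[\tau\mid\nu=\infty,\tau\le a]$, and notes that the second term is non-negative. That second term is exactly the mass your $Y$-based identity mishandles. Once repaired, your argument is a slightly more compact route than the paper's to the same inequality.
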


A similar bound holds for the KM-ADD. Proof is given in App.~\ref{app:Proof of B_TR for KM-ADD}.
\begin{theorem}[Truncation bias bound for KM-ADD] \label{thm:Truncation bias bounds for KM-ADD}
    Define the truncation bias of the LB-ADD as $\mathcal{B}_\mathrm{TR}(\hat{M}_T^\mathrm{(LB)}) := M_T^\mathrm{(LB)} - M_\infty$, 
    where $M_T^\mathrm{(LB)} := \mbe[ \Delta \tau \mid \nu < \infty, 0 \leq \Delta \tau \leq \Delta T ]$ is the true ADD under random sequence lengths.
    For $b = \Delta T_\mathrm{max}^*$, where $\Delta T_\mathrm{max}^* := \inf\{ \Delta T := T - \nu \mid \mathrm{CDF}(\Delta T) =1\}$, we have
    $\mathcal{B}_\mathrm{TR}(\hat{M}_T^\mathrm{(LB)}) \leq \mathcal{B}_\mathrm{TR}(\hat{M}_T^\mathrm{(KM)}) \leq 0$, under the independent censoring assumption in Thm.~\ref{thm:Finite-sample bias bounds for KM-ADD}.
\end{theorem}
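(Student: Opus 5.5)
The plan is to prove the two inequalities separately. Throughout I work conditionally on $\{\nu<\infty, \Delta\tau\geq 0\}$, and I write $F:=F^\mathrm{ADD}$ for the CDF of $\Delta\tau$, $S:=S^\mathrm{ADD}=1-F$, and $Q$ for the law of $C:=C^\mathrm{ADD}=\Delta T$. By the independent censoring assumption of Thm.~\ref{thm:Finite-sample bias bounds for KM-ADD}, $\Delta\tau$ and $C$ are independent under this conditioning. The proof mirrors that of Thm.~\ref{thm:Truncation bias bounds for KM-ARL}, with $\tau$ replaced by $\Delta\tau$ and $\min\{\nu,T\}$ replaced by $\Delta T$.

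\textbf{Upper inequality.} Since $S\geq 0$ and $M_\infty=\int_0^\infty S(t)\,dt$ by the area-under-the-survival-curve identity of Sec.~\ref{sec:KM-ARL and KM-ADD}, we get
\begin{align}
\mathcal{B}_\mathrm{TR}(\hat{M}_T^\mathrm{(KM)}) = \int_0^b S(t)\,dt - \int_0^\infty S(t)\,dt = -\int_b^\infty S(t)\,dt \leq 0 .
\end{align}
This step is immediate.

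\textbf{Lower inequality.} It reduces to showing $M_T^\mathrm{(LB)}\leq \int_0^b S(t)\,dt$. I would first fix a censoring value $c$ with $F(c)>0$. Using the tail-integral formula for the nonnegative variable $\Delta\tau$ conditioned on $\{\Delta\tau\leq c\}$, we have
\begin{align}
\mbe[\Delta\tau\mid \Delta\tau\leq c] = \int_0^c P(\Delta\tau>t\mid\Delta\tau\leq c)\,dt = \int_0^c \frac{F(c)-F(t)}{F(c)}\,dt .
\end{align}
Next I would show the integrand satisfies the pointwise bound $\frac{F(c)-F(t)}{F(c)}\leq 1-F(t)$ for $t\leq c$. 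This is equivalent to $F(c)F(t)\leq F(t)$, which holds because $F(c)\leq 1$. Hence $\mbe[\Delta\tau\mid\Delta\tau\leq c]\leq \int_0^c S(t)\,dt$. Since $b=\Delta T^*_\mathrm{max}$ is the essential supremum of $C$, we have $c\leq b$ for $Q$-a.e.\ $c$, so $\int_0^c S(t)\,dt\leq\int_0^b S(t)\,dt$.

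\textbf{Mixing over the censoring time.} Independence gives the mixture representation
\begin{align}
M_T^\mathrm{(LB)} = \frac{\mbe[\Delta\tau\,\mbone(\Delta\tau\leq C)]}{P(\Delta\tau\leq C)} = \int \mbe[\Delta\tau\mid\Delta\tau\leq c]\,\frac{F(c)\,dQ(c)}{\int F(c')\,dQ(c')} .
\end{align}
The weights $F(c)\,dQ(c)/\int F(c')\,dQ(c')$ form a probability measure on the $c$'s with $F(c)>0$. Each term in the mixture is at most $\int_0^b S(t)\,dt$, so the average is as well. Subtracting $M_\infty$ from both sides yields $\mathcal{B}_\mathrm{TR}(\hat{M}_T^\mathrm{(LB)})\leq\mathcal{B}_\mathrm{TR}(\hat{M}_T^\mathrm{(KM)})$.

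\textbf{Main obstacle.} The only delicate step is this mixture representation. It is exactly where independent censoring is used: without it, conditioning on $\{\Delta\tau\leq C\}$ would distort the law of $\Delta\tau$ given $C=c$. A minor side point is the conditioning on $\Delta\tau\geq 0$, which may make $C$ negative for some sequences. Those sequences contribute nothing, since $F(c)=0$ for $c<0$, so they carry zero weight in the mixture.
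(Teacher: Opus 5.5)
Your proof is correct. The upper inequality is the same as the paper's, but you reach the lower inequality by a genuinely different route.

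\textbf{The paper's route.} It first rewrites
$M_T^{(\mathrm{KM})}=\mathbb{E}[\min\{\Delta\tau,b\}]$ as $\mathbb{E}[\Delta\tau\mid 0\le\Delta\tau\le b]+P(\Delta\tau>b)\,(b-\mathbb{E}[\Delta\tau\mid 0\le\Delta\tau\le b])$. This splits $M_T^{(\mathrm{KM})}-M_T^{(\mathrm{LB})}$ into a nonnegative capping term plus the difference $\mathbb{E}[\Delta\tau\mid\Delta\tau\le b]-\mathbb{E}[\Delta\tau\mid\Delta\tau\le\Delta T]$. The paper handles that difference, reusing the lemma from the ARL case, by writing $\mathbb{E}[X\mid X\le Y]=\int x\,g(x)\,dF_X(x)\big/\int g(x)\,dF_X(x)$ with the non-increasing weight $g(x)=P(Y\ge x)$. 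It then applies the Chebyshev-type inequality $\mathrm{Cov}(X,g(X))\le 0$. So the paper mixes over the event time, weighting by the censoring survival function.

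\textbf{Your route.} You instead mix over the censoring value $c$ with weights $F(c)\,dQ(c)$. You then use the pointwise bound $\mathbb{E}[\Delta\tau\mid\Delta\tau\le c]\le\mathbb{E}[\min\{\Delta\tau,c\}]=\int_0^c S\le\int_0^b S$. This is the paper's ``discard versus cap'' comparison, applied at every horizon $c$ rather than only at $b$.

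\textbf{What each buys.} Your version is shorter and needs no covariance or association lemma. It also makes explicit where independence enters (factoring the law of $(\Delta\tau,C)$) and why atoms of $C$ at negative values are harmless, since $F(c)=0$ there. The paper's version proves the sharper intermediate fact $M_T^{(\mathrm{LB})}\le\mathbb{E}[\Delta\tau\mid 0\le\Delta\tau\le b]$. That is, random-length truncation is at least as harmful as fixed truncation at the maximal horizon, which separates the two sources of LB bias. You could recover this refinement as well: combine your mixture representation with the elementary monotonicity of $c\mapsto\mathbb{E}[\Delta\tau\mid\Delta\tau\le c]$.
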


\paragraph{Total Estimation Bias.}
Finally, we examine the total estimation bias of the KM-ARL: $\mathcal{B}(\hat{\mu}_T^\mathrm{(KM)}) = \mathcal{B}_\mathrm{FS}(\hat{\mu}_T^\mathrm{(KM)}) + \mathcal{B}_\mathrm{TR}(\hat{\mu}_T^\mathrm{(KM)})$ (a similar discussion below holds for the KM-ADD).
From Thm.~\ref{thm:Finite-sample bias bounds for KM-ARL} and 
$\mathcal{B}_\mathrm{TR}(\hat{\mu}_T^\mathrm{(KM)}) = - \int_a^\infty S^\mathrm{ARL}(t)$,
we have for any $a \in \mbr_{\geq 0}$
\begin{align}
    &-\int_0^{a} t \,G^\mathrm{ARL}(t)\,{H^\mathrm{ARL}}(t)^{N-1}\,dF^\mathrm{ARL}(t) 
        - \int_a^\infty S^\mathrm{ARL}(t) \nn
    &\leq
    \mathcal{B}(\hat{\mu}_T^\mathrm{(KM)})  
    \leq \nn
    & \int_0^{a} a\,G^\mathrm{ARL}(t)\,{H^\mathrm{ARL}}(t)^{N-1}\,dF^\mathrm{ARL}(t)
        - \int_a^\infty S^\mathrm{ARL}(t) .
\end{align}
Define 
$t_F := \inf\{t \in \mbr_{\geq 0} \mid F^\mathrm{ARL}(t) = 1 \}$
and 
$t_H := \inf\{t \in \mbr_{\geq 0} \mid H^\mathrm{ARL}(t) = 1 \}$, and set $a$ to $t_F$.
Then, since $S^\mathrm{ARL} = 1 - F^\mathrm{ARL}$, we have $\int_a^\infty S^\mathrm{ARL}(t) = 0$; i.e., the truncation bias vanishes.
Therefore, 
\begin{align}
    &-\int_0^{t_F} t \,G^\mathrm{ARL}(t)\,{H^\mathrm{ARL}}(t)^{N-1}\,dF^\mathrm{ARL}(t) 
    \leq \nn
    & \mathcal{B}(\hat{\mu}_T^\mathrm{(KM)}) 
    \leq
    \int_0^{t_F} t_F\,G^\mathrm{ARL}(t)\,{H^\mathrm{ARL}}(t)^{N-1}\,dF^\mathrm{ARL}(t).
\end{align}
If $t_F < t_H$, the bound decays exponentially as $N\rightarrow \infty$ because $H^\mathrm{ARL}(t) < 1$ for $t \in [0, t_F]$; i.e., \textit{the KM-ARL is asymptotically unbiased if $t_F < t_H$}.
Otherwise, there are non-vanishing terms 
$-\int_{t_H}^{t_F} t \,G^\mathrm{ARL}(t)\,dF^\mathrm{ARL}(t)$ 
and 
$\int_{t_H}^{t_F} t_F\,G^\mathrm{ARL}(t)\,\,dF^\mathrm{ARL}(t)$ 
in the lower and upper bounds, respectively.
This is reasonable because $t_F \geq t_H$ implies that detection points may occur beyond censoring times with non-zero probability; i.e., not all detection points are observable.
Consequently, estimating the ARL without bias or additional assumptions is impossible, necessitating extrapolation.

\section{Experiment} \label{sec:Experiments}
\begin{figure*}[htp]
    \vskip 0.2in
    \centering
    \includegraphics[width=1.0\textwidth]{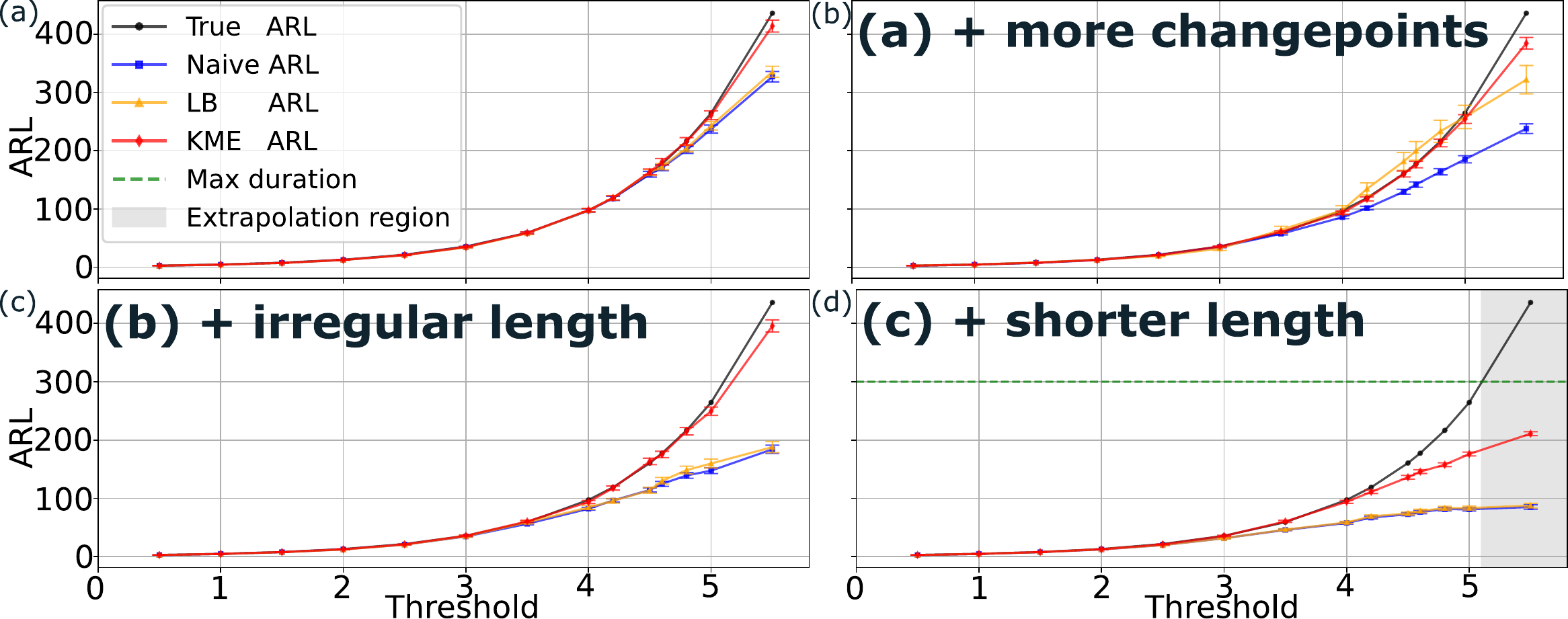} 
    \caption{
        \textbf{Threshold of detector vs. ARL.}
        The KM-ARL provides more accurate estimates of the true ARL even when sequences contain changepoints and have limited and irregular lengths. 
        The Gaussian process dataset contains $1000$ sequences.
        The GSR with ground-truth statistics is evaluated under various thresholds.
        Changepoints are sampled uniformly.
        Error bars represent the standard error of the mean.
        \textbf{\textbf{(a)} Sequence length is $1000$, with $10\%$ of sequences containing a changepoint.
                \textbf{(b)} Sequence length is $1000$, with $90\%$ of sequences containing a changepoint.
                \textbf{(c)} Sequence lengths vary irregularly in the range $[100, 1000]$, with $90\%$ of sequences containing a changepoint.
                \textbf{(d)} Sequence lengths vary irregularly in the range $[30, 300]$, with $90\%$ of sequences containing a changepoint. 
        }        
        In (d), $\mathrm{ARL} > 300$ (gray area) is an extrapolation region (excluding the true ARL).
    }
    \label{fig:ARLvsThresh_Gauss_main}
\end{figure*}

To demonstrate the practical relevance of our KM-ARL and KM-ADD for sequences with limited and irregular lengths,
we use both simulated and real-world datasets. 
Our results show that these metrics reduce estimation bias compared to baseline estimators, enhance robustness to such sequences, and thereby improve interpretability and facilitate empirical, intuitive model selection.

\paragraph{Datasets.}
We use two simulation datasets, the Gaussian and Poisson processes, and one real-world dataset, the WISDM Actitracker \citep{kwapisz2011activity_WISDM_Actitracker_dataset}.
The pre-change and post-change Gaussian processes have the mean of $0$ and $0.1$, respectively, with preserving the variance equal to $0.1$.
We use two types of changepoint distributions: geometric and uniform.
Several sequences have no changepoint, and the number of with-change sequences depends on the changepoint distributions.
We also simulate irregular length by randomly truncating the sequences.
The setup and experimental results for Poisson processes are provided in App.~\ref{app:Poisson Process Dataset} due to page limitations. 
The results are similar to those obtained for the Gaussian processes.
The WISDM Actitracker is a large, real-world dataset for smartphone-based human activity recognition (walking, jogging, stairs, sitting, standing, and lying down) collected by the WISDM Lab at Fordham University, offering both user-labeled and machine-labeled data.
The labels specify activities and their temporal intervals.
We provide the results for the machine-labeled subset in the main text, and the results for the user-labeled subset are provided in Fig.~\ref{fig:ARLADDcurve_WISDM_labeled_appendix} in App.~\ref{app: WISDM Actitracker Dataset}.
The machine-labeled subset contains 51,326 sequences, and the sequence lengths exhibit substantial irregularity from 1 to 54,401 after our preprocesses. 
See App.~\ref{app:Statistics of WISDM Actitracker} for more statistical information of the WISDM Actitracker dataset.
We remove the sequences with length $1$, as they are not informative when computing performance metrics of QCD models.
The preprocesses are detailed in our code and App.~\ref{app:Preprocesses for WISDM Actitracker}.
Additionally, relevance of our setting, namely, the requirement for datasets containing multiple sequences with changepoint labels is further clarified in App.~\ref{app:RelevanceOfRequiringDatasetsWith} for interested readers.

\subsection{Estimation on Simulation Dataset} \label{sec:Results: Simulation Dataset} 
\begin{figure}[ht]
  \vskip 0.2in
  \begin{center}
    \centerline{\includegraphics[width=\columnwidth]{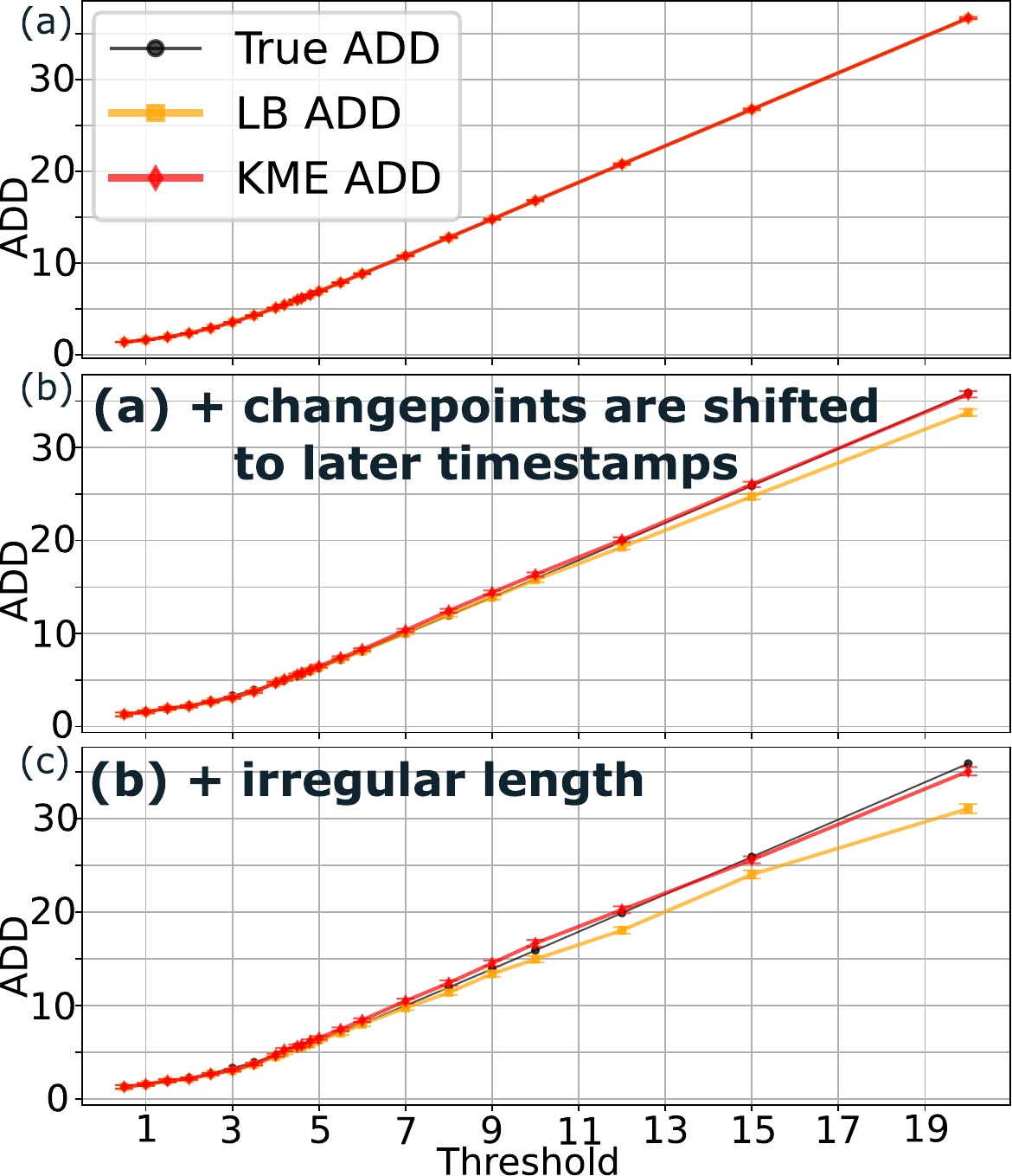}}
    \caption{
        \textbf{Threshold of detector vs. ADD.}
        The KM-ADD provides more accurate estimates of the true ADD even when sequence lengths are limited and irregular. 
        The Gaussian process dataset has $10000$ sequences. 
        Changepoints are sampled from the geometric distribution with the success probability $p \in [0,1]$.
        A smaller $p$ leads to more sparse and delayed changepoints. 
        Other settings are identical to that of Fig.~\ref{fig:ARLvsThresh_Gauss_main}.
        \textbf{        
                \textbf{(a)} Sequence length is $100$, with $p=0.25$.
                \textbf{(b)} Sequence length is $100$, with $p=0.001$.} 
            The changepoints are shifted to later timestamps, decreasing the chance of detection due to the length limit.
        \textbf{(c) Sequence lengths vary irregularly in the range $[10, 100]$, with $p=0.001$.}
    }
    \label{fig:ADDvsThresh_Gauss_main}
  \end{center}
\end{figure}

\paragraph{Detection models.}
The QCD model used for the simulation datasets is the generalized Shiryaev-Roberts (GSR) procedure with ground-truth statistics. 
The GSR raises an alarm when the following statistic hits a pre-defined threshold: $R(t) =\omega \prod_{s=0}^t \mathcal{L}(s)+\sum_{k=0}^t \prod_{s=k}^t \mathcal{L}(s)$, 
where the likelihood ratio is denoted by $\mathcal{L}(t)=f\left(X^{(t)} \mid X^{(0, t-1)}\right) / g\left(X^{(t)} \mid X^{(0,t-1)}\right)$, and $\omega$ controls the strength of warm start and is set to $0$ in our experiments.
Additionally, we provide the results for the cumulative sum (CUSUM) procedure in App.~\ref{app:CUSUM}.

\paragraph{True ARLs \& ADDs.}
For the experiments on the Gaussian and Poisson process datasets, we simulate the ground-truth ARLs and ADDs by generating sequences of effectively infinite length.
Their error bars are omitted because all errors are sufficiently small, with relative errors $\lesssim 10^{-3}$.

\paragraph{Naive ARL.}
We introduce another baseline metric, referred to as the Naive ARL.
While the LB-ARL use only sequences with $\nu_i=\infty$ and $\tau_i \leq T_i$ for computing the ARL, the Naive ARL utilizes sequences with $\tau_i < \nu_i$ and $\tau_i \leq T_i$: $\hat{\mu}_T^\mathrm{(NV)} := \langle \tau_i \rangle_{i:\tau_i < \nu_i, \tau_i \leq T_i}$.
The number of sequences used for the Naive ARL is larger than that of the LB-ARL because the condition $\tau_i < \nu_i \wedge \tau_i \leq T_i$ includes $\nu_i = \infty \wedge \tau_i \leq T_i$; 
however, the Naive ARL has a non-vanishing bias because $\mbe [\hat{\mu}_T^\mathrm{(NV)}] = \mbe[\tau \mid \tau < \nu , \tau < T]$ under minor assumptions, which is not equal to $\mbe[\tau \mid \nu = \infty, \tau < T]$ or $\mbe[\tau \mid \nu = \infty]$.

\paragraph{Result on ARL.}
Fig.~\ref{fig:ARLvsThresh_Gauss_main} presents the true ARL, Naive ARL, LB-ARL, and KM-ARL across various GSR thresholds, demonstrating that the KM-ARL provides more accurate estimates of the true ARL even when sequences contain changepoints and have limited and irregular lengths.
Fig.~\ref{fig:ARLvsThresh_Gauss_main}(a) simulates an ideal light-censoring scenario, where most detection points occur before changepoints or truncation. The true ARL is less than $T_\mathrm{max}$, the sequence length $T$ is constant (=$1000$), and only 10\% of the sequences contain a changepoint.
Fig.~\ref{fig:ARLvsThresh_Gauss_main}(b) simulates a heavier-censoring scenario, where 90\% of the sequences contain a changepoint.
Fig.~\ref{fig:ARLvsThresh_Gauss_main}(c) simulates an even more heavily censored scenario, where, in addition to Fig.~\ref{fig:ARLvsThresh_Gauss_main}(b), sequence lengths are irregular and randomly sampled from $[100, 1000]$.
Our KM-ARL remains robust under this condition, utilizing with-change and truncated sequences, unlike the Naive ARL and the LB-ARL.
Fig.~\ref{fig:ARLvsThresh_Gauss_main}(d) simulates a challenging and realistic scenario, where, in addition to Fig.~\ref{fig:ARLvsThresh_Gauss_main}(c), $T_\mathrm{max}$ is reduced to $300$, and sequence lengths are sampled from $[30, 300]$.
The gray area indicates $\mathrm{ARL} > T_\mathrm{max} =300$, where no data are observed, i.e., an extrapolation region (excluding the true ARL).
Bias becomes non-negligible in $\mathrm{ARL} \gtrsim T_\mathrm{max}$ due to truncation.
In this region, no metric can reliably estimate the ARL without bias or additional assumptions, as noted Sec.~\ref{sec:Bias Analysis}.
To further mitigate truncation bias, one may combine our estimators with parametric extrapolation methods for the survival function, such as those proposed in \citep{sahki2020performance}.

\begin{figure}[ht]
  \vskip 0.2in
  \begin{center}
    \centerline{\includegraphics[width=\columnwidth]{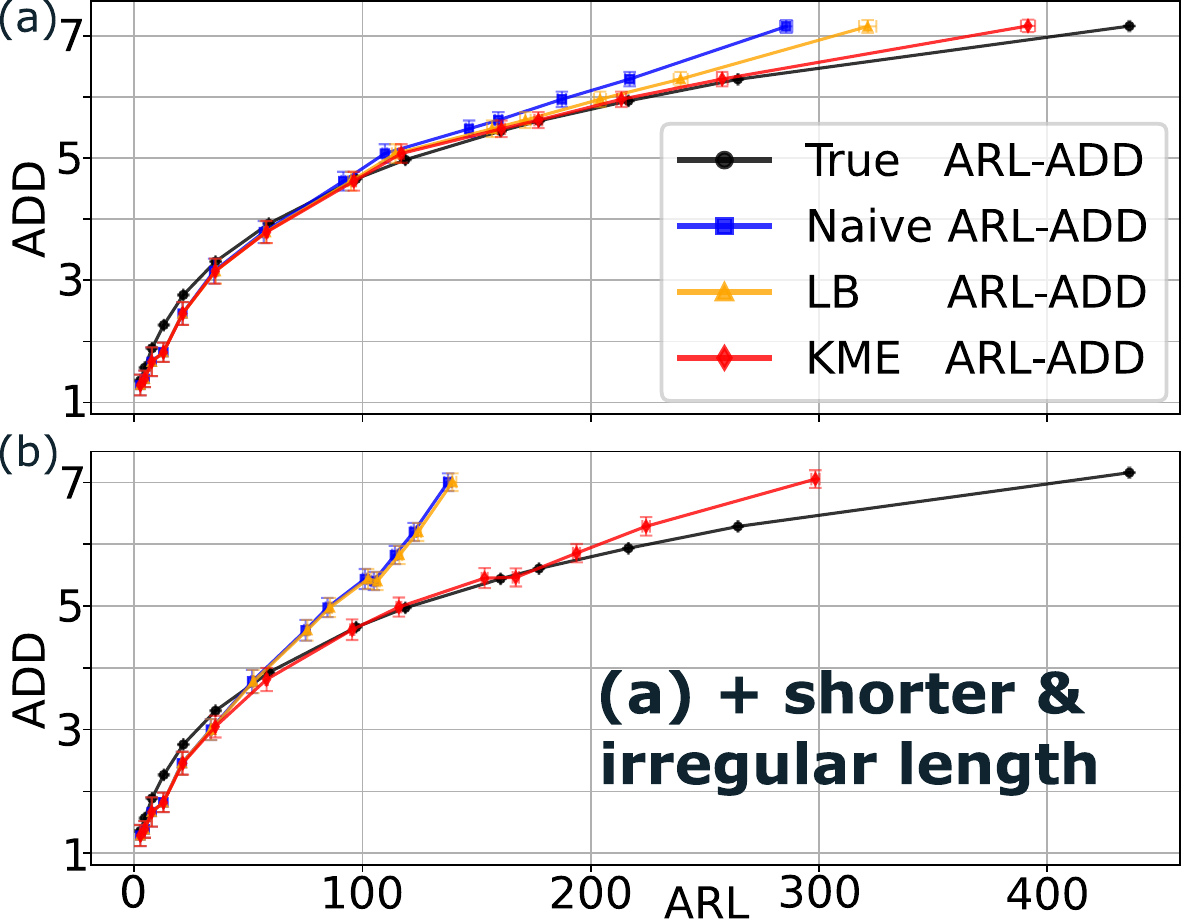}}
    \caption{
        \textbf{ARL-ADD tradeoff curves.} 
        The KM-ARL and KM-ADD provide more accurate estimates of the true ARL-ADD curve even when sequences contain changepoints and have limited and irregular lengths. 
        The setup is identical to that of Fig.~\ref{fig:ARLvsThresh_Gauss_main}, except that the dataset size is $10000$.
        $50\%$ of sequences contain a changepoint.
        The LB-ADD is used for the Naive ARL's ADD.
        \textbf{\textbf{(a)} Sequence length is $1000$.
                \textbf{(b)} Sequence lengths vary irregularly in the range $[50, 500]$.}
        The complete ablation study is given in App.~\ref{app:Complete Ablation}.
    }
    \label{fig:ARLADDcurve_Gauss_main}
  \end{center}
\end{figure}

\paragraph{Result on ADD.}
Fig.~\ref{fig:ADDvsThresh_Gauss_main} presents the true ADD, LB-ADD, and KM-ADD across various GSR thresholds, demonstrating that the KM-ADD provides more accurate estimates of the true ADD even when sequences have limited and irregular lengths.
Fig.~\ref{fig:ADDvsThresh_Gauss_main}(a) simulates an ideal light-censoring scenario, where most detection points occur before truncation. 
$T$ is constant and set to $100$, and $\Delta T_\mathrm{max}$ is also set to $100$, which is greater than the true ADD.
Fig.~\ref{fig:ADDvsThresh_Gauss_main}(b) simulates a heavier-censoring scenario, where changepoints are shifted to later timestamps, reducing the chance of detection before truncation.
Fig.~\ref{fig:ADDvsThresh_Gauss_main}(c) simulates a challenging and realistic scenario, where, in addition to Fig.~\ref{fig:ADDvsThresh_Gauss_main}(b), sequence length are irregular and sampled from $[10, 100]$, further reducing the chance of detection before truncation.
The KM-ADD still remains robust, utilizing truncated sequences, unlike the conventional LB-ADD.

\paragraph{ARL-ADD tradeoff curve.}
Fig.~\ref{fig:ARLADDcurve_Gauss_main} shows ARL-ADD tradeoff curves, which are theoretically related to the optimality of QCD models and are used for model evaluation in practice. 
Fig.~\ref{fig:ARLADDcurve_Gauss_main} demonstrates that the KM-ARL and KM-ADD can more accurately estimate the ARL-ADD curve than the others.
Fig.~\ref{fig:ARLADDcurve_Gauss_main}(a) simulates a light-censored scenario, where most detection points occur before censoring. The sequence length $T$ is constant and set to $T = T_\mathrm{max} = 500$.
Fig.~\ref{fig:ARLADDcurve_Gauss_main}(b) simulates a challenging and realistic scenario, where censoring is heavier. Sequence lengths are irregular and sampled from $[50, 500]$.
The KM-ARL and KM-ADD remains more robust than the baselines.
Again, bias is non-negligible in the region $\mathrm{ARL} \approx T_\mathrm{max} = 500$, where extrapolation region is nearby.

\subsection{Evaluation of Models on Real-world Dataset} \label{sec:Evaluating QCD Models on Real-world Dataset} 
We further evaluate six QCD models on a real-world, challenging dataset: the WISDM Actitracker \citep{kwapisz2011activity_WISDM_Actitracker_dataset}, which contains 51,326 sequences with substantial irregularity of lengths from 1 to 54,401. 
The computational costs for KM-ARL and KM-ADD are negligible in our experiments compared with the time required to run the QCD algorithms.

\paragraph{QCD models.}
We use the following online QCD models, including window-based, frame-based, parametric, and non-parametric models: 
Window L1 \citep{bai1995least}, Window Normal \citep{lavielle1999detection, lavielle2006detection}, Window AR \citep{bai2000vector}, non-parametric focused changepoint detection (NP-FOCuS) \citep{NP_FOCuS}, CUSUM \citep{page1954continuous_CUSUM_org}, and exponentially weighted moving average (EWMA) \citep{roberts2000control_EWMA_org}.
The window size and burn-in interval are fixed at $30$, ensuring much smaller than the average length.
Most other hyperparameters are left at default values.
See App.~\ref{app:Online QCD Models} for details of the models and their hyperparameters.


\paragraph{Result.}
The ARL-ADD tradeoff curves are presented in Fig.~\ref{fig:ARLADDcurve_WISDM_cpmodel_main}, demonstrating that our KM-ARL and KM-ADD are more robust even when censoring is significantly heavy.
Fig.~\ref{fig:ARLADDcurve_WISDM_cpmodel_main}(a) shows that LB-ARL and LB-ADD become unstable and exhibit high variance, particularly at large QCD thresholds. This instability arises because the detector often fails to raise an alarm within the sequence length, and the limited number of sequences further amplifies the empirical variance.
In contrast, Fig.~\ref{fig:ARLADDcurve_WISDM_cpmodel_main}(b) shows that our KME-based estimators do not suffer from this issue. 
These metrics reduce variance compared to baseline estimators because they are calculated from a constant number of sequences, regardless of the threshold, which enhances their robustness against censoring.
Therefore, the KM-ARL and KM-ADD improve interpretability and facilitating empirical, intuitive model selection.
See Apps.~\ref{app:Numerical Computation of Variance} \& ~\ref{app:Supplementary Discussions} for more about empirical computation and theoretical estimation of variance, respectively.


\section{Discussion, Limitations, and Future Work} \label{sec:Discussion and Limitations}

\paragraph{To further reduce estimation bias.}
For further bias reduction, one may use the bootstrap method \citep{efron1994introduction} to mitigate the finite-sample bias, although this effect is typically overshadowed by the truncation bias.
To address the truncation bias beyond extrapolation, a parametric estimator can be combined with our estimators, as in \citep{sahki2020performance}; however, note that the estimation accuracy deteriorates when the parametric assumption is invalid.

\paragraph{Changepoint isolation and multiple changepoint detection.}
Our work assumes a single changepoint type; however, changepoint isolation (classification) \citep{tartakovsky2014TartarBook} is often required in practice.
We conjecture that our survival analysis–based approach can be extended to this challenging setting.
A promising direction is to use survival models under \textit{competitive risks} \citep{morita2021introduction}, treating different changepoint types as distinct causes of death.
Similarly, while our method does not currently support multiple changepoint detection \citep{niu2016multiple}, it may leverage \textit{multi-state models under competing risks} \citep{therneau2020multi, beyersmann2011competing, mills2011competing}, which are well established in survival analysis.

\paragraph{Probability of false alarms.}
The probability of false alarms (PFA) is another standard metric in QCD theory, commonly used to establish Bayesian optimality \citep{tartakovsky2019sequential_TartarBookQCD}.
However, it is also affected by right-censoring in real-world scenarios.
We hypothesized that multi-state models under competitive risks, such as the Aalen–Johansen estimator (AJE) \citep{aalen1978empirical_AJE_org}, could alleviate this issue.
Our preliminary experiments, however, failed to accurately estimate PFAs, possibly because the AJE assumes no event ties, an assumption often violated when continuous-time sequences are discretized into frames.
This finding suggests that more careful approaches, such as those from discrete-time survival analysis \citep{tutz2016modeling}, are required.

\paragraph{Dependent censoring.}
In Sec.~\ref{sec:Bias Analysis}, we assume independent censoring for KM-ADD, supported by the distributional approximation therein.
This assumption can potentially be relaxed by leveraging extensive research on dependent censoring in survival analysis \citep{hsu2010robust, lin2023informative, crommen2025recent}.
Doing so would not only eliminate the assumption but also extend our bias analysis to offline QCD, where censoring depends on detection. See App.~\ref{app:More_About_Independent_Censoring_Assumption} for more details.

\paragraph{Heavy censoring.}
We do not recommend using KM-ARL or KM-ADD when datasets are severely imbalanced between pre-change and post-change sequences because this leads to significantly heavy censoring and inflated finite‑sample and truncation bias. 
While the extent of acceptable censoring depends on the dataset and underlying distributions, \citet{malmquist2025what} reports that, for 30, 50, 100, or 150 subjects (sequences), censoring rates up to 90\% are tolerable if censoring is uniform. 
However, for dependent censoring occurring just before event time (detection), estimation bias increases sharply (see App.~\ref{app:Effect_of_Heavy_Dependent_Censoring}). 
Nonetheless, several studies have proposed modifications to the KME to address heavy censoring \citep{shafiq2007modified, zare2013modified}, which can be integrated with our KM-ARL and KM-ADD.



\section*{Acknowledgements}
We thank Akira Kitaoka for his insightful and detailed comments on our work.
We also thank all reviewers of our manuscript for their feedback.

\section*{Impact Statement}
This paper presents work whose goal is to advance the field of Machine
Learning. There are many potential societal consequences of our work, none
which we feel must be specifically highlighted here.

\bibliography{_references}
\bibliographystyle{icml2026}


\clearpage
\appendix
\onecolumn

\section*{Appendix}

\section{QCD-Survival Analysis Correspondence} \label{app:Correspondence}
\paragraph{Proposed Correspondence.}
We summarize our key idea, an analogy between QCD and survival analysis in Tab.~\ref{tab:correspondence}.
\begin{table}[htbp]
    \centering
    \begin{tabular}{ccc}
        ARL                                & ADD                              & Survival analysis            \\
        \hline 
        detection time $\tau_i$            & detection delay $\tau_i - \nu_i$ & event time                   \\
        $\min \{ \nu_i, T_i \}$            & $T_i - \nu_i$                    & censoring time $C_i$         \\
        KM-ARL $\hat{\mu}^\mathrm{(KM)}_T$ & KM-ADD $\hat{M}^\mathrm{(KM)}_T$ & restricted mean survival time\\
    \end{tabular}
    \caption{QCD-survival analysis analogy.}
    \label{tab:correspondence}
\end{table}

\paragraph{Illustration.}
For clarity, we illustrate the original KME, KM-ARL, and the computation of $d_j$ and $n_j$ in Fig.~\ref{fig:S_KME_KM-ARL}.

\begin{figure}[htbp]
    \centering
    \includegraphics[width=0.9 \columnwidth]{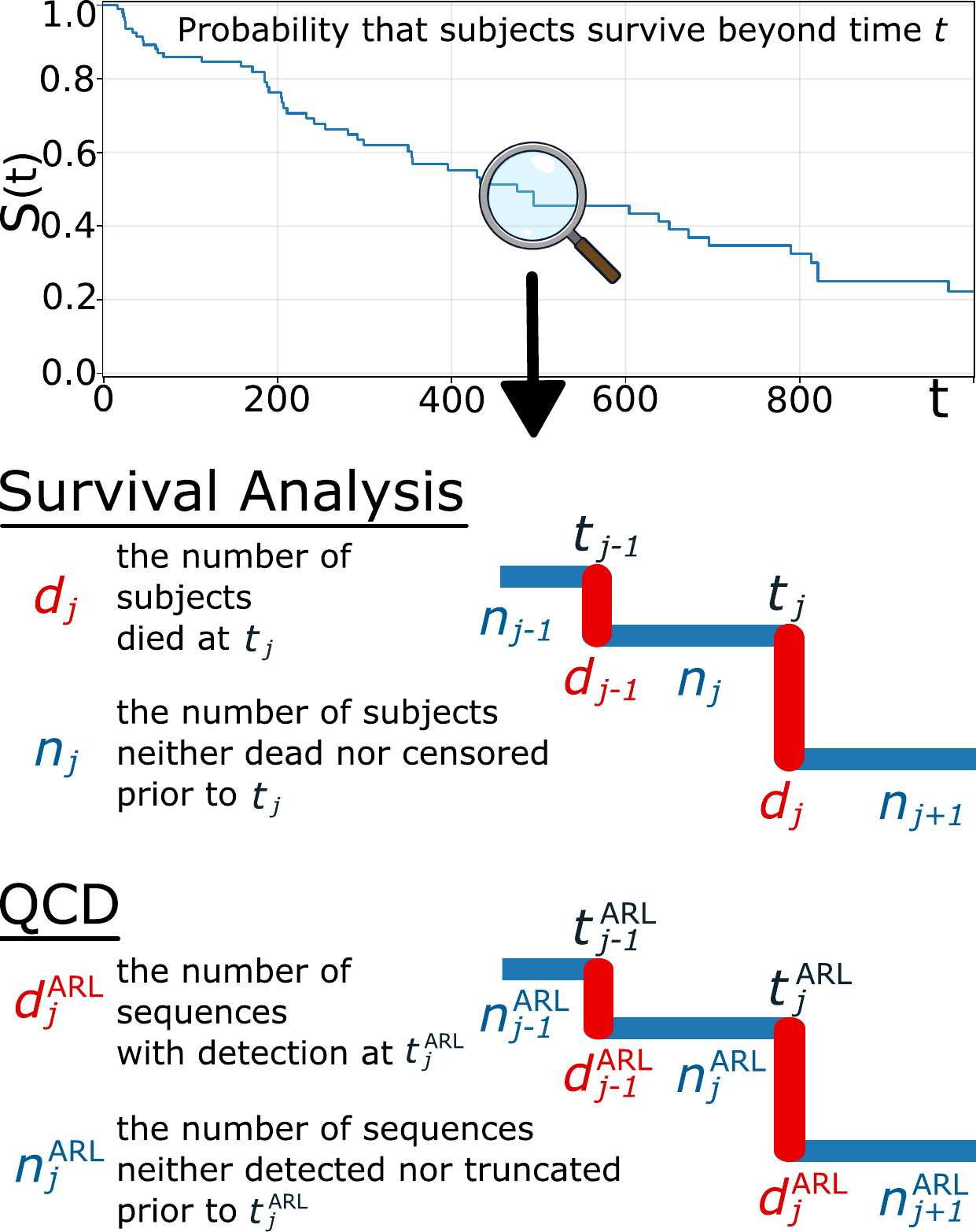} 
    \caption{
        \textbf{Original KME, KM-ARL, and computation of $d_j$ and $n_j$.} we estimate the \textit{survival function of detection points} $S^\mathrm{ARL}(t):= P(\tau > t \mid \nu=\infty)$ using the Kaplan-Meier estimator (KME) \citep{kaplan1958nonparametric_KME_org}, a non-parametric estimator of the survival function: 
        $\hat{S}^\mathrm{ARL}(t) = \prod_{j: t_j^\mathrm{ARL} \leq t} (1 - \frac{d_j^\mathrm{ARL}}{n_j^\mathrm{ARL}})$, 
        where 
        $0 < t_1^\mathrm{ARL} < t_2^\mathrm{ARL} < \cdots < t_{N'}^\mathrm{ARL}$ are distinct detection points, with $N'_\mathrm{ARL} \in \mbn$ ($\leq N$);
        $d_j^\mathrm{ARL} := | \{ i \in [N] \mid \tau_i = t_j^\mathrm{ARL} \} |$ is the number of sequences with a detection at $t_j^\mathrm{ARL}$ ($j \in [N'_\mathrm{ARL}]$);
        and 
        $n_j^\mathrm{ARL} := |\{i \in [N] \mid \min \{ \tau_i, C_i^\mathrm{ARL} \} \geq t_j^\mathrm{ARL} \}|$ is the number of sequences neither detected nor censored prior to $t_j^\mathrm{ARL}$ ($j \in [N'_\mathrm{ARL}]$),
        with the censoring time of sequence $i$ defined as $C_i^\mathrm{ARL} := \min \{ \nu_i, T_i \}$.
        We propose a non-parametric estimator of the ARL under irregular sequence lengths, termed KM-ARL, as the integral of $\hat{S}^\mathrm{ARL}(t)$ over the range $[0, a]$ for arbitrary $a \in \mbr_{\geq 0}$: $\hat{\mu}_T^\mathrm{(KM)} := \int_0^{a} \hat{S}^\mathrm{ARL}(t) dt$.
    }
    \label{fig:S_KME_KM-ARL}
\end{figure}

\subsection{Background of KME in Survival Analysis}
\subsubsection{Introduction}
We provide the motivation and intuition for the original KME in survival analysis \cite{kaplan1958nonparametric_KME_org, kleinbaum1996survival}.
Let $\tau$ be a nonnegative random variable representing a lifetime, and let
\begin{align}
S(t) := P(\tau > t)
\end{align}
denote the survival function. 
In the ideal (textbook) setting without censoring, and with i.i.d.\ samples $\tau_1,\dots,\tau_N$, a naive nonparametric estimator of $S(t)$ is the following empirical survival function
\begin{align}
    \hat{S}_\mathrm{emp} (t) 
    = \frac{1}{N} \sum_{i=1}^N \mbone\{\tau_i > t\}.
\end{align}
However, in practice, we often do \emph{not} observe all $\tau_i$ fully. Instead, some subjects drop out, are lost to follow-up, or the study ends before they fail. This leads to \emph{right-censoring}, where we observe
\begin{align}
    Y_i = \min(\tau_i, C_i), 
    \qquad 
    \Delta_i = \mbone\{\tau_i \le C_i\},
\end{align}
with $C_i$ a censoring time. Then $Y_i$ is either the failure time (if $\Delta_i=1$) or a censoring time (if $\Delta_i=0$), where $\Delta_i$ is the event flag.

If we treat censored observations as if they were failures, we underestimate survival; if we drop them entirely, we discard information about the period in which we do know they survived. The KME is motivated precisely by this need:
\begin{itemize}
    \item use all information available prior to censoring;
    \item do not make parametric assumptions (nonparametric);
    \item remain interpretable as a product of empirical conditional survival probabilities.
\end{itemize}

We derive the KME formula in the following to clarify its motivation.

\subsubsection{Decomposition via Conditional Survival Probabilities}
A key idea is to write $S(t)$ as a product of conditional survival probabilities at each distinct failure time (or event time, i.e., the time of death). 
Let
\begin{align}
    t_{(1)} < t_{(2)} < \dots < t_{(K)}
\end{align}
be the distinct \emph{failure} times in the population (not yet the sample). For $t$ between $t_{(j)}$ and $t_{(j+1)}$, we can write
\begin{align}
    S(t) 
    = P(\tau>t)
    = \prod_{t_{(m)} \le t} P\bigl(\tau > t_{(m)} \,\big|\, \tau \ge t_{(m)}\bigr).
\end{align}
This is just repeated application of the chain rule of probability. Intuitively,
\begin{itemize}
    \item at each time $t_{(m)}$, we look at those who are still alive (have ``survived so far'');
    \item we multiply by the probability that they survive \emph{beyond} $t_{(m)}$;
    \item the overall survival up to time $t$ is the product of these step-by-step survival probabilities.
\end{itemize}

The KM estimator simply replaces each conditional probability 
$P(\tau>t_{(m)} \mid \tau\ge t_{(m)})$
by a natural empirical estimate using the sample with censoring.

\subsubsection{Risk Set $n_j$ and Number of Events $d_j$}
Suppose we have $N$ subjects, and we observe $(Y_i,\Delta_i)$ for $i=1,\dots,N$. Let
\begin{itemize}
\item $t_{(1)} < \dots < t_{(J)}$ be the distinct observed \emph{failure} times in the sample (i.e., times where at least one $\Delta_i=1$ and $Y_i = t_{(j)}$),
\item $d_j$ be the number of failures at time $t_{(j)}$,
\item $n_j$ be the size of the \emph{risk set} at time $t_{(j)}$, i.e., the number of subjects who are known to be alive and uncensored just \emph{before} $t_{(j)}$.
\end{itemize}

\paragraph{Intuition for the risk set $n_j$.}
The risk set $n_j$ at time $t_{(j)}$ collects all individuals for whom failure at $t_{(j)}$ is still a possibility:
\begin{itemize}
    \item subjects who failed earlier ($Y_i < t_{(j)}$ and $\Delta_i=1$) are removed: they are already dead;
    \item subjects who were censored earlier ($Y_i < t_{(j)}$ and $\Delta_i=0$) are removed: we no longer observe them, and we cannot know whether they would have failed at $t_{(j)}$ or not;
    \item subjects whose observed time $Y_i$ is at least $t_{(j)}$ remain in the risk set: we know they survived at least up to just before $t_{(j)}$.
\end{itemize}
Thus,
\begin{align}
    n_j = \sum_{i=1}^N \mbone\{Y_i \ge t_{(j)}\}.
\end{align}
This definition uses all partial information: even if some $Y_i$ correspond to future censoring events, until they are censored, they contribute to the information that the subject has survived up to that time.

\paragraph{Intuition for the number of failures $d_j$.}
At $t_{(j)}$, we also count the number of failures:
\begin{align}
d_j 
= \sum_{i=1}^N \mbone\{Y_i = t_{(j)},\ \Delta_i = 1\}.
\end{align}
Only \emph{failures} contribute to $d_j$. Censoring at time $t_{(j)}$ does not count as a failure; it simply removes subjects from future risk sets (for times after $t_{(j)}$). This aligns with the goal of estimating the distribution of failure times, not censoring times.

\subsubsection{Why $d_j/n_j$? Conditional Probability Viewpoint}
Consider the underlying quantity
\begin{align}
    P(\tau = t_{(j)} \mid \tau \ge t_{(j)})
\end{align}
as the probability that a subject who has survived up to $t_{(j)}$ fails exactly at $t_{(j)}$. In a discrete-time picture, this is analogous to a ``hazard probability'' at $t_{(j)}$.
Given $n_j$ subjects in the risk set at $t_{(j)}$, and $d_j$ of them failing at $t_{(j)}$, it is natural to estimate this conditional probability by the empirical proportion
\begin{align}
    \hat{P}(\tau = t_{(j)} \mid \tau \ge t_{(j)})
    = \frac{d_j}{n_j}.
\end{align}
Consequently, the conditional survival probability at $t_{(j)}$ is estimated by
\begin{align}
\hat{P}(\tau > t_{(j)} \mid \tau \ge t_{(j)})
= 1 - \frac{d_j}{n_j}.
\end{align}
Note that censored individuals are \emph{included} in $n_j$ as long as they are not yet censored at $t_{(j)}$, but never in $d_j$. This reflects the idea that:
\begin{itemize}
    \item up to the censoring time, their ``survival experience'' is fully observed and contributes to the risk set $n_j$;
    \item at the censoring time, we stop knowing what happens afterward, so they drop from future risk sets (but are not treated as failures $d_j$).
\end{itemize}

\subsubsection{KME as Product-limit Estimator}
Putting everything together, the KM estimator of $S(t)$ is defined as
\begin{align}
\hat S_{\mathrm{KM}}(t)
= \prod_{t_{(j)} \le t} \left( 1 - \frac{d_j}{n_j} \right).
\label{eq:app_KME}
\end{align}
This is exactly the product of estimated conditional survival probabilities:
\begin{align}
\hat S_{\mathrm{KM}}(t)
&= \prod_{t_{(j)} \le t} 
\hat{P}(\tau>t_{(j)} \mid \tau \ge t_{(j)}).
\end{align}
Intuitively:
\begin{itemize}
\item At the beginning ($t< t_{(1)}$), survival is 1: no one has failed.
\item At each failure time $t_{(j)}$, we shrink the survival curve by multiplying by $1 - d_j / n_j$.
\item Between failure times, $\hat S_{\mathrm{KM}}(t)$ is constant, yielding a right-continuous step function.
\end{itemize}

This construction uses only the ordering of failure and censoring times and assumes (in the standard theory) that censoring is non-informative (independent of the failure mechanism, given the observed history). Under these conditions, the KM estimator can be seen as the nonparametric maximum likelihood estimator of $S(t)$ under right-censoring.

In summary, the roles of $d_j$ and $n_j$ are:
\begin{itemize}
    \item $n_j$ is the size of the risk set just before $t_{(j)}$. It collects all individuals whose failure at $t_{(j)}$ is still observable: they have not failed and have not been censored yet. This ensures that $d_j/n_j$ is a valid empirical conditional probability.
    \item $d_j$ is the number of failures at $t_{(j)}$. It quantifies how much the survival curve should drop at that time, relative to the current risk set.
    \item The ratio $d_j/n_j$ is the natural empirical estimator of the conditional failure probability at $t_{(j)}$. Its complement $1-d_j/n_j$ is the empirical conditional survival probability.
    \item The KM estimator is then the product over time of these conditional survival probabilities, giving a nonparametric estimator of the entire survival function that properly accounts for right-censoring.
\end{itemize}

\section{Proofs} \label{app:Proofs}

Our proofs for the finite-sample bounds are based on \citep{stute1994bias}.
In our proofs for the finite-sample bounds, we idealize the time index as continuous without loss of generality, for technical convenience.
Define $\nu_1,\dots,\nu_N$ as i.i.d. random variables (the changepoints), independent of the observations of sequence, where $N \in \mbn$.
Assume that $\tau_1,\dots,\tau_N$ are independent and identically distributed (i.i.d.) random variables (the detection points of a QCD model).  
Assume that  $T_1,\dots,T_N$ are i.i.d. random variables (the lengths of sequences), independent of the observations of sequences.

\subsection{Proof of Theorem \ref{thm:Finite-sample bias bounds for KM-ARL}} \label{app:Proof of B_FS for KM-ARL}
Let $C_1^\mathrm{ARL}, \ldots, C_N^\mathrm{ARL}$ are i.i.d. random variables (the censoring times for the KM-ARL), defined as $C_i^\mathrm{ARL} := \min\{\nu_i, T_i \}$ for $i \in [N]$.
Assume that $C^\mathrm{ARL}$ is independent of $\tau$, called the independent censoring assumption (or non-informative censoring assumption) for the KM-ARL.
This assumption holds for online QCD models that do not look ahead the input sequence, which are the focus of this paper.
This is because: detection points $\tau_i$ prior to censoring can be regarded as samples from $P(\tau \mid \nu=\infty)$;  $P(\tau \mid \nu=\infty)$, $P(\nu)$, and $P(T)$ are independent; and thus, $P(\tau \mid \nu=\infty)$ and $P(C^\mathrm{ARL}:= \min\{\nu, T\})$ are also independent. Here, \$P\$ denotes a probability or a probability measure, with slight abuse of notation throughout the paper. 
In contrast, the independent censoring assumption does not hold for offline changepoint detection models because $\tau$ depends on $X^{(0, T)}$, which in turn depends on $\nu$ and $T$.
Let $F^\mathrm{ARL}$ and $G^\mathrm{ARL}$ denote the cumulative distribution functions (CDFs) of $\tau$ and $C^\mathrm{ARL}$, respectively.
The survival function of detection points is defined as $S^\mathrm{ARL}(t) := 1 - F^\mathrm{ARL}(t)$, where $t \in \mbr_{\geq 0}$.

We adopt a different convention of the KME from that in the main text.
Consider the random variables
\begin{align}
  Z_i^\mathrm{ARL}   &:= \min(\tau_i,C^\mathrm{ARL}_i),\\
  \delta_i^\mathrm{ARL} &:= \mbone_{\{\tau_i<C^\mathrm{ARL}_i\}},
\end{align}
where $\delta_i^\mathrm{ARL}$ is the indicator representing whether detection $\tau_i$ has been observed before a changepoint or the end of the sequence.
Let $H^\mathrm{ARL}$ denote the CDF of $Z^\mathrm{ARL}$, and assume that it is continuous on $t \in \mbr_{\geq 0}$.
Let $Z_{1:N}^\mathrm{ARL} \leq Z_{2:N}^\mathrm{ARL} \leq \cdots\le Z_{N:N}^\mathrm{ARL}$ be the \textit{order statistics} of $Z^\mathrm{ARL}$  (i.e., $Z_i^\mathrm{ARL}$ are sorted in increasing order) and  
$\delta_{[i:N]}^\mathrm{ARL}$ be the \textit{concomitant} of $Z_{i:N}^\mathrm{ARL}$; i.e., $\delta_i^\mathrm{ARL}$ are sorted according to $Z_{i:N}^\mathrm{ARL}$, meaning that $\delta_{[i:N]}^\mathrm{ARL}=\delta_j$ if $Z_{i:N}^\mathrm{ARL}=Z_j^\mathrm{ARL}$. The estimation of $S^\mathrm{ARL}(t)$ is given by
\begin{align}
    1 - \hat{F}_N^\mathrm{ARL}(t) := \hat{S}^\mathrm{ARL}(t) = \prod_{i=1}^N(1 - \frac{\delta_{[i:N]}^\mathrm{ARL}}{N - i + 1})^{\mbone(Z_{i:N}^\mathrm{ARL} \leq t)},
\end{align}
where $t \in \mbr_{\geq0}$
This is equivalent to the convention in the main text, which can be confirmed by canceling factors telescopically. In case of ties, treat a death as if it occurs before a censoring at the same time point.
Our KM-ARL is formally defined as
\begin{align}
    \hat{\mu}_T^\mathrm{(KM)} = \int_0^{a}  t \, d\hat{F}_N^\mathrm{ARL},
\end{align}
where $a \in \mbr_{\geq0}$ is arbitrary.\footnote{
    This definition can be rephrased as Eq.~\eqref{eq:def_KM_ARL} because $\int_0^a S(t) = \int_0^a t F(t) dt + a S(a)$. The gap is given by a constant $aS(a)$.
} 

\begin{theorem}[Finite-sample bias bounds for KM-ARL (Thm.~\ref{thm:Finite-sample bias bounds for KM-ARL})] \label{thm:Finite-sample bias bounds for KM-ARL (Appendix)} 
    We idealize the time index as continuous without loss of generality, for technical convenience.
    Let $F^\mathrm{ARL}, G^\mathrm{ARL},$ and $H^\mathrm{ARL}$ be the CDFs of $\tau$, $C^\mathrm{ARL}$, and $Z^\mathrm{ARL}$, respectively. 
    Assume that (i) $F^\mathrm{ARL}$ and $G^\mathrm{ARL}$ do not have common discontinuities,  (ii) $\tau$ and $C^\mathrm{ARL}$ are independent, known as the \textit{independent censoring} (or non-informative censoring), and (iii) $H^\mathrm{ARL}$ is continuous. Then, we have for any $a \in \mbr_{\geq 0}$
    \begin{align}
        &-\int_0^{a} t \,G^\mathrm{ARL}(t)\,{H^\mathrm{ARL}}(t)^{N-1}\,F^\mathrm{ARL}(dt) \\
        &\leq
        \mathcal{B}_\mathrm{FS}(\hat{\mu}_T^\mathrm{(KM)}) 
        \leq \\ 
        &\int_0^{a} a\,G^\mathrm{ARL}(t)\,{H^\mathrm{ARL}}(t)^{N-1}\,F^\mathrm{ARL}(dt),
    \end{align}
\end{theorem}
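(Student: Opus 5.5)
The plan is to specialize Stute's (1994) exact bias representation for Kaplan--Meier integrals to the test function $\varphi(t) = t\,\mbone(t \le a)$. Under assumptions (i)--(iii), write the KM-ARL in its formal form
\[
\hat{\mu}_T^\mathrm{(KM)} = \int \varphi \, d\hat{F}_N^\mathrm{ARL} = \sum_{i=1}^N W_{iN}\,\varphi(Z_{i:N}^\mathrm{ARL}),
\]
where the Kaplan--Meier weights are
\[
W_{iN} = \frac{\delta_{[i:N]}^\mathrm{ARL}}{N-i+1}\prod_{j<i}\Bigl(\frac{N-j}{N-j+1}\Bigr)^{\delta_{[j:N]}^\mathrm{ARL}}.
\]
The proof then proceeds in three steps.

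\textbf{Step 1: expectation of the KM integral.} Following \citep{stute1994bias}, I would compute $\mbe[\int \varphi\, d\hat{F}_N^\mathrm{ARL}]$ by conditioning on the position of the largest order statistic. The expected KM integral should equal $\int \varphi\, dF^\mathrm{ARL}$ on the region $\{t < t_H\}$, minus a correction from the event that the largest observation is censored. Concretely, Stute's representation for nonnegative $\varphi$ reads
\[
\mbe\Bigl[\int \varphi\, d\hat{F}_N^\mathrm{ARL}\Bigr] = \int \varphi(t)\,\bigl[1 - G^\mathrm{ARL}(t)\,H^\mathrm{ARL}(t)^{N-1}\bigr]\,F^\mathrm{ARL}(dt) \;+\; R_N(\varphi),
\]
where $R_N(\varphi) \ge 0$. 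The term $G^\mathrm{ARL}(t)H^\mathrm{ARL}(t)^{N-1}$ is the probability that all other $N-1$ observations lie below $t$ while the censoring of the unit with $\tau = t$ occurred first. The remainder $R_N$ arises from mass that the KM estimator redistributes onto the last uncensored point. Independent censoring (ii) is used to factor the joint law of $(Z,\delta)$ as $H$ and the subdistribution $(1-G)\,dF$. Continuity of $H$ (iii) removes ties, and (i) prevents common atoms, so that $\delta$ is well defined.

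\textbf{Step 2: lower bound.} Dropping $R_N \ge 0$ and using $\varphi(t) = t$ on $[0,a]$ immediately gives
\[
\mathcal{B}_\mathrm{FS} \ge -\int_0^a t\,G^\mathrm{ARL}(t)\,H^\mathrm{ARL}(t)^{N-1}\,F^\mathrm{ARL}(dt).
\]

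\textbf{Step 3: upper bound.} Here the remainder $R_N$ must be controlled. The mass redistributed at the last uncensored point lying in $[0,a]$ is multiplied by $\varphi \le a$. Stute's argument shows that the total redistributed mass is at most $\int G\,H^{N-1}\,dF$ over the relevant range. This gives
\[
R_N \le a\int_0^a G^\mathrm{ARL}(t)\,H^\mathrm{ARL}(t)^{N-1}\,F^\mathrm{ARL}(dt),
\]
and discarding the negative term from Step 2 yields the stated upper bound. Note that the truncation at $a$ converts the unbounded factor $t$ into the constant $a$.

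The main obstacle is Step 3: bounding the remainder with the truncated, non-monotone test function $t\,\mbone(t\le a)$. Stute's theorem is stated for general $\varphi$, so I would first verify that his remainder can be written as an expectation involving $\varphi$ evaluated at points of $[0,a]$. Failing that, I would split $\varphi = \varphi^+$ into $a\cdot\mbone(\cdot\le a)$ minus a nonnegative part and apply the representation to each piece separately.
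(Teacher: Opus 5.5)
There is a genuine gap: what you bound is not $\mathcal{B}_\mathrm{FS}(\hat{\mu}_T^\mathrm{(KM)})$. That bias is measured against $\mu_T^\mathrm{(KM)}=\int_0^a S^\mathrm{ARL}(t)\,dt$, and the paper's proof works with the area under the KM curve, $\int_0^a \hat S^\mathrm{ARL}(t)\,dt$. This is not your $\int\varphi\,d\hat F_N^\mathrm{ARL}$ with $\varphi(t)=t\,\mbone(t\le a)$. By Fubini,
\[
\int_0^a \hat S^\mathrm{ARL}(t)\,dt=\int_{[0,a]} t\,d\hat F_N^\mathrm{ARL}(t)+a\,\hat S^\mathrm{ARL}(a),
\]
and the boundary term $a\hat S^\mathrm{ARL}(a)$ is random. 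Under censoring its expectation in general differs from $aS^\mathrm{ARL}(a)$. So $\mathcal{B}_\mathrm{FS}=I_1-aI_2$, with $I_1=\mbe[\int_0^a t\,d\hat F_N^\mathrm{ARL}]-\int_0^a t\,dF^\mathrm{ARL}$ and $I_2=\mbe[\hat F_N^\mathrm{ARL}(a)]-F^\mathrm{ARL}(a)$, and your Steps 1--3 only treat $I_1$. Pairing your estimator with the target $\int_0^a S^\mathrm{ARL}$ instead makes the lower bound false. Without censoring ($G^\mathrm{ARL}\equiv 0$) the bias is $-aS^\mathrm{ARL}(a)$, which is negative whenever $a>0$ and $S^\mathrm{ARL}(a)>0$, while the claimed lower bound is $0$.

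The missing step is a second application of Stute's Cor.~1.1, with $\varphi=\mbone_{[0,a]}$. It gives $-\int_0^a G^\mathrm{ARL}(H^\mathrm{ARL})^{N-1}\,dF^\mathrm{ARL}\le I_2\le 0$. Combined with $-\int_0^a t\,G^\mathrm{ARL}(H^\mathrm{ARL})^{N-1}\,dF^\mathrm{ARL}\le I_1\le 0$, this yields both stated bounds. In particular, the factor $a$ in the upper bound comes from the boundary term $-aI_2$, not from a remainder in the KM integral of $t\,\mbone_{[0,a]}$.

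Your Step 3 also targets a non-obstacle. Stute's corollary gives $\mbe[\int\varphi\,d\hat F_N^\mathrm{ARL}]-\int\varphi\,dF^\mathrm{ARL}\le 0$ for every nonnegative Borel $\varphi$, monotone or not. So $I_1\le 0$ is immediate, and the ``redistributed mass'' bound you assert is neither needed nor derived.

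Your fallback split is in fact the cleanest fix once it is applied to the right object. Since $\int_0^a\hat S^\mathrm{ARL}(t)\,dt=a-\int_{[0,a]}(a-t)\,d\hat F_N^\mathrm{ARL}(t)$, and likewise for $S^\mathrm{ARL}$, one application of Stute's bound to $\psi(t)=(a-t)\,\mbone_{[0,a]}(t)\ge 0$ gives $0\le\mathcal{B}_\mathrm{FS}(\hat{\mu}_T^\mathrm{(KM)})\le\int_0^a (a-t)\,G^\mathrm{ARL}(t)\,H^\mathrm{ARL}(t)^{N-1}\,dF^\mathrm{ARL}(t)$. This implies the stated bound and sharpens it on both sides.
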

Assumption (i) is a technical requirement and is valid unless pathological situations are considered. 
In most cases, it holds with probability $1$ because we consider continuous time.
Note that $F^\mathrm{ARL}$ and $G^\mathrm{ARL}$ may have separate discontinuities.
Assumption (ii) holds for online QCD models that do not look ahead the input sequence, which are the focus of this paper.
This is because: detection points $\tau_i$ prior to censoring can be regarded as samples from $P(\tau \mid \nu=\infty)$;  $P(\tau \mid \nu=\infty)$, $P(\nu)$, and $P(T)$ are independent; and thus, $P(\tau \mid \nu=\infty)$ and $P(C:= \min\{\nu, T\})$ are also independent.
In contrast, Assumption (ii) does not hold for offline changepoint detection models because $\tau$ depends on $X^{(0, T)}$, which in turn depends on $\nu$ and $T$.
Assumption (iii) is also a technical requirement.

\begin{proof}
    \begin{align}
      &\int_{0}^{a} t \, dF^\mathrm{ARL}(t) \nn
      &= \int_{0}^{a} \Bigl(\int_{0}^{a} \mbone(t>s)\,ds\Bigr) dF^\mathrm{ARL}(t)  \\
      &= \int_{0}^{a} \Bigl(\int_{0}^{a} \mbone(t>s)\,dF^\mathrm{ARL}(t)\Bigr) ds \\
      &= \int_{0}^{a} \Bigl(\int_{s}^{a} dF^\mathrm{ARL}(t)\Bigr) ds                \\
      &= \int_{0}^{a} \bigl(F^\mathrm{ARL}(a)-F^\mathrm{ARL}(s)\bigr) ds                          \\
      &=   a F^\mathrm{ARL}(a) - a  + \int_{0}^{a} S^\mathrm{ARL}(t)\,dt.
    \end{align}
    Similarly, we have
    \begin{align}
      \int_{0}^{a} t \, d\hat{F}^\mathrm{ARL}(t) &= a \hat{F}^\mathrm{ARL}(a) - a  + \int_{0}^{a} \hat{S}^\mathrm{ARL}(t)\,dt.
    \end{align}
    Thus, the finite-sample bias is given by
    \begin{align} 
     & B_{\mathrm{FS}}\!\bigl(\hat{\mu}_{T}^\mathrm{(KM)}\bigr)  \nn
      =& \mathbb{E}\!\Bigl[\int_{0}^{a} \hat{S}^{\mathrm{ARL}}(t)\,dt\Bigr]
      -  \int_{0}^{a} S^{\mathrm{ARL}}(t)\,dt  \nn
      =& \mathbb{E}\!\Bigl[\int_{0}^{a} t\,d\hat{F}^{\mathrm{ARL}}(t)\Bigr]
         - \int_{0}^{a} t\,dF^{\mathrm{ARL}}(t) \nn
         &- a \bigl(\mbe[\hat{F}^{\mathrm{ARL}}(a)] - F^{\mathrm{ARL}}(a)\bigr) \nn
     =:& I_1 - aI_2, \label{eq:tmp1}
    \end{align}
    where we defined
    \begin{align}
        & I_1 = \mathbb{E}\!\Bigl[\int_{0}^{a} t\,d\hat{F}^{\mathrm{ARL}}(t)\Bigr]
         - \int_{0}^{a} t\,dF^{\mathrm{ARL}}(t) \label{eq:tmp2} \\
        & I_2 = \mbe[\hat{F}^{\mathrm{ARL}}(a)] - F^{\mathrm{ARL}}(a). \label{eq:tmp3}
    \end{align}
    
    We bound them by invoking the following lemma from \citep{stute1994bias}, adapted to our setting.
    The full proof of Lem.~\ref{lem:Cor.1.1 in Stute1994} is lengthy but given in \citep{stute1994bias}. Refer also to \citep{stute1993strong}, on which \citet{stute1994bias} builds his result.
    \begin{lemma}[Cor.~1.1 in \citep{stute1994bias}] \label{lem:Cor.1.1 in Stute1994}
        For any $F^\mathrm{ARL}$, $G^\mathrm{ARL}$, $H^\mathrm{ARL}$, and $\hat{F}^\mathrm{ARL}$ that satisfy the assumptions described in this section, the following inequality holds for any $N \in \mbn$, where $\varphi(t) \geq 0$ is a Borel measurable function on $t \in \mbr_{\geq0}$:
        \begin{align}
            &-\int \varphi(t) \,G^\mathrm{ARL}(t)\,{H^\mathrm{ARL}}(t)^{N-1}\,dF^\mathrm{ARL}(t) \nn
            &\leq
            \mbe[\int \varphi(t)d\hat{F}^\mathrm{ARL}(t)] - \int \varphi(t) dF^\mathrm{ARL} \nn
            & \leq 0.
        \end{align} 
    \end{lemma}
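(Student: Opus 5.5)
The plan is to reduce the lemma to a statement about the bias of the Kaplan--Meier estimator that lives entirely beyond the largest observation $Z_{N:N}^\mathrm{ARL}$, and then to bound that residual mass. I write $F,G,H,\hat F$ for the ARL quantities throughout. First, by linearity and monotone convergence it suffices to prove the two inequalities for indicator functions $\varphi=\mbone_{(x,y]}$. The general Borel $\varphi\ge 0$ then follows by a monotone-class argument: both sides are linear in $\varphi$ and monotone-continuous, and the constants in the bounds are integrals of $\varphi$ against fixed measures. Next, I will use the product-integral (Nelson--Aalen) representation of $\hat F$ together with assumptions (i)--(iii). These give the classical martingale identity of Gill: on $[0,Z_{N:N}]$, the process $(\hat F(t)-F(t))/(1-F(t))$ is a mean-zero martingale. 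Consequently, all of the bias comes from the event $\{t>Z_{N:N}\}$, which yields
\begin{align}
\mbe\Bigl[\int\varphi\,d\hat F\Bigr]-\int\varphi\,dF
= -\,\mbe\Bigl[\frac{1-\hat F(Z_{N:N})}{1-F(Z_{N:N})}\int_{(Z_{N:N},\infty)}\varphi\,dF\Bigr].
\end{align}
Since $\varphi\ge 0$ and $0\le 1-\hat F\le 1$, the right-hand side is $\le 0$, which gives the upper bound immediately.

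For the lower bound, note that $1-\hat F(Z_{N:N})=0$ whenever the largest observation is uncensored ($\delta_{[N:N]}=1$). Hence only the event where the largest observation is censored contributes. On that event, conditioning on $Z_{N:N}=s$ and using independence and continuity of $H$, the density of $(Z_{N:N}=s,\delta_{[N:N]}=0)$ is $N H(s)^{N-1}(1-F(s))\,dG(s)$. The factor $1-F(s)$ cancels the denominator. Applying Fubini then turns the residual into
\begin{align}
\int \varphi(t)\int_{[0,t)} N\,H(s)^{N-1}\,\mbe\bigl[1-\hat F(s-)\,\big|\,Z_{N:N}=s,\delta_{[N:N]}=0\bigr]\,dG(s)\,dF(t).
\end{align}
The target bound is $\int\varphi(t)\,G(t)H(t)^{N-1}\,dF(t)=\int\varphi(t)\int_{[0,t)}H(t)^{N-1}\,dG(s)\,dF(t)$. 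It therefore suffices to show an inequality of the form $\int_{[0,t)} N H(s)^{N-1}\,\mbe[1-\hat F(s-)\mid\cdot\,]\,dG(s)\le G(t)H(t)^{N-1}$.

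The main obstacle is this last step, because the factor $N$ must be absorbed. The conditional expectation of the Kaplan--Meier survival value at $s-$ is computed from the remaining $N-1$ i.i.d. observations, conditioned to lie below $s$. My first attempt will be Stute's route: induction on $N$ using the recursive structure of the KM weights $W_{iN}=\frac{\delta_{[i:N]}}{N-i+1}\prod_{j<i}\bigl(\frac{N-j}{N-j+1}\bigr)^{\delta_{[j:N]}}$. Removing the top order statistic relates the $N$-sample weights to the $(N-1)$-sample ones. This should produce an exact identity showing that $N H(s)^{N-1}\mbe[1-\hat F(s-)\mid\cdot\,]$ is dominated by $\frac{d}{ds}$-type increments of $H^{N-1}$, which integrate to at most $H(t)^{N-1}$ against $dG$ on $[0,t)$. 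If the induction becomes unwieldy, I would fall back on directly citing the exact bias formula of \citet{stute1994bias} (his Theorem~1.1), from which Cor.~1.1 follows by the crude bounds $1-\hat F\le 1$ and $H(s)\le H(t)$ for $s<t$.
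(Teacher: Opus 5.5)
The paper gives no argument of its own for this lemma; it imports Stute's Cor.~1.1 by citation. Your upper bound is a correct addition that the paper does not supply. Gill's identity $\mathbb{E}[\hat S(t\wedge Z_{N:N})/S(t\wedge Z_{N:N})]=1$ gives the exact residual representation, so the bias is $\le 0$. Your Fubini step also correctly shows that the lower bound is \emph{equivalent} to
\[ A_N(t):=\int_{[0,t)} N H(s)^{N-1} m(s)\,dG(s)\;\le\; G(t)H(t)^{N-1}\quad\text{for $F$-a.e. $t$},\qquad m(s):=\mathbb{E}\bigl[\hat S(s-)\mid Z_{N:N}=s,\ \delta_{[N:N]}=0\bigr]. \]
But this inequality is the entire content of the lower bound, and the proposal does not prove it.

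Moreover, the mechanism you sketch cannot work. Since $\hat S(Z_{N:N}-)=\prod_{j<N}\bigl(\frac{N-j}{N-j+1}\bigr)^{\delta_{[j:N]}}=\frac{1}{N}\prod_{j<N,\ \delta_{[j:N]}=0}\frac{N-j+1}{N-j}$, you always have $N m(s)\ge 1$. So the factor $N$ is never absorbed by $m$, and the integrand satisfies $NH(s)^{N-1}m(s)\ge H(s)^{N-1}$, strictly as soon as censoring below $s$ has positive probability.

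For $N=2$ the integrand is exactly $H(s)+H^0(s)$ with $H^0(s)=P(Z\le s,\ \delta=0)$. This exceeds $H(t)$ for $s$ close to $t$ whenever $H^0(t)>0$, so no pointwise domination on $[0,t)$ exists. Applied to your representation, the crude bounds $1-\hat F\le 1$ and $H(s)\le H(t)$ only yield $-N\int\varphi\,G H^{N-1}\,dF$.

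The inequality is true, but only after integrating in $s$. For $N=2$ with continuous $F,G$ one computes $G(t)H(t)-A_2(t)=(1-G(t))\int_{[0,t)}(F(t)-F(s))\,dG(s)\ge 0$. In general, the excess $\int_{[0,t)}H(s)^{N-1}(Nm(s)-1)\,dG(s)$ must be paid for by the slack $\int_{[0,t)}\bigl(H(t)^{N-1}-H(s)^{N-1}\bigr)\,dG(s)$.

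Establishing that trade-off for general $N$ is the missing step, for example by actually carrying out the sample-size induction you mention. As written, your lower bound rests on the citation of Stute exactly as the paper's does. Your claim that Cor.~1.1 follows from an exact formula via those crude bounds is not supported by your own representation, where they lose the factor $N$. It could only hold for a differently organized exact formula, which you have not checked.
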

    
    According to Lem.~\ref{lem:Cor.1.1 in Stute1994},
    for $\varphi(t) = \mbone(t \in [0,a])$, we have
    \begin{align}
        &-\int_0^a \,G^\mathrm{ARL}(t)\,{H^\mathrm{ARL}}(t)^{N-1}\,dF^\mathrm{ARL}(t) \nn
        &\leq
        \mbe[\int_0^a d\hat{F}^\mathrm{ARL}(t)] - \int_0^a dF^\mathrm{ARL}  
        \leq 0.
    \end{align}
    Since $\mbe[\int_0^a d\hat{F}^\mathrm{ARL}(t)] - \int_0^a dF^\mathrm{ARL} = \mbe[\hat{F}^\mathrm{ARL}(a)] - F^\mathrm{ARL}(a) = I_2$, we have
    \begin{align}
        &-\int_0^a \,G^\mathrm{ARL}(t)\,{H^\mathrm{ARL}}(t)^{N-1}\,dF^\mathrm{ARL}(t) \leq
        I_2 
        \leq  0. \label{eq:tmp4}
    \end{align}
    Additionally, for $\varphi(t) = t \,\mbone(t \in [0,a])$, we have
    \begin{align}
            &-\int_0^a t  \,G^\mathrm{ARL}(t)\,{H^\mathrm{ARL}}(t)^{N-1}\,dF^\mathrm{ARL}(t) \nn
            &\leq
            \mbe[\int_0^a t d\hat{F}^\mathrm{ARL}(t)] - \int_0^a t dF^\mathrm{ARL} (= I_1) 
            \leq 0. \label{eq:tmp5}
    \end{align}    
    From Eqs.~\eqref{eq:tmp1}, \eqref{eq:tmp2}, \eqref{eq:tmp3}, \eqref{eq:tmp4}, and \eqref{eq:tmp5}, we have proved
    \begin{align}
        &-\int_0^{a} t \,G^\mathrm{ARL}(t)\,{H^\mathrm{ARL}}(t)^{N-1}\,dF^\mathrm{ARL}(t) \nn
        &\leq
        I_1 -a I_2 
        \,\,\, (= B_{\mathrm{FS}}\!\bigl(\hat{\mu}_{T}^\mathrm{(KM)}\bigr)) 
        \leq \nn 
        &\int_0^{a} a\,G^\mathrm{ARL}(t)\,{H^\mathrm{ARL}}(t)^{N-1}\,dF^\mathrm{ARL}(t)
    \end{align}    
    for arbitrary $a \in \mbr_{\geq0}$.
\end{proof}

\subsection{Proof of Theorem \ref{thm:Finite-sample bias bounds for KM-ADD}} \label{app:Proof of B_FS for KM-ADD}
Consider only sequences with $\nu < \infty$ and $\Delta \tau \geq 0$ only, as others do not contribute to the computation of the ADD, defined as $\mbe[\tau - \nu \mid  \tau \geq \nu, \nu < \infty]$.
In this section, we use $N \in \mbn$ as the number of sequences with $\nu_i < \infty$ and $\Delta \tau_i \geq 0$ in the dataset.
Assume that $\Delta\tau_1,\dots,\Delta\tau_{N}$ are independent and identically distributed (i.i.d.) random variables (the detection delays of a QCD model), defined as $\Delta \tau_i := \tau_i - \nu_i$. 
Let $C_1^\mathrm{ADD}, \ldots, C_N^\mathrm{ADD}$ are i.i.d. random variables (the censoring times for the KM-ADD), defined as $C_i^\mathrm{ADD} := \Delta T_i = T_i - \nu_i$ for $i \in [N]$.
Assume that $C^\mathrm{ADD}$ is independent of $\Delta\tau$, called the independent censoring assumption (or non-informative censoring assumption) for the KM-ADD.
The independent censoring assumption is justified in online QCD models by the approximate $P(\Delta \tau \mid \Delta \tau \geq 0, \nu < \infty) \approx P(\tau \mid \nu=0)$ and $P(C:= \Delta T \mid \Delta \tau \geq 0, \nu < \infty) \approx P(C:= T \mid \nu = 0)$ because $P(\tau \mid \nu=0)$ and $P(C:= T \mid \nu = 0)$ are independent.
This indicates that the distributions of detection delay $\tau - \nu$ and censoring time $T - \nu$ are approximately equal to the distributions of them measured from $t=0$, not $\nu$.
Under this approximation, the definition of the ADD becomes $\mbe[\tau \mid \nu=0]$, which is called the steady-state ARL \citep{saccucci1990average, sasikumr2025comprehensive, lim2025efficient} and often used in the theory of control charts.
For simplicity, we assume the independent censoring in the proof of Thm.~\ref{thm:Finite-sample bias bounds for KM-ADD}, and define $F^\mathrm{ADD}$, $G^\mathrm{ADD}$, and $H^\mathrm{ADD}$ accordingly.
The survival function of detection delays is defined as $S^\mathrm{ADD}(t) := 1 - F^\mathrm{ADD}(t)$, where $t \in \mbr_{\geq 0}$.

Consider the random variables
\begin{align}
  Z_i^\mathrm{ADD}   &:= \min(\Delta\tau_i,C_i^\mathrm{ADD}),\\
  \delta_i^\mathrm{ADD} &:= \mbone_{\{\Delta\tau_i<C_i^\mathrm{ADD}\}},
\end{align}
where $\delta_i^\mathrm{ADD}$ is the indicator representing whether the detection $\tau_i$ has been observed before the end of the sequence.  
Let $H^\mathrm{ADD}$ denote the CDF of $Z^\mathrm{ADD}$, and assume that it is continuous on $t \in \mbr_{\geq 0}$.
Let $Z_{1:N}^\mathrm{ADD} \leq Z_{2:N}^\mathrm{ADD} \leq \cdots\le Z_{N:N}^\mathrm{ADD}$ be the \textit{order statistics} of $Z^\mathrm{ADD}$  (i.e., $Z_i^\mathrm{ADD}$ are sorted in increasing order) and  
$\delta_{[i:N]}^\mathrm{ADD}$ be the \textit{concomitant} of $Z_{i:N}^\mathrm{ADD}$; i.e., $\delta_i^\mathrm{ADD}$ are sorted according to $Z_{i:N}^\mathrm{ADD}$, meaning that $\delta_{[i:N]}^\mathrm{ADD}=\delta_j$ if $Z_{i:N}^\mathrm{ADD}=Z_j^\mathrm{ADD}$. The \textit{survival function of detection points} is given by
\begin{align}
    1 - \hat{F}_N^\mathrm{ADD}(t) := \hat{S}^\mathrm{ADD}(t) = \prod_{i=1}^N(1 - \frac{\delta_{[i:N]}^\mathrm{ADD}}{N - i + 1})^{\mbone(Z_{i:N}^\mathrm{ADD} \leq t)},
\end{align}
where $t \in \mbr_{\geq0}$
This is equivalent to the convention in the main text, which can be confirmed by canceling factors telescopically. In case of ties, treat a death as if it occurs before a censoring at the same time point.
Our KM-ARL is formally defined as
\begin{align}
    \hat{M}_T^\mathrm{(KM)} = \int_0^{b}  t \, d\hat{F}_N^\mathrm{ADD},
\end{align}
where $a \in \mbr_{\geq0}$ is arbitrary.

\begin{theorem}[Finite-sample bias bounds for KM-ADD (Thm.~\ref{thm:Finite-sample bias bounds for KM-ADD})] \label{thm:Finite-sample bias bounds for KM-ADD (Appendix)} 
    We idealize the time index as continuous without loss of generality, for technical convenience.
    Consider only sequences with $\nu_i < \infty$ and $\Delta \tau_i \geq 0$ in the dataset.
    Let $F^\mathrm{ADD}, G^\mathrm{ADD},$ and $H^\mathrm{ADD}$ be the CDFs of $\Delta \tau$, $C^\mathrm{ADD}$, and $Z^\mathrm{ADD}$, respectively. 
    Assume that (i) $F^\mathrm{ADD}$ and $G^\mathrm{ADD}$ do not have common discontinuities,  (ii) $\Delta \tau$ and $C^\mathrm{ADD}$ are independent, known as the \textit{independent censoring} (or non-informative censoring), and (iii) $H^\mathrm{ADD}$ is continuous. Then, we have for any $b \in \mbr_{\geq0}$
    \begin{align}
        &-\int_0^{b} t \,G^\mathrm{ADD}(t)\,{H^\mathrm{ADD}}(t)^{N-1}\,F^\mathrm{ADD}(dt) \\
        &\leq
        \mathcal{B}_\mathrm{FS}(\hat{M}_T^\mathrm{(KM)}) 
        \leq \\ 
        &\int_0^{b} b\,G^\mathrm{ADD}(t)\,{H^\mathrm{ADD}}(t)^{N-1}\,F^\mathrm{ADD}(dt),
    \end{align}
\end{theorem}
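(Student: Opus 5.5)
The argument for Thm.~\ref{thm:Finite-sample bias bounds for KM-ADD (Appendix)} is the proof of Thm.~\ref{thm:Finite-sample bias bounds for KM-ARL (Appendix)} with the renaming $\tau \to \Delta\tau$, $C^\mathrm{ARL}\to C^\mathrm{ADD}=\Delta T$, $a \to b$, and $N$ now the number of sequences with $\nu_i<\infty$ and $\Delta\tau_i\geq 0$. Restricted to this subpopulation, the pairs $(\Delta\tau_i, C_i^\mathrm{ADD})$ are i.i.d. and independent of each other under the assumed independent censoring (ii). Together with (i) and (iii), this places us in exactly the random-censorship model of \citep{stute1994bias}. Consequently Lem.~\ref{lem:Cor.1.1 in Stute1994} applies verbatim with $(F^\mathrm{ADD}, G^\mathrm{ADD}, H^\mathrm{ADD}, \hat{F}_N^\mathrm{ADD})$ in place of the ARL quantities.

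First, I rewrite the bias in terms of Stieltjes integrals. By the same Fubini computation as in the ARL proof,
\begin{align}
\int_0^b t\,dF^\mathrm{ADD}(t) = bF^\mathrm{ADD}(b) - b + \int_0^b S^\mathrm{ADD}(t)\,dt,
\end{align}
and the identical identity holds for $\hat{F}_N^\mathrm{ADD}$ and $\hat{S}^\mathrm{ADD}$ pathwise. Subtracting the two identities and taking expectations gives $\mathcal{B}_\mathrm{FS}(\hat{M}_T^\mathrm{(KM)}) = I_1 - bI_2$. Here $I_1 := \mbe[\int_0^b t\,d\hat{F}_N^\mathrm{ADD}] - \int_0^b t\,dF^\mathrm{ADD}$ and $I_2 := \mbe[\hat{F}_N^\mathrm{ADD}(b)] - F^\mathrm{ADD}(b)$.

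Second, I apply Lem.~\ref{lem:Cor.1.1 in Stute1994} twice, once for each term. With $\varphi(t)=t\,\mbone(t\in[0,b])\ge 0$ it gives $-\int_0^b t\,G^\mathrm{ADD}(H^\mathrm{ADD})^{N-1}dF^\mathrm{ADD}\le I_1\le 0$. With $\varphi(t)=\mbone(t\in[0,b])$ it gives $-\int_0^b G^\mathrm{ADD}(H^\mathrm{ADD})^{N-1}dF^\mathrm{ADD}\le I_2\le 0$, so that $0\le -bI_2\le \int_0^b b\,G^\mathrm{ADD}(H^\mathrm{ADD})^{N-1}dF^\mathrm{ADD}$. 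Combining the two, the lower bound is the lower bound of $I_1$ plus $0$, and the upper bound is $0$ plus the upper bound of $-bI_2$. This yields exactly the stated inequalities.

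The step I expect to need the most care is justifying that Lem.~\ref{lem:Cor.1.1 in Stute1994} is legitimately applicable. The conditioning on $\Delta\tau\ge0$, $\nu<\infty$ must preserve the i.i.d. structure and the independence of $\Delta\tau$ and $\Delta T$. Since both depend on $\nu$, this independence is not automatic. I would therefore treat it purely as the hypothesis (ii) of the theorem, on the conditional law, rather than try to derive it. A secondary point is the ties convention, which treats a detection as preceding a censoring at the same time. This convention is the one Stute uses, and with continuous $H^\mathrm{ADD}$ ties occur with probability zero anyway.
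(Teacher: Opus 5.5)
Your proposal is correct and is essentially the paper's proof: the paper handles the KM-ADD case by saying it is the KM-ARL argument with ARL quantities replaced by ADD ones. That argument is exactly your decomposition $\mathcal{B}_\mathrm{FS}(\hat{M}_T^\mathrm{(KM)}) = I_1 - bI_2$, followed by two applications of Stute's Cor.~1.1 with $\varphi(t) = t\,\mathbf{1}(t\in[0,b])$ and $\varphi(t)=\mathbf{1}(t\in[0,b])$ and the same sign bookkeeping; like you, the paper also takes the independence of $\Delta\tau$ and $C^\mathrm{ADD}$ on the conditioned subpopulation as a hypothesis rather than deriving it.
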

Assumption (i) is a technical requirement and is valid unless pathological situations are considered. 
In most cases, it holds with probability $1$ because we consider continuous time.
Note that $F^\mathrm{ADD}$ and $G^\mathrm{ADD}$ may have separate discontinuities.
Assumption (ii) is the independent censoring assumption discussed above.
Assumption (iii) is also a technical requirement.

\begin{proof}
    Thanks to the setup detailed in this section, the proof is identical to that of Thm.~\ref{thm:Finite-sample bias bounds for KM-ARL} with relevant replacements, such as $(\cdot)^\mathrm{ARL}$ with $(\cdot)^\mathrm{ADD}$.
\end{proof}

\subsection{Proof of Thm.~\ref{thm:Truncation bias bounds for KM-ARL}} \label{app:Proof of B_TR for KM-ARL}
\begin{theorem}[Truncation bias bounds for KM-ARL (Thm.~\ref{thm:Truncation bias bounds for KM-ARL})] \label{thm:Truncation bias bounds for KM-ARL (Appendix)}
    Define the truncation bias of the LB-ARL as $\mathcal{B}_\mathrm{TR}(\hat{\mu}_T^\mathrm{(LB)}) := \mu_T^\mathrm{(LB)} - \mu_\infty$, 
    where $\mu_T^\mathrm{(LB)} := \mbe[ \tau \mid \nu=\infty, \tau \leq T ]$ is the true ARL under random sequence lengths.
    For $a = T_\mathrm{max}^*$, we have
    $\mathcal{B}_\mathrm{TR}(\hat{\mu}_T^\mathrm{(LB)}) \leq \mathcal{B}_\mathrm{TR}(\hat{\mu}_T^\mathrm{(KM)}) \leq 0$.
\end{theorem}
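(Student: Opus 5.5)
We need to show two inequalities: the KM truncation bias is nonpositive, and it dominates the LB truncation bias. Throughout, $\tau$ denotes the detection point under $\nu=\infty$, and $S := S^\mathrm{ARL}$.

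\emph{Upper bound.} By definition, $\mathcal{B}_\mathrm{TR}(\hat{\mu}_T^\mathrm{(KM)}) = \int_0^{a} S(t)\,dt - \int_0^\infty S(t)\,dt = -\int_a^\infty S(t)\,dt$. Here we use $\mu_\infty = \int_0^\infty S(t)\,dt$, which is the mean-survival identity from Sec.~\ref{sec:KM-ARL and KM-ADD}. Since $S\ge 0$, this gives $\mathcal{B}_\mathrm{TR}(\hat{\mu}_T^\mathrm{(KM)}) \le 0$ immediately, for any $a$ and in particular for $a=T^*_\mathrm{max}$.

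\emph{Lower bound.} It suffices to show $\mu_T^\mathrm{(LB)} \le \int_0^{a} S(t)\,dt$ with $a = T^*_\mathrm{max}$. The key structural fact is that $T$ is independent of $\tau$ given $\nu=\infty$, since the length is independent of the observations. Hence
\begin{align}
\mu_T^\mathrm{(LB)} = \frac{\mbe[\tau \mbone(\tau\le T)]}{P(\tau\le T)},
\end{align}
and almost surely $T \le T^*_\mathrm{max} = a$. I would then argue in two steps:
\begin{align}
\mbe[\tau \mid \tau\le T] \le \mbe[\tau \mid \tau \le a] \le \mbe[\min\{\tau,a\}] = \int_0^a S(t)\,dt .
\end{align}
The final equality is the standard identity $\int_0^a S(t)\,dt = \mbe[\min\{\tau,a\}]$, which can also be read off from the integration-by-parts computation in the proof of Thm.~\ref{thm:Finite-sample bias bounds for KM-ARL (Appendix)}.

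\emph{Second inequality.} Write $\mbe[\min\{\tau,a\}] = p\,\mbe[\tau\mid\tau\le a] + (1-p)\,a$ with $p = P(\tau\le a)$. On the event $\{\tau\le a\}$ we have $\tau \le a$, so $\mbe[\tau\mid \tau\le a]\le a$. The mixture therefore lies between $\mbe[\tau\mid\tau\le a]$ and $a$, which gives the inequality. The degenerate case $p=0$ should be handled separately.

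\emph{First inequality (main obstacle).} Conditioning on $T=s$ and using independence, $\mbe[\tau\mid \tau\le T]$ is a weighted average over $s\le a$ of $m(s):=\mbe[\tau\mid\tau\le s]$, with weights proportional to $P(\tau\le s)\,dP_T(s)$. It then suffices that $m$ is nondecreasing in $s$. This holds because conditioning on $\{\tau\le s'\}$ with $s'>s$ mixes $m(s)$ with values of $\tau$ in $(s,s']$, all of which exceed $m(s)$; so $m(s') \ge m(s)$, and hence $m(s)\le m(a)$ for every $s \le a$.

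The remaining care is in edge cases. One is $P(\tau\le T)=0$, where $\mu_T^\mathrm{(LB)}$ is undefined and must be excluded or treated by convention. Another is a possible atom of $T$ at $T^*_\mathrm{max}$, which is harmless because the support of $T$ lies in $[0,a]$. Finally, we need $\mu_\infty<\infty$ so that all quantities are finite, which the paper assumes. Combining the chain with $\mu_\infty$ subtracted on both sides yields $\mathcal{B}_\mathrm{TR}(\hat{\mu}_T^\mathrm{(LB)}) \le \mathcal{B}_\mathrm{TR}(\hat{\mu}_T^\mathrm{(KM)})$.
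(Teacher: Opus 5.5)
Your proof is correct. At the top level it matches the paper's. The upper bound comes from $-\int_a^\infty S^\mathrm{ARL}(t)\,dt$. The lower bound is the chain $\mathbb{E}[\tau\mid\tau\le T]\le m(a)\le \mathbb{E}[\min\{\tau,a\}]$. Its second step is exactly the paper's decomposition $\mu_T^\mathrm{(KM)} = m(a) + P(\tau>a)\,(a-m(a))$ together with $m(a)\le a$.

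The real difference is in how you prove the first inequality, which is the paper's Lemma~\ref{lem:ARL_last_inequality}.

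\emph{The paper's route.} It integrates over the values of $\tau$. It writes $\mathbb{E}[\tau\mid\tau\le T]$ as the mean of $x$ under the tilted measure $P(T\ge x)\,dF(x)$ on $[0,a]$. It then applies Chebyshev's association inequality (Lemma~\ref{lem:negative_covariant}) to the increasing function $x$ and the nonincreasing weight $P(T\ge x)$.

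\emph{Your route.} You integrate over the values of $T$. This expresses $\mathbb{E}[\tau\mid\tau\le T]$ as a mixture of the truncated means $m(s)$, $s\le a$, with normalized weights $w(ds)\propto P(\tau\le s)\,dP_T(s)$. You then only need $m$ to be nondecreasing, which you get from a one-line mixture argument.

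The two arguments are the two Fubini orders of the same double integral, so neither is more general in substance. Yours is slightly more elementary and more interpretable. It shows the LB-ARL as an average of conditional means truncated at horizons no larger than $T^*_\mathrm{max}$. It also gives the gap explicitly as $\int (m(a)-m(s))\,w(ds)\ge 0$. The paper's version instead packages the monotonicity into a reusable covariance lemma.

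You also handle the degenerate cases explicitly: $P(\tau\le T)=0$, and $T\le T^*_\mathrm{max}$ almost surely. The paper leaves these implicit.
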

    
\begin{proof}
    Recall that 
    \begin{align}
        &\mathcal{B}_\mathrm{TR}(\hat{\mu}_T^\mathrm{(KM)}) =  \mu_T^{\mathrm{(KM)}} - \mu_\infty \nn
        =& \int_0^{T_\mathrm{max}^*} S^\mathrm{ARL}(t) dt
        - \mbe[\tau \mid \nu=\infty] \\
        &\mathcal{B}_\mathrm{TR}(\hat{\mu}_T^\mathrm{(LB)}) =  \mu_T^{\mathrm{(LB)}} - \mu_\infty \nn
        =& \mbe[\tau \mid \nu=\infty, \tau \leq T]
        - \mbe[\tau \mid \nu=\infty].
    \end{align}
    Since 
    \begin{align}
        &\int_0^\infty S^\mathrm{ARL}(t)  dt \\
        =& \int_0^\infty P(\tau > t \mid \nu = \infty) dt \\
        =& \int_0^\infty  \mbe[\mbone(\tau > t) \mid \nu = \infty] dt \\
        =& \mbe[\int_0^\infty \mbone(\tau > t) dt \mid \nu = \infty] \\
        =& \mbe[\tau \mid \nu=\infty]  ,
    \end{align}
    we have 
    $\mathcal{B}_\mathrm{TR}(\hat{\mu}_T^\mathrm{(KM)}) = - \int_{T_\mathrm{max}^*}^\infty S^\mathrm{ARL}(t) \leq 0$.    
    Below, we show that $\mathcal{B}_\mathrm{TR}(\hat{\mu}_T^\mathrm{(LB)}) \leq \mathcal{B}_\mathrm{TR}(\hat{\mu}_T^\mathrm{(KM)})$.
    Since
    \begin{align}
        &\mu_T^\mathrm{(KM)} \nn 
        =& \int_0^{T_\mathrm{max}^*} S^\mathrm{ARL}(t) dt \\
        =&  \int_0^{T_\mathrm{max}^*}  P(\tau > t \mid \nu=\infty) dt\\
        =&  \int_0^{T_\mathrm{max}^*} \mbe[\mbone(\tau > t) \mid \nu = \infty] dt\\
        =&  \mbe [\int_0^{T_\mathrm{max}^*} \mbone(\tau > t) dt \mid \nu = \infty]\\
        =&  P(\tau \leq T_\mathrm{max}^* \mid \nu=\infty) \nn 
            &\times \mbe[ \int_0^{T_\mathrm{max}^*} \mbone(\tau > t)  dt \mid \nu=\infty, \tau \leq T_\mathrm{max}^* ] \nn
            &+  P(\tau > T_\mathrm{max}^* \mid \nu=\infty) \nn 
            &\times \mbe[ \int_0^{T_\mathrm{max}^*} \mbone(\tau > t)  dt \mid \nu = \infty, \tau > T_\mathrm{max}^*]\\
        =&  P(\tau \leq T_\mathrm{max}^* \mid \nu=\infty ) 
            \mbe[ \tau \mid \nu=\infty, \tau \leq T_\mathrm{max}^*] \nn
            &+  P(\tau > T_\mathrm{max}^* \mid \nu=\infty ) 
            \mbe[ T_\mathrm{max}^*  \mid \nu = \infty, \tau > T_\mathrm{max}^*] \\
        =& P(\tau \leq T_\mathrm{max}^* \mid \nu=\infty) 
            \mbe[ \tau \mid \nu=\infty, \tau \leq T_\mathrm{max}^*] \nn
            &+  P(\tau > T_\mathrm{max}^* \mid \nu=\infty) 
            T_\mathrm{max}^*  \\
        =& (1 - P(\tau > T_\mathrm{max}^* \mid \nu=\infty)) 
            \mbe[ \tau \mid \nu=\infty, \tau \leq T_\mathrm{max}^*] \nn
            &+  P(\tau > T_\mathrm{max}^* \mid \nu=\infty) 
            T_\mathrm{max}^* \\
        =& \mbe[ \tau \mid \nu=\infty, \tau \leq T_\mathrm{max}^*] \nn
            &+  P(\tau > T_\mathrm{max}^* \mid \nu=\infty) \nn
            &\times (T_\mathrm{max}^* - \mbe[ \tau \mid \nu=\infty, \tau \leq T_\mathrm{max}^*])
    \end{align}
    we have 
    \begin{align}
        &\mathcal{B}_\mathrm{TR}(\hat{\mu}_T^\mathrm{(KM)}) - \mathcal{B}_\mathrm{TR}(\hat{\mu}_T^\mathrm{(LB)}) \nn
        =&  \mu_T^{\mathrm{(KM)}}  - \mu_T^{\mathrm{(LB)}} \\
        =& \mbe[ \tau \mid \nu=\infty, \tau \leq T_\mathrm{max}^*] \nn
            &+  P(\tau > T_\mathrm{max}^* \mid \nu=\infty) \nn
            &\times (T_\mathrm{max}^* - \mbe[ \tau \mid \nu=\infty, \tau \leq T_\mathrm{max}^*]) \nn
            &- \mbe[\tau \mid \nu=\infty, \tau \leq T] \\
        =& \mbe[ \tau \mid \nu=\infty, \tau \leq T_\mathrm{max}^*] - \mbe[\tau \mid \nu=\infty, \tau \leq T] \nn
            &+  P(\tau > T_\mathrm{max}^* \mid \nu=\infty) \nn
            &\times (T_\mathrm{max}^* - \mbe[ \tau \mid \nu=\infty, \tau \leq T_\mathrm{max}^*]). 
    \end{align}
        Finally, we can see that 
        $\mbe[ \tau \mid \nu=\infty, \tau \leq T_\mathrm{max}^*] - \mbe[\tau \mid \nu=\infty, \tau \leq T] \geq 0$ (Lem.~\ref{lem:ARL_last_inequality}), and
        $T_\mathrm{max}^* - \mbe[ \tau \mid \nu=\infty, \tau \leq T_\mathrm{max}^*] \geq 0$;
        hence, $\mathcal{B}_\mathrm{TR}(\hat{\mu}_T^\mathrm{(KM)}) - \mathcal{B}_\mathrm{TR}(\hat{\mu}_T^\mathrm{(LB)}) \geq 0$.
    \end{proof}

\begin{lemma} \label{lem:ARL_last_inequality}
    $\mbe[ \tau \mid \nu=\infty, \tau \leq T_\mathrm{max}^*] - \mbe[\tau \mid \nu=\infty, \tau \leq T] \geq 0$.
\end{lemma}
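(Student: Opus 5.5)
Throughout I work conditionally on $\nu=\infty$ and write $E_c := \mbe[\tau \mid \nu=\infty, \tau \le c]$ for a deterministic level $c$. The plan is to reduce the statement to a monotonicity property of this truncated mean, and then average over the random length $T$, which is independent of $\tau$ by the independence of $T$ from the observations.

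\textbf{Step 1 (monotonicity of the truncated mean).} I will show that $c \mapsto E_c$ is nondecreasing on the set where $P(\tau\le c\mid\nu=\infty)>0$. Take $c\le c'$ and write
\begin{align}
E_{c'} = \frac{P(\tau\le c)\,E_c + P(c<\tau\le c')\,\mbe[\tau\mid c<\tau\le c']}{P(\tau\le c')}.
\end{align}
This expresses $E_{c'}$ as a convex combination of $E_c$ and a conditional mean of $\tau$ over $(c,c']$. That second mean is at least $c$, while $E_c\le c$, so $E_{c'}\ge E_c$.

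\textbf{Step 2 (averaging over $T$).} Since $T$ is independent of $\tau$, the quantity $\mbe[\tau\mid\nu=\infty,\tau\le T]$ is a weighted average of the values $E_t$. Concretely,
\begin{align}
\mbe[\tau\mid\nu=\infty,\tau\le T] = \int w(t)\,E_t\,dP_T(t), \qquad w(t) := \frac{P(\tau\le t\mid\nu=\infty)}{P(\tau\le T\mid\nu=\infty)},
\end{align}
and $\int w\,dP_T=1$. Because $T\le T^*_\mathrm{max}$ almost surely, Step 1 gives $E_t\le E_{T^*_\mathrm{max}}$ for every $t$ in the support of $T$. Therefore the weighted average is at most $E_{T^*_\mathrm{max}}$, which is exactly the claimed inequality.

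\textbf{Main obstacle: edge cases.} The algebra is routine; the care lies in the boundary cases. If $T^*_\mathrm{max}=\infty$, then $E_{T^*_\mathrm{max}}$ is interpreted as $\mu_\infty$, and the same argument applies provided $\mu_\infty<\infty$, as the paper assumes. Values of $t$ with $P(\tau\le t)=0$ receive zero weight $w(t)$, so they do not affect the average. Finally, I must assume $P(\tau\le T\mid\nu=\infty)>0$ so that the LB quantity is well-defined.
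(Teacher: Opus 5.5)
Your proof is correct, and it takes a genuinely different route from the paper's.

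\textbf{The paper's route.} It writes both conditional means as ratios over $[0,T_\mathrm{max}^*]$: $\mbe[\tau\mid\nu=\infty,\tau\le T]=\int x\,g(x)\,dF(x)/\int g(x)\,dF(x)$ with $g(x):=P(T\ge x)$, and the other mean is the same ratio with $g\equiv 1$. It then concludes in one step from a Chebyshev-type covariance inequality, $\mathrm{Cov}(x,g(x))\le 0$ under the normalized law of $\tau$ on $[0,T_\mathrm{max}^*]$, which it proves separately via the i.i.d.-copy identity. The idea is that reweighting by a nonincreasing function can only lower the mean.

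\textbf{Your route.} You disintegrate over $T$, writing the LB quantity as a mixture of truncated means $E_t$ with weights proportional to $P(\tau\le t\mid\nu=\infty)$. You then prove that $c\mapsto E_c$ is monotone by a direct convex-combination argument.

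\textbf{How they relate.} Since $g(x)=\int \mbone(x\le t)\,dP_T(t)$, the paper's reweighted measure $g\,dF$ is exactly your mixture of truncations. Your Step 1 is the covariance inequality specialized to indicator weights $\mbone(x\le c)$, verified by hand.

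\textbf{What each buys.} Your version is fully elementary, with no auxiliary covariance lemma. It also treats the edge cases explicitly: zero-weight levels, well-definedness of the LB mean, and $T_\mathrm{max}^*=\infty$. The paper tacitly excludes the last case by taking $T_\mathrm{max}^*\in\mbr$. The paper's version instead isolates the structural reason (a monotone reweighting shifts mass leftward) in a single inequality that applies verbatim to any nonincreasing weight.
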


\begin{proof}
    Without loss of generality, this inequality is equivalent to
    $\mbe[ \tau \mid 0 \leq \tau \leq T_\mathrm{max}^*] - \mbe[\tau \mid 0 \leq \tau \leq T] \geq 0$, 
    where $ T \leq T_{\mathrm{max}}^{*} (\in \mathbb{R})$ holds a.s., and $\tau$ and $T$ are independent.
    For notational simplicity, we will show that 
    \begin{align}
        \mathbb{E}_{X,Y} [X | X \leq y_{\mathrm{max}}] - \mathbb{E}_{X,Y} [X | X \leq Y] \geq 0 ,  
        \label{eq:tmpineq_simplified_ineq}
    \end{align}
    where $X$ is a non-negative random variable, $Y$ is a non-negative random variable with $0 \leq Y \leq y_{\mathrm{max}}$ a.s., $y_{\mathrm{max}} (\geq 0)$ is a real constant, and $X$ and $Y$ are independent.

    Write $F_X$ for the CDF of $X$, and let
    \begin{equation}
        g(x) := P(Y \geq x), \quad 0 \leq x \leq y_{\mathrm{max}} .
    \end{equation}
    Because $Y \in\left[0, y_{\mathrm{max}}\right], g$ is non-increasing in $x$, while $g(x)=0$ for $x > y_{\mathrm{max}}$.
    Thus, we have
    \begin{equation}
        P(X \leq Y) 
        = \mathbb{E}\left[\mbone (X \leq Y) \right] 
        = \int_0^{\infty} P(Y \geq x) d F_X(x)=\int_0^{y_{\mathrm{max}}} g(x) d F_X(x),
    \end{equation}
    where we used the independence of $X$ and $Y$. Additionally, we have
    \begin{equation}
        \mathbb{E}\left[X  \mbone (X \leq Y)\right] 
        = \int_0^{\infty} x P(Y \geq x) d F_X(x) 
        =\int_0^{y_{\mathrm{max }}} x g(x) d F_X(x) .
    \end{equation}
    Therefore,
    \begin{equation}
        \mathbb{E}[X \mid X \leq Y]
        = \frac{\mathbb{E}\left[X  \mbone (X \leq Y)\right]}{P(X \leq Y)}
        = \frac{\int_0^{y_{\mathrm{max}}} x g(x) d F_X(x)}{\int_0^{y_{\mathrm{max}}} g(x) d F_X(x)} . \label{eq:tmpeq1_lem_b6}
    \end{equation}
    On the other hand, we can similarly derive
    \begin{equation}
       \mathbb{E}\left[X \mid X \leq y_{\mathrm{max}}\right]=\frac{\int_0^{y_{\mathrm{max}}} x d F_X(x)}{\int_0^{y_{\mathrm{max}}} d F_X(x)}.
       \label{eq:tmpeq2_lem_b6}
    \end{equation}

    Next, let $\nu$ be the finite measure on $\left[0, y_{\mathrm{max}} \right]$ given by $d \nu(x)=d F_X(x)$, and define another measure $\mu$ by
    \begin{equation}
       d \mu(x)=g(x) d \nu(x), \quad 0 \leq x \leq y_{\mathrm{max}} .
    \end{equation}
    Note that $g(x)$ is non-increasing in $x$ and that $x$ is increasing in $x$.
    Thus, Eqs.~\eqref{eq:tmpeq1_lem_b6} and \eqref{eq:tmpeq2_lem_b6} can be rewritten as
    \begin{equation}
      \mathbb{E}[X \mid X \leq Y]=\frac{\int x d \mu(x)}{\int d \mu(x)}, \quad \mathbb{E}\left[X \mid X \leq y_{\mathrm{max}}\right]=\frac{\int x d \nu(x)}{\int d \nu(x)} .
    \end{equation}
    Consider the covariance under the normalized measure proportional to $\nu$ :
    \begin{equation}
        \operatorname{Cov}_\nu(x, g(x)) = \mathbb{E}_{\nu} [X g(X)] - \mathbb{E}_{\nu} [X] \, \mathbb{E}_{\nu} [g(X)] \leq 0 ,
    \end{equation}
    because $x$ is increasing and $g(x)$ is decreasing (Lem.~\ref{lem:negative_covariant}). 
    Therefore,
    \begin{equation}
      (\frac{\int x g(x) d \nu(x)}{\int d \nu(x)}) \leq ( \frac{\int x d \nu(x)}{\int d \nu(x)} ) (\frac{\int g(x) d \nu(x)}{\int d \nu(x)})  .
    \end{equation}
    This is equivalent to
    \begin{equation}
       \frac{\int x g(x) d \nu(x)}{\int g(x) d \nu(x)} \leq \frac{\int x d \nu(x)}{\int d \nu(x)}.
    \end{equation}
    In other words,
    \begin{equation}
        \mathbb{E}[X \mid X \leq Y] \leq \mathbb{E}\left[X \mid X \leq y_{\mathrm{max}}\right].
    \end{equation}
    This concludes the proof.
\end{proof}

\begin{lemma} \label{lem:negative_covariant}
    Let $X$ be a real random variable, $f$ an increasing functions, and $h$ and non-increasing function. Then, it holds that 
    $\mathrm{Cov}(f(X), h(X)) \leq 0$.
\end{lemma}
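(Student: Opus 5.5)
The cleanest route is through an independent copy, which sidesteps any regularity issues. Let $X'$ be an independent copy of $X$. I want to show
\begin{align}
\mbe[f(X)h(X)] \leq \mbe[f(X)]\,\mbe[h(X)],
\end{align}
assuming, as is implicit in the statement, that $f(X)$, $h(X)$ and $f(X)h(X)$ are integrable. In our application $f(x)=x$ and $h=g$ is bounded on a bounded support, so this holds.

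The key observation is pointwise. For any reals $x,x'$, the quantity
\begin{align}
D(x,x') := \bigl(f(x)-f(x')\bigr)\bigl(h(x)-h(x')\bigr)
\end{align}
is never positive. If $x\geq x'$, then $f(x)-f(x')\geq 0$ and $h(x)-h(x')\leq 0$. If $x<x'$, then both inequalities reverse. Either way the product is $\leq 0$. Monotonicity alone suffices here: strictness of $f$ is not needed, and neither is continuity.

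Taking expectations, $\mbe[D(X,X')]\leq 0$. Expanding the product and using that $X$ and $X'$ are i.i.d. gives
\begin{align}
\mbe[D(X,X')] = 2\,\mbe[f(X)h(X)] - 2\,\mbe[f(X)]\,\mbe[h(X)] = 2\,\mathrm{Cov}(f(X),h(X)).
\end{align}
The cross terms factor as $\mbe[f(X)h(X')]=\mbe[f(X)]\,\mbe[h(X)]$ by independence. Dividing by $2$ yields $\mathrm{Cov}(f(X),h(X))\leq 0$, which is the lemma.

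The only technical point is justifying the expansion, namely that each of the four terms is finite. This follows from the integrability assumption together with independence. In the use within Lemma~\ref{lem:ARL_last_inequality}, $X$ is distributed according to the normalized measure $\nu/\nu([0,y_{\max}])$. This normalization requires $\nu([0,y_{\max}])>0$, which holds whenever the conditional expectations there are defined. So the argument applies directly in that setting.
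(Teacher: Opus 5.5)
Your proof is correct and follows essentially the same route as the paper. Both use an independent copy $X'$, the identity $\mathrm{Cov}(f(X),h(X))=\tfrac{1}{2}\mbe[(f(X)-f(X'))(h(X)-h(X'))]$, and the pointwise non-positivity of that product by monotonicity; your remarks on integrability and on needing only non-decreasing $f$ are refinements the paper leaves implicit.
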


\begin{proof}
    Take two i.i.d. copies, $X$ and $X^{\prime}$, with the same distribution. Let us consider the random quantity
    \begin{equation}
        \left(f(X)-f\left(X^{\prime}\right)\right)\left(h(X)-h\left(X^{\prime}\right)\right) .
    \end{equation}
    Expanding its expectation, we have
    \begin{align}
        \mathbb{E}\left[\left(f(X)-f\left(X^{\prime}\right)\right)\left(h(X)-h\left(X^{\prime}\right)\right)\right]
        = & \mathbb{E}[f(X) h(X)]+\mathbb{E}\left[f\left(X^{\prime}\right) h\left(X^{\prime}\right)\right] \nn
        & -  \mathbb{E}\left[f(X) h\left(X^{\prime}\right)\right]-\mathbb{E}\left[f\left(X^{\prime}\right) h(X)\right] \\
        = &2 \mathbb{E}[f(X) h(X)]-2 \mathbb{E}[f(X)] \mathbb{E}[h(X)] \\
        = &2 \mathrm{Cov}(f(X), h(X)) .
    \end{align}
    Thus, we have the identity
    \begin{equation}
        \operatorname{Cov}(f(X), h(X))=\frac{1}{2} \mathbb{E}\left[\left(f(X)-f\left(X^{\prime}\right)\right)\left(h(X)-h\left(X^{\prime}\right)\right)\right] . \label{eq:covariant_identity}
    \end{equation}
    
    Now use the fact that: $f$ is increasing, and $h$ is non-increasing.
    Fix any $x, y$ in the support. Then:
    \begin{itemize}
        \item If $x>y$, then $f(x)>f(y)$ and $h(x) \leq h(y)$, so $(f(x)-f(y))>0$ and $(h(x)-h(y)) \leq 0$, hence $(f(x)-f(y))(h(x)-h(y)) \leq 0$. 
        \item If $x<y$, then $f(x)<f(y)$ and $h(x) \geq h(y)$, so $(f(x)-f(y))<0$ and $(h(x)-h(y)) \geq 0$, again $(f(x)-f(y))(h(x)-h(y)) \leq 0$.
        \item If $x=y$, the product is $0$. 
    \end{itemize}
    Thus, we have
    \begin{equation}
        (f(x)-f(y))(h(x)-h(y)) \leq 0 \text { for all } x, y .
    \end{equation}
    
    Therefore, when $X$ and $X^{\prime}$ are i.i.d.,
    \begin{equation}
       \left(f(X)-f\left(X^{\prime}\right)\right)\left(h(X)-h\left(X^{\prime}\right)\right) \leq 0 \text { a.s. }
    \end{equation}
    Taking expectations,
    \begin{equation}
      \mathbb{E}\left[\left(f(X)-f\left(X^{\prime}\right)\right)\left(h(X)-h\left(X^{\prime}\right)\right)\right] \leq 0 .
    \end{equation}   
    Plug this back into Eq.~\eqref{eq:covariant_identity}, we have
    \begin{equation}
       \operatorname{Cov}(f(X), h(X))=\frac{1}{2} \mathbb{E}\left[\left(f(X)-f\left(X^{\prime}\right)\right)\left(h(X)-h\left(X^{\prime}\right)\right)\right] \leq 0.
    \end{equation}  
\end{proof}
    
\subsection{Proof of Theorem \ref{thm:Truncation bias bounds for KM-ADD}} \label{app:Proof of B_TR for KM-ADD}

\begin{theorem}[Truncation bias bound for KM-ADD] \label{thm:Truncation bias bounds for KM-ADD (Appendix)}
    Define the truncation bias of the LB-ADD as $\mathcal{B}_\mathrm{TR}(\hat{M}_T^\mathrm{(LB)}) := M_T^\mathrm{(LB)} - M_\infty$, 
    where $M_T^\mathrm{(LB)} := \mbe[ \Delta \tau \mid \nu < \infty, 0 \leq \Delta \tau \leq \Delta T ]$ is the true ADD under random sequence lengths.
    For $b = \Delta T_\mathrm{max}^*$, we have
    $\mathcal{B}_\mathrm{TR}(\hat{M}_T^\mathrm{(LB)}) \leq \mathcal{B}_\mathrm{TR}(\hat{M}_T^\mathrm{(KM)}) \leq 0$, under the independent censoring assumption in Thm.~\ref{thm:Finite-sample bias bounds for KM-ADD}.
\end{theorem}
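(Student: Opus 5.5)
The plan is to mirror the proof of Thm.~\ref{thm:Truncation bias bounds for KM-ARL (Appendix)}, replacing $\tau$ by $\Delta\tau$, $T$ by $\Delta T$, and the conditioning event $\{\nu=\infty\}$ by $\mathcal{E}:=\{\nu<\infty, \Delta\tau\geq 0\}$. Throughout, write $S^\mathrm{ADD}(t)=P(\Delta\tau>t\mid\mathcal{E})$, so that $M_\infty=\mbe[\Delta\tau\mid\mathcal{E}]$, $M_T^\mathrm{(KM)}=\int_0^{b}S^\mathrm{ADD}(t)\,dt$, and $b=\Delta T_\mathrm{max}^*$.

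\textbf{Upper bound.} By Fubini, exactly as for the ARL, $\int_0^\infty S^\mathrm{ADD}(t)\,dt=\mbe[\Delta\tau\mid\mathcal{E}]$. Hence
\[
\mathcal{B}_\mathrm{TR}(\hat{M}_T^\mathrm{(KM)})=-\int_{b}^\infty S^\mathrm{ADD}(t)\,dt\leq 0 .
\]

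\textbf{Lower bound: decomposition.} Split the inner integral $\mbe[\int_0^{b}\mbone(\Delta\tau>t)\,dt\mid\mathcal{E}]$ on the events $\{\Delta\tau\leq b\}$ and $\{\Delta\tau>b\}$. On the first event the integral equals $\Delta\tau$; on the second it equals $b$. Writing $p:=P(\Delta\tau>b\mid\mathcal{E})$, this gives
\[
M_T^\mathrm{(KM)}=\mbe[\Delta\tau\mid\mathcal{E},\Delta\tau\leq b]+p\,\bigl(b-\mbe[\Delta\tau\mid\mathcal{E},\Delta\tau\leq b]\bigr).
\]
The second term is nonnegative, because $\Delta\tau\leq b$ on the conditioning event. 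It therefore suffices to prove
\[
\mbe[\Delta\tau\mid\mathcal{E},\Delta\tau\leq b]\;\geq\;\mbe[\Delta\tau\mid\mathcal{E},\Delta\tau\leq\Delta T]=M_T^\mathrm{(LB)},
\]
since $\mathcal{B}_\mathrm{TR}(\hat{M}_T^\mathrm{(KM)})-\mathcal{B}_\mathrm{TR}(\hat{M}_T^\mathrm{(LB)})=M_T^\mathrm{(KM)}-M_T^\mathrm{(LB)}$.

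\textbf{Lower bound: the key inequality.} This is exactly the setting of Lem.~\ref{lem:ARL_last_inequality}, simplified to Eq.~\eqref{eq:tmpineq_simplified_ineq}. Take $X$ to be $\Delta\tau$ conditioned on $\mathcal{E}$, which is nonnegative, and $Y=\Delta T$ with $y_\mathrm{max}=b$. The independent censoring assumption of Thm.~\ref{thm:Finite-sample bias bounds for KM-ADD} supplies the independence of $X$ and $Y$. The definition of $\Delta T_\mathrm{max}^*$ gives $\Delta T\leq b$ almost surely. The lemma's argument then applies verbatim: $g(x)=P(Y\geq x)$ is non-increasing, and Lem.~\ref{lem:negative_covariant} yields $\mbe[X\mid X\leq Y]\leq\mbe[X\mid X\leq y_\mathrm{max}]$.

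\textbf{Main obstacle.} The hardest step is checking the hypotheses of Lem.~\ref{lem:ARL_last_inequality} in the ADD setting, where $\Delta T=T-\nu$ could a priori be negative and depends on $\nu$.

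\emph{Sign of $\Delta T$.} On sequences that actually contain their changepoint, $\Delta T\geq 0$. If not, $Y<0$ simply forces $g(x)=1$ on the relevant range, and the lemma's proof still goes through, since it only needs $g$ non-increasing on $[0,y_\mathrm{max}]$.

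\emph{Independence.} The needed independence is stated jointly under the conditioning on $\mathcal{E}$, and the assumption must be interpreted that way, as in App.~\ref{app:Proof of B_FS for KM-ADD}.

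\emph{Well-definedness.} Nondegeneracy requires $P(X\leq Y)>0$; this is implicit in $M_T^\mathrm{(LB)}$ being defined.

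Once these points are checked, the two inequalities combine to give $\mathcal{B}_\mathrm{TR}(\hat{M}_T^\mathrm{(LB)})\leq\mathcal{B}_\mathrm{TR}(\hat{M}_T^\mathrm{(KM)})\leq 0$.
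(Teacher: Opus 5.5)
Your proposal is correct and follows the paper's proof essentially step for step: the Fubini identity giving $\mathcal{B}_\mathrm{TR}(\hat{M}_T^\mathrm{(KM)})=-\int_b^\infty S^\mathrm{ADD}(t)\,dt\le 0$, the same split of $M_T^\mathrm{(KM)}$ on $\{\Delta\tau\le b\}$ versus $\{\Delta\tau>b\}$, and the reduction of the remaining comparison to Lem.~\ref{lem:ARL_last_inequality} via independent censoring, exactly as in Lem.~\ref{lem:ADD_last_inequality}. One small slip in your aside on the sign of $\Delta T$: negative values of $Y$ do not force $g(x)=1$; they contribute nothing to $g(x)=P(Y\ge x)$ for $x\ge 0$, but since the argument only uses that $g$ is non-increasing, the conclusion is unaffected.
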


\begin{proof}
    Recall that 
    \begin{align}
        &\mathcal{B}_\mathrm{TR}(\hat{M}_T^\mathrm{(KM)}) =  M_T^{\mathrm{(KM)}} - M_\infty \nn
        =& \int_0^{\Delta T_\mathrm{max}^*} S^\mathrm{ADD}(t) dt
        - \mbe[\Delta\tau \mid \nu < \infty, \Delta \tau \geq 0] \\
        &\mathcal{B}_\mathrm{TR}(\hat{M}_T^\mathrm{(LB)}) =  M_T^{\mathrm{(LB)}} - M_\infty \nn
        =& \mbe[\Delta\tau \mid \nu < \infty,0 \leq \Delta \tau \leq \Delta T]
        - \mbe[\Delta\tau \mid \nu < \infty,  \Delta \tau \geq 0].
    \end{align}
    
    First, since 
    \begin{align}
        &\int_0^\infty S^\mathrm{ADD}(t)  dt \\
        =& \int_0^\infty P(\Delta \tau > t \mid \nu < \infty, \Delta \tau \geq 0) dt \\
        =& \int_0^\infty  \mbe[\mbone(\Delta \tau > t) \mid \nu < \infty, \Delta \tau \geq 0] dt \\
        =& \mbe[\int_0^\infty \mbone(\Delta\tau > t) dt \mid \nu < \infty, \Delta \tau \geq 0] \\
        =& \mbe[\Delta \tau \mid \nu < \infty, \Delta \tau \geq 0]  ,
    \end{align}
    we have 
    $\mathcal{B}_\mathrm{TR}(\hat{M}_T^\mathrm{(KM)}) = - \int_{\Delta T_\mathrm{max}^*}^\infty S^\mathrm{ADD}(t) \leq 0$.    

    Second, we will show that $\mathcal{B}_\mathrm{TR}(\hat{M}_T^\mathrm{(LB)}) \leq \mathcal{B}_\mathrm{TR}(\hat{M}_T^\mathrm{(KM)})$.
    Since
    \begin{align}
        &M_T^\mathrm{(KM)} \nn 
        =& \int_0^{\Delta T_\mathrm{max}^*} S^\mathrm{ADD}(t) dt \\
        =&  \int_0^{\Delta T_\mathrm{max}^*}  P(\Delta \tau > t \mid \nu < \infty, \Delta \tau \geq 0) dt\\
        =&  \int_0^{\Delta T_\mathrm{max}^*} \mbe[\mbone(\Delta \tau > t) \mid \nu < \infty, \Delta \tau \geq 0] dt\\
        =&  \mbe [\int_0^{\Delta T_\mathrm{max}^*} \mbone(\Delta \tau > t) dt \mid \nu < \infty, \Delta \tau \geq 0] \\
        =&  P(\Delta \tau \leq \Delta T_\mathrm{max}^* \mid \nu < \infty, \Delta \tau \geq 0) \nn
            &\times \mbe[ \int_0^{\Delta T_\mathrm{max}^*} \mbone(\Delta \tau > t)  dt \mid \nu < \infty,  0 \leq \Delta \tau \leq \Delta T_\mathrm{max}^* ] \nn
            &+  P(\Delta \tau > \Delta T_\mathrm{max}^* \mid \nu < \infty, \Delta \tau \geq 0) \nn
            &\times \mbe[ \int_0^{\Delta T_\mathrm{max}^*} \mbone(\Delta \tau > t)  dt \mid \nu < \infty,  0 \leq \Delta \tau > \Delta T_\mathrm{max}^*] \\
        =&  P(\Delta \tau \leq \Delta T_\mathrm{max}^* \mid \nu < \infty , \Delta \tau \geq 0) \nn
            &\times \mbe[ \Delta \tau \mid \nu < \infty, 0 \leq  \Delta \tau \leq \Delta T_\mathrm{max}^*] \nn
            &+  P(\Delta \tau > \Delta T_\mathrm{max}^* \mid \nu < \infty, \Delta \tau \geq 0 ) \nn
            &\times \mbe[ \Delta T_\mathrm{max}^*  \mid \nu < \infty, 0 \leq \Delta \tau > \Delta T_\mathrm{max}^*] 
    \end{align}
    \begin{align}
        =& P(\Delta \tau \leq \Delta T_\mathrm{max}^* \mid \nu < \infty, \Delta \tau \geq 0) \nn 
            &\times \mbe[ \Delta \tau \mid \nu < \infty, 0 \leq \Delta \tau \leq \Delta T_\mathrm{max}^*] \nn
            &+  P(\Delta \tau > \Delta T_\mathrm{max}^* \mid \nu < \infty) 
            \Delta T_\mathrm{max}^*  \\
        =& (1 - P(\Delta \tau > \Delta T_\mathrm{max}^* \mid \nu < \infty, \Delta \tau \geq 0)) \nn 
            &\times \mbe[ \Delta \tau \mid \nu < \infty, 0 \leq \Delta \tau \leq \Delta T_\mathrm{max}^*] \nn
            &+  P(\Delta \tau > \Delta T_\mathrm{max}^* \mid \nu < \infty, \Delta \tau \geq 0) 
            \Delta T_\mathrm{max}^* \\
        =& \mbe[ \Delta \tau \mid \nu < \infty, 0 \leq \Delta \tau \leq \Delta T_\mathrm{max}^*] \nn
            &+  P(\Delta \tau > \Delta T_\mathrm{max}^* \mid \nu < \infty, \Delta \tau \geq 0) \nn 
            &\times (\Delta T_\mathrm{max}^* - \mbe[ \Delta \tau \mid \nu < \infty,  0 \leq \Delta \tau \leq \Delta T_\mathrm{max}^*])
    \end{align}
    we have 
    \begin{align}
        &\mathcal{B}_\mathrm{TR}(\hat{M}_T^\mathrm{(KM)}) - \mathcal{B}_\mathrm{TR}(\hat{M}_T^\mathrm{(LB)}) \nn
        =&  M_T^{\mathrm{(KM)}}  - M_T^{\mathrm{(LB)}} \\
        =& \mbe[ \Delta \tau \mid \nu < \infty, 0 \leq  \Delta \tau \leq \Delta T_\mathrm{max}^*] \nn
            &+  P(\Delta \tau > \Delta T_\mathrm{max}^* \mid \nu < \infty, \Delta \tau \geq 0) \nn
            &\times (\Delta T_\mathrm{max}^* - \mbe[ \Delta \tau \mid \nu < \infty, 0 \leq  \Delta \tau \leq \Delta T_\mathrm{max}^*]) \nn
            &- \mbe[\Delta \tau \mid \nu < \infty, 0 \leq \Delta \tau \leq \Delta T] \\
        =& \mbe[ \Delta \tau \mid \nu < \infty, 0 \leq \Delta \tau \leq \Delta T_\mathrm{max}^*] \nn
            &- \mbe[\Delta \tau \mid \nu < \infty, 0 \leq \Delta \tau \leq \Delta T] \nn
            &+  P(\Delta \tau > \Delta T_\mathrm{max}^* \mid \nu < \infty, \Delta \tau \geq 0) \nn 
            &\times (\Delta T_\mathrm{max}^* - \mbe[ \Delta \tau \mid \nu < \infty, 0 \leq  \Delta \tau \leq \Delta T_\mathrm{max}^*]). 
    \end{align}
        Finally, we can see that 
        $\mbe[ \Delta \tau \mid \nu < \infty, 0 \leq \Delta \tau \leq \Delta  T_\mathrm{max}^*] 
        - \mbe[\Delta \tau \mid \nu < \infty, 0 \leq \Delta \tau \leq \Delta T] 
        \geq 0$ (Lem.~\ref{lem:ADD_last_inequality}), and
        $\Delta T_\mathrm{max}^* - \mbe[ \Delta \tau \mid \nu < \infty, 0 \leq \Delta \tau \leq \Delta T_\mathrm{max}^*] \geq 0$;
        hence, $\mathcal{B}_\mathrm{TR}(\hat{M}_T^\mathrm{(KM)}) - \mathcal{B}_\mathrm{TR}(\hat{M}_T^\mathrm{(LB)}) \geq 0$.

\end{proof}

\begin{lemma} \label{lem:ADD_last_inequality}
    Under the independent censoring assumption, it holds that
    $\mbe[ \Delta \tau \mid \nu < \infty, 0 \leq \Delta \tau \leq \Delta  T_\mathrm{max}^*] 
    - \mbe[\Delta \tau \mid \nu < \infty, 0 \leq \Delta \tau \leq \Delta T] 
    \geq 0$.
\end{lemma}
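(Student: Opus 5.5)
The plan is to reduce the statement to the abstract inequality \eqref{eq:tmpineq_simplified_ineq} already established in the proof of Lem.~\ref{lem:ARL_last_inequality}, exactly as was done for the ARL case. Throughout we work conditionally on the event $\{\nu<\infty, \Delta\tau\geq 0\}$, the population on which the KM-ADD is computed. Under this conditioning we set $X := \Delta\tau \geq 0$ and $Y := \Delta T = C^\mathrm{ADD}$, and $y_{\mathrm{max}} := \Delta T_\mathrm{max}^*$. By definition of $\Delta T_\mathrm{max}^*$ as the least upper bound of the support of the CDF of $\Delta T$, we have $Y \leq y_{\mathrm{max}}$ almost surely. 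Since the sequence contains the changepoint frame, we also have $T \geq \nu$ and hence $Y \geq 0$.

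The key input is the independent censoring assumption (ii) of Thm.~\ref{thm:Finite-sample bias bounds for KM-ADD}. It says that, under the conditional law given $\{\nu<\infty,\Delta\tau\ge 0\}$, $\Delta\tau$ and $C^\mathrm{ADD}=\Delta T$ are independent. This is precisely the independence of $X$ and $Y$ required in \eqref{eq:tmpineq_simplified_ineq}.

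With this identification, the two conditional expectations match the two sides of the abstract inequality:
\begin{align}
\mbe[\Delta\tau \mid \nu<\infty, 0\le \Delta\tau\le \Delta T_\mathrm{max}^*] &= \mbe[X\mid X\le y_{\mathrm{max}}], \nonumber \\
\mbe[\Delta\tau \mid \nu<\infty, 0\le\Delta\tau\le \Delta T] &= \mbe[X\mid X\le Y]. \nonumber
\end{align}
The second line holds because the event $\{\nu<\infty, 0\le\Delta\tau\le\Delta T\}$ equals $\{\nu<\infty,\Delta\tau\ge0\}\cap\{X\le Y\}$, so conditioning first on $\{\nu<\infty,\Delta\tau\ge 0\}$ and then on $\{X\le Y\}$ gives the same quantity. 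The first line follows in the same way.

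I then rerun the argument of Lem.~\ref{lem:ARL_last_inequality} verbatim with $F_X = F^\mathrm{ADD}$ and $g(x)=P(Y\ge x)$, which is non-increasing and vanishes beyond $y_{\mathrm{max}}$. Independence gives
\begin{align}
\mbe[X\mid X\le Y]=\frac{\int_0^{y_{\mathrm{max}}} x g(x)\,dF_X(x)}{\int_0^{y_{\mathrm{max}}} g(x)\,dF_X(x)}. \nonumber
\end{align}
Lem.~\ref{lem:negative_covariant}, applied under the normalized restriction of $F_X$ to $[0,y_{\mathrm{max}}]$, gives nonpositive covariance between $x$ and $g(x)$. This yields $\mbe[X\mid X\le Y]\le \mbe[X\mid X\le y_{\mathrm{max}}]$, which is the claim.

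The main obstacle is the identification step rather than the inequality itself. One must check that the conditioning events line up, and that the independence needed is exactly the conditional independence postulated in assumption (ii), not an unconditional one. Indeed, $\Delta\tau$ and $\Delta T$ both involve $\nu$ and are generally dependent unconditionally; this is why the lemma is stated "under the independent censoring assumption". A minor technical point is nondegeneracy: we need $P(X\le Y)>0$ so that the conditional expectations are defined, and we assume this as part of the paper's standing finiteness assumptions.
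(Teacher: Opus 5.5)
Your proposal is correct and follows essentially the same route as the paper. Both arguments work on the subpopulation $\{\nu<\infty,\Delta\tau\ge 0\}$, set $X=\Delta\tau$, $Y=\Delta T$, $y_{\mathrm{max}}=\Delta T_\mathrm{max}^*$, take the independence of $X$ and $Y$ from assumption (ii), and reuse the covariance argument of Lemma~\ref{lem:ARL_last_inequality} verbatim. Your explicit matching of the conditioning events and the nondegeneracy requirement $P(X\le Y)>0$ are details the paper leaves implicit, and they tighten the argument slightly.
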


\begin{proof}
    Without loss of generality, this inequality is equivalent to
    $\mathbb{E}[ \Delta \tau \mid 0 \leq \Delta \tau \leq \Delta  T_\mathrm{max}^*] - \mathbb{E}[\Delta \tau \mid 0 \leq \Delta \tau \leq \Delta T] \geq 0$, where $\Delta T \leq \Delta T_{\mathrm{max}}^{*} (\in \mathbb{R})$ holds a.s. $\Delta \tau$ and $\Delta T (= C^{\mathrm{ADD}})$ are independent because of the independent censoring assumption.
    This inequality can be rewritten as
    \begin{align}
        \mathbb{E}_{X,Y} [X | X \leq y_{\mathrm{max}}] - \mathbb{E}_{X,Y} [X | X \leq Y] \geq 0 ,  
        \label{eq:tmp113755}
    \end{align}
    where $X$ is a non-negative random variable, $Y$ is a non-negative random variable with $0 \leq Y \leq y_{\mathrm{max}}$ a.s., $y_{\mathrm{max}} (\geq 0)$ is a real constant, and $X$ and $Y$ are independent. Because Ineq.~\eqref{eq:tmp113755} coincides with Ineq.~\eqref{eq:tmpineq_simplified_ineq}, the proof follows the same steps as Lem.~\ref{lem:ARL_last_inequality}.
\end{proof}

\clearpage
\section{Supplementary Experimental Results} \label{app:Supplementary Experimental Results}


\subsection{Example of Survival Curve of Detection Points} \label{app:Survival Curve Figure}
\begin{figure}[htbp]
    \centering
    \includegraphics[width=0.7\columnwidth]{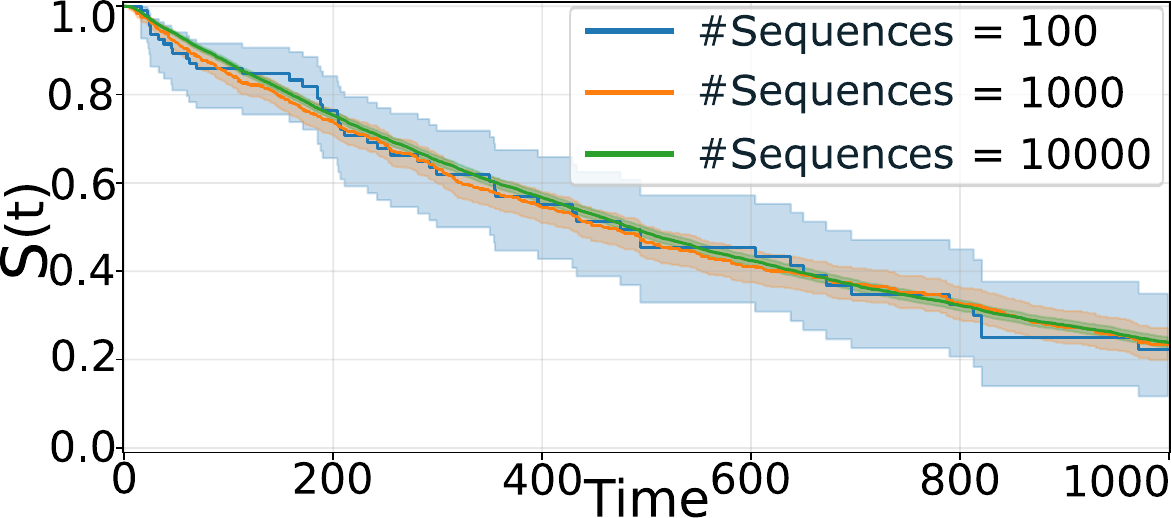} 
    \caption{\textbf{Estimated survival functions of detection points.} The shaded area represents the standard error of the mean.
    }
    \label{fig:survival_curve}
\end{figure}

Survival functions of detection points are shown in Fig.~\ref{fig:survival_curve}. 
The experiment uses the Gaussian process dataset at three dataset sizes. 
The QCD model used is the GSR procedure with ground-truth statistics, evaluated under a given threshold.
Changepoint locations are sampled from a geometric distribution with the success probability of $p=0.001$.
Error bars represent the standard error of the mean.

\begin{figure*}[tbp]
    \centering
    \includegraphics[width=0.8\textwidth]{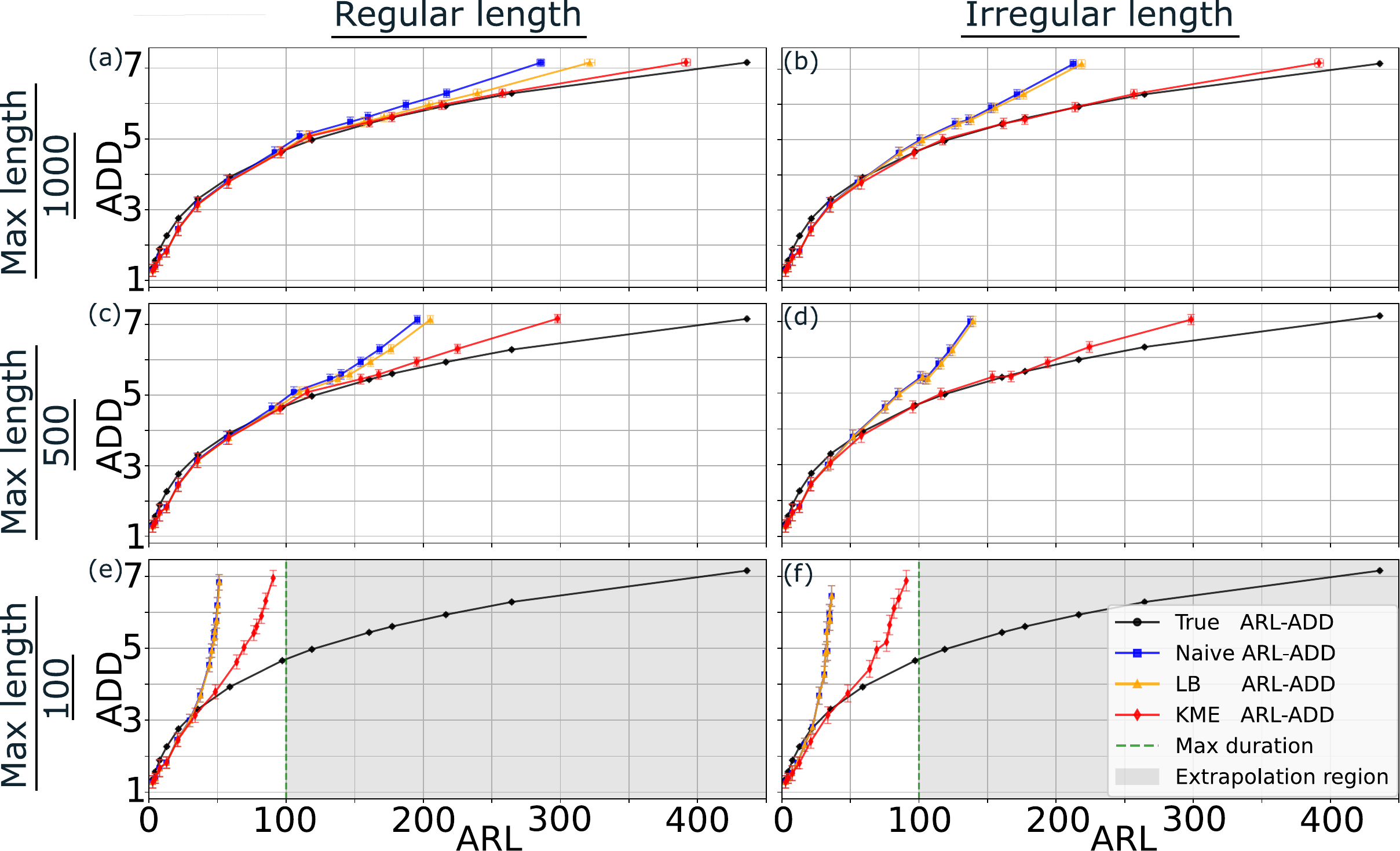} 
    \caption{
        \textbf{Complete ablation study of Fig.~\ref{fig:ARLADDcurve_Gauss_main}.} 
        The KM-ARL and KM-ADD provide more accurate estimates of the true ARL-ADD curve, even when sequence lengths are limited and irregular. 
        The experiments use the Gaussian process dataset, comprising $10000$ sequences. 
        The employed QCD algorithm is the GSR procedure using ground-truth statistics. 
        Changepoint locations are sampled uniformly.
        $50\%$ of sequences contain a changepoint.
        Error bars represent the standard error of the mean.
        The gray areas indicate regions of extrapolation (excluding the true ARL), where ARLs cannot be estimated due to the absence of data, unless additional assumptions are imposed on the underlying distribution.
        \textbf{        
                \textbf{(a)} Sequence length is $1000$.
                \textbf{(b)} Sequence lengths vary irregularly in the range $[100, 1000]$.
                \textbf{(c)} Sequence length is $500$.
                \textbf{(d)} Sequence lengths vary irregularly in the range $[50, 500]$.
                \textbf{(e)} Sequence length is $100$.
                \textbf{(f)} Sequence lengths vary irregularly in the range $[10, 100]$.}
    }
    \label{fig:ARLADD_Gauss_appendix}
\end{figure*}

\subsection{Complete Ablation of Fig.~\ref{fig:ARLADDcurve_Gauss_main}} \label{app:Complete Ablation}
Fig.~\ref{fig:ARLADD_Gauss_appendix} presents the complete ablation study corresponding to Fig.~\ref{fig:ARLADDcurve_Gauss_main}, demonstrating that KM-ARL and KM-ADD yield more accurate estimates of the true ARL-ADD curve, even under limited and irregular sequence lengths. The gray areas indicate regions of extrapolation (excluding the true ARL), where ARLs cannot be estimated due to the absence of data, unless additional assumptions are imposed on the underlying distribution.

\subsection{Gaussian Process Dataset} \label{app:Gaussian Process Dataset}
We provide additional experimental results on the Gaussian process dataset.
Fig.~\ref{fig:RawData_Gauss_appendix} is an example sequence.
Fig.~\ref{fig:Statistics_Gauss_appendix} shows temporal statistics of a sequence in our Gaussian process dataset.
Fig.~\ref{fig:ARLADDcurve_Gauss_cpmodel_appendix} presents the ARL-ADD tradeoff curves of the QCD models evaluated on a Gaussian process dataset, while we use the WISDM Actitracker dataset in Fig.~\ref{fig:ARLADDcurve_WISDM_cpmodel_main} in the main text.
It demonstrates that our KM-ARL and KM-ADD are more robust to irregular lengths than the conventional LB-ARL and LB-ADD. 
Note that some curves in Fig.~\ref{fig:ARLADDcurve_Gauss_cpmodel_appendix}(a) are non-monotonic because the sequences used to compute the LB-ARL and LB-ADD differ drastically across different thresholds, causing unstable estimates. For example, short sequences are excluded from the computation of the LB metrics when the threshold is high. This issue does not arise for our KME-based metrics, contributing to their robustness to finite and irregular sequence lengths.
\begin{figure}[tbp]
    \centering
    \includegraphics[width=0.9\columnwidth]{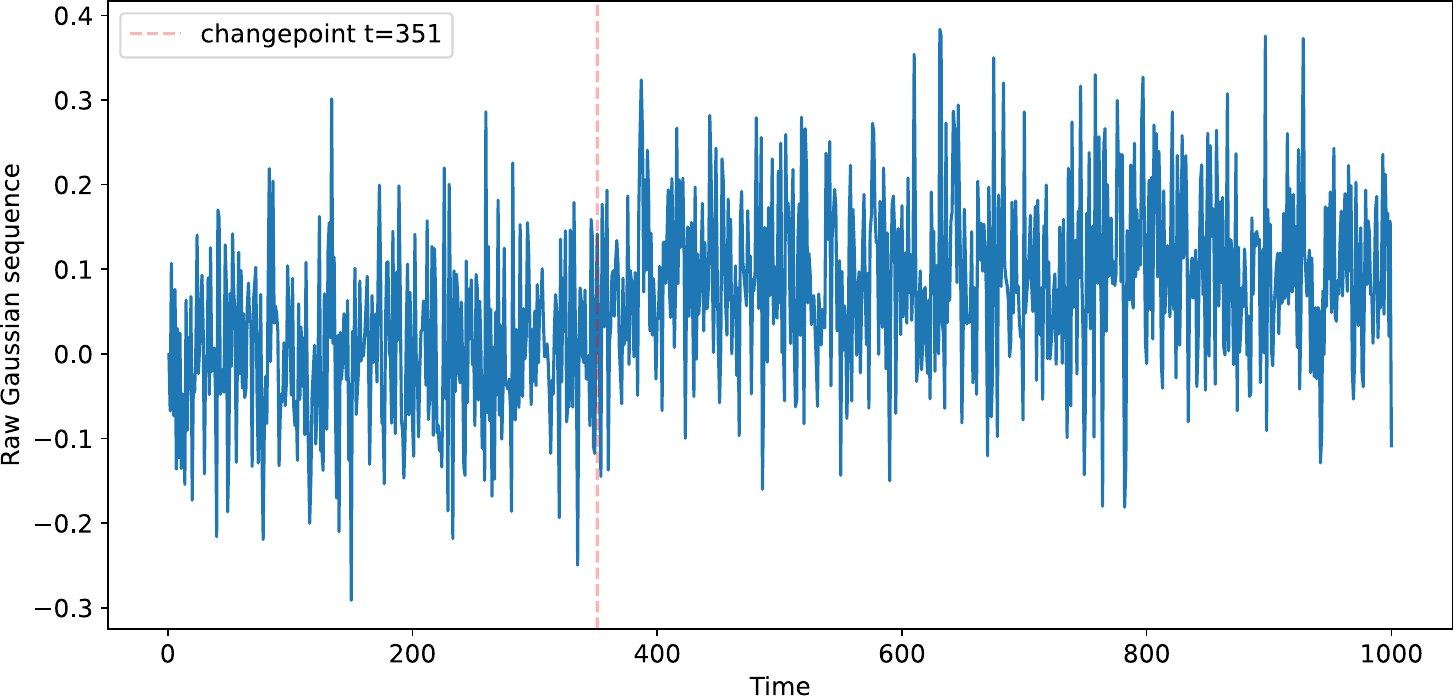} 
    \caption{\textbf{Example sequence in Gaussian process dataset.}}
    \label{fig:RawData_Gauss_appendix}
\end{figure}
\begin{figure}[tbp]
    \centering
    \includegraphics[width=0.9\columnwidth]{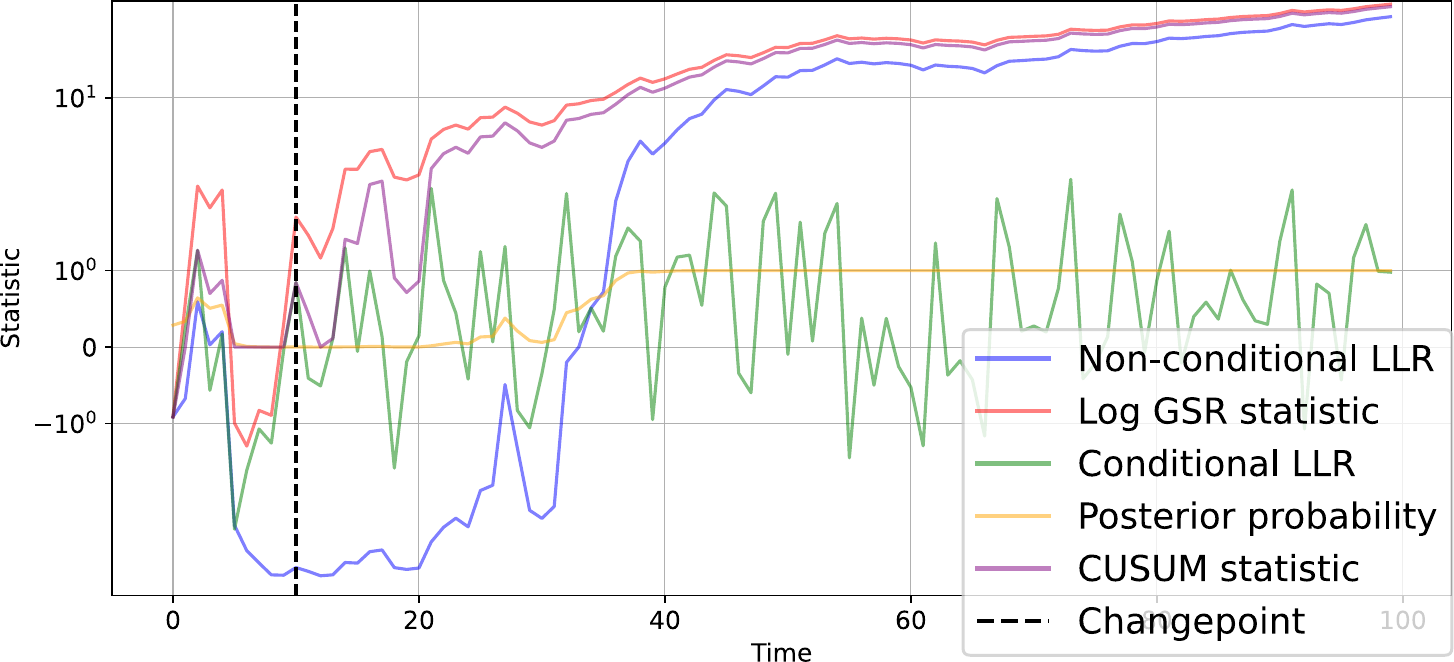} 
    \caption{
        \textbf{Example statistics in Gaussian process dataset.} 
        The observation interval is $\Delta t = 1$.
        LLR is short for log-likelihood ratio $\log ( f(X^{(0,t)}) / g(X^{(0,t)}) )$, where $f$ and $g$ are the pre- and post-change density, respectively.
        Non-conditional LLR is given by $\log ( f(X^{(0,t)}) / g(X^{(0,t)}) )$, while Conditional LLR is given by $\log ( f( X^{(t)} \mid X^{(0,t-1)}) ) / g( X^{(t)} \mid X^{(0,t-1)}) )$.
        Posterior probability means $g(X^{(0,t)})$.
    }
    \label{fig:Statistics_Gauss_appendix}
\end{figure}
\begin{figure}[tbp]
    \centering
    \includegraphics[width=0.9\columnwidth]{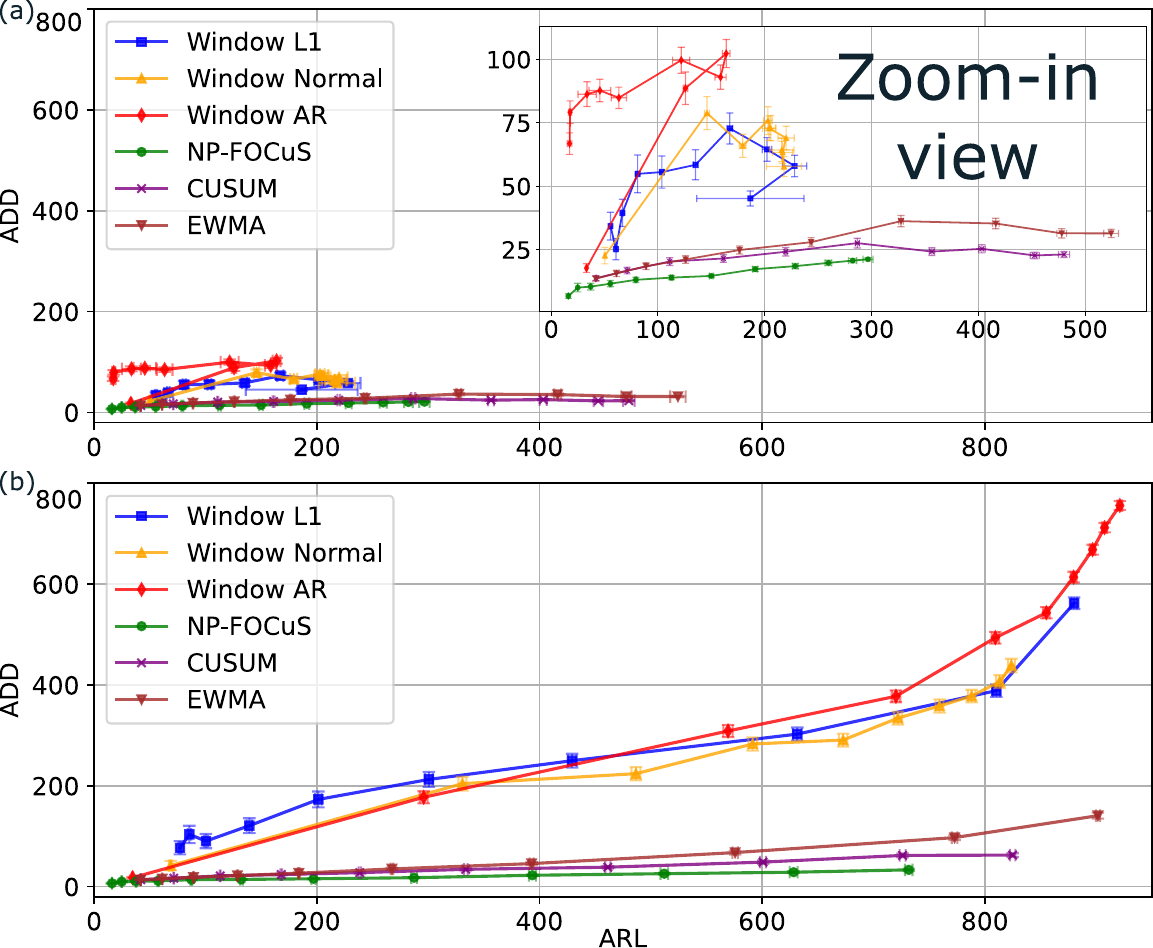} 
    \caption{
        \textbf{Evaluation of QCD models on Gaussian process dataset.} 
        \textbf{(a) LB-ARL and LB-ADD.}
        \textbf{(b) KM-ARL and KM-ADD.}
        Our KM-ARL and KM-ADD are more robust to irregular lengths than the conventional LB-ARL and LB-ADD. 
        The experiments use the Gaussian process dataset, comprising $10000$ sequences. 
        Changepoint locations are sampled uniformly.
        $50\%$ of the sequences contain a changepoint.
        Sequence lengths vary irregularly in the range $[100, 1000]$.
        Note that some curves in (a) are non-monotonic because the sequences used to compute the LB-ARL and LB-ADD differ drastically across different thresholds, causing unstable estimates. For example, short sequences are excluded from the computation of the LB metrics when the threshold is high. This issue does not arise for our KME-based metrics, contributing to their robustness to finite and irregular sequence lengths.
    }
    \label{fig:ARLADDcurve_Gauss_cpmodel_appendix}
\end{figure}

\subsection{Poisson Process Dataset} \label{app:Poisson Process Dataset}
We provide experimental results on the Poisson process dataset.
The pre-change and post-change Poisson processes have the mean of $1$ and $4$, respectively.
We use two types of changepoint distributions: geometric and uniform.
Fig.~\ref{fig:RawData_Gauss_appendix} is an example sequence from our Poisson process dataset.
Fig.~\ref{fig:Statistics_Poisson_appendix} shows statistics of a sequence in our Poisson process dataset.
Fig.~\ref{fig:ARLADDcurve_Poisson_appendix} presents the ARL-ADD curve evaluated on the Poisson process dataset, demonstrating that the KM-ARL and KM-ADD provide more accurate estimates of the true ARL-ADD curve, even when sequence lengths are limited and irregular.

\begin{figure}[htbp]
    \centering
    \includegraphics[width=0.8\columnwidth]{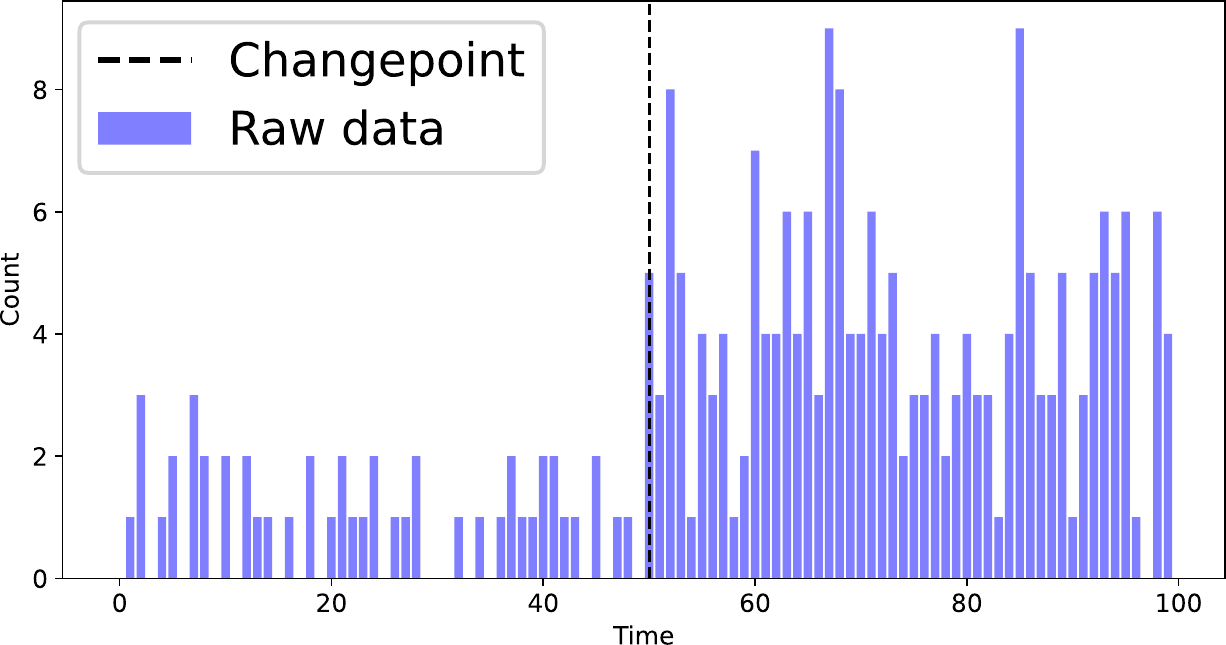} 
    \caption{\textbf{Example sequence in Poisson process dataset.}}
    \label{fig:RawData_Poisson_appendix}
\end{figure}

\begin{figure}[htbp]
    \centering
    \includegraphics[width=0.8\columnwidth]{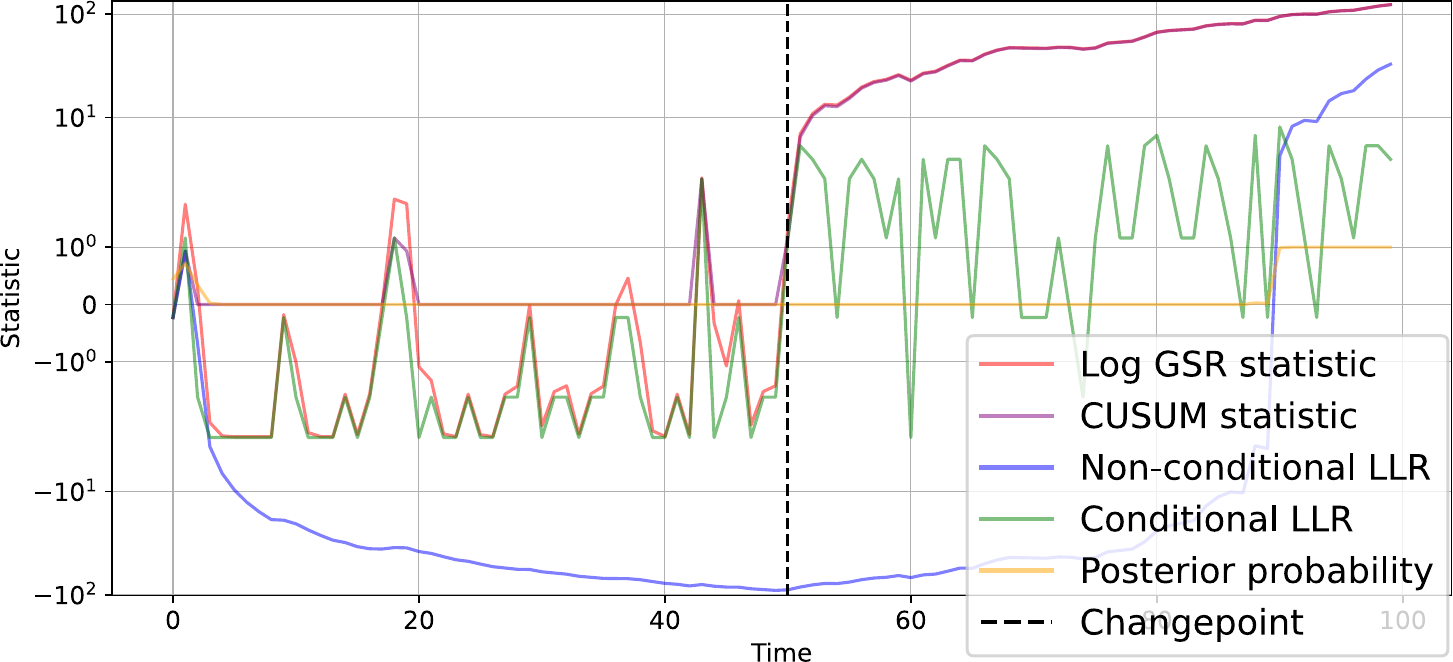} 
    \caption{
        \textbf{Example statistics in Poisson process dataset.} 
        Observation interval is $\Delta t = 1$.
        LLR is short for log-likelihood ratio $\log ( f(X^{(0,t)}) / g(X^{(0,t)}) )$, where $f$ and $g$ are the pre- and post-change density, respectively.
        Non-conditional LLR is given by $\log ( f(X^{(0,t)}) / g(X^{(0,t)}) )$, while Conditional LLR is given by $\log ( f( X^{(t)} \mid X^{(0,t-1)}) ) / g( X^{(t)} \mid X^{(0,t-1)}) )$.
        Posterior probability means $g(X^{(0,t)})$.
    }
    \label{fig:Statistics_Poisson_appendix}
\end{figure}

\begin{figure}[htbp]
    \centering
    \includegraphics[width=0.9\columnwidth]{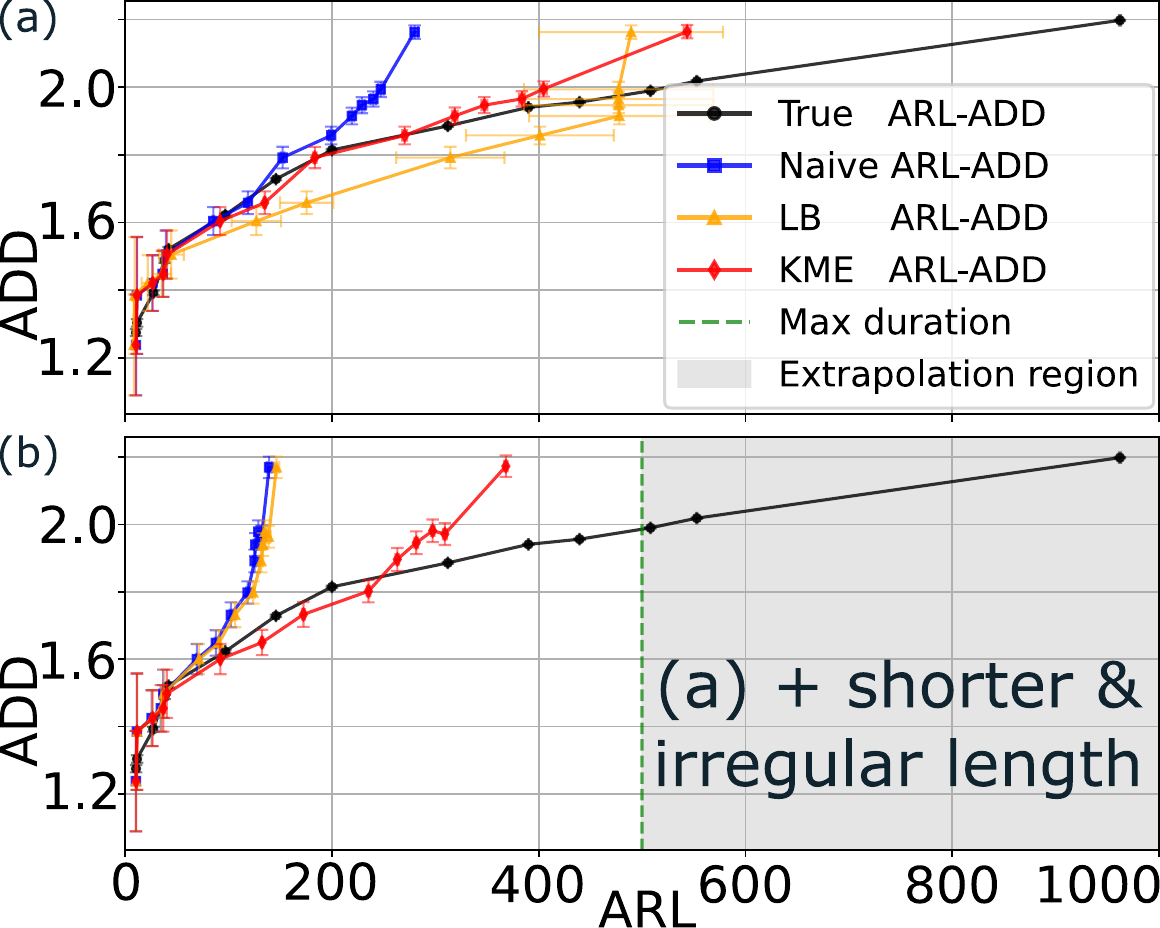} 
    \caption{
        \textbf{ARL-ADD tradeoff curves on Poisson process dataset.}
        The KM-ARL and KM-ADD provide more accurate estimates of the true ARL-ADD curve, even when sequence lengths are limited and irregular. 
        \textbf{        \textbf{(a)} Sequence length is $1000$.
                \textbf{(b)} Sequence lengths vary irregularly in the range $[50, 500]$.}
        The experiments use the Gaussian process dataset, comprising $10000$ sequences. 
        The employed QCD algorithm is the GSR procedure using ground-truth statistics. 
        Changepoint locations are sampled uniformly.
        $50\%$ of sequences contain a changepoint.
        Error bars represent the standard error of the mean.
        In (b), $\mathrm{ARL} > 500$ (gray area) indicates a region of extrapolation (excluding the true ARL), where ARLs cannot be estimated due to the absence of data, unless additional assumptions are imposed on the underlying distribution.
    }
    \label{fig:ARLADDcurve_Poisson_appendix}
\end{figure}

\subsection{WISDM Actitracker Dataset} \label{app: WISDM Actitracker Dataset}
We provide additional experimental results on the WISDM Actitracker dataset.
Fig.~\ref{fig:ARLADDcurve_WISDM_cpmodel_wide_appendix} is the original figure of Fig.~\ref{fig:ARLADDcurve_WISDM_cpmodel_main}.

\begin{figure*}[htbp]
    \centering
    \includegraphics[width=1.0\textwidth]{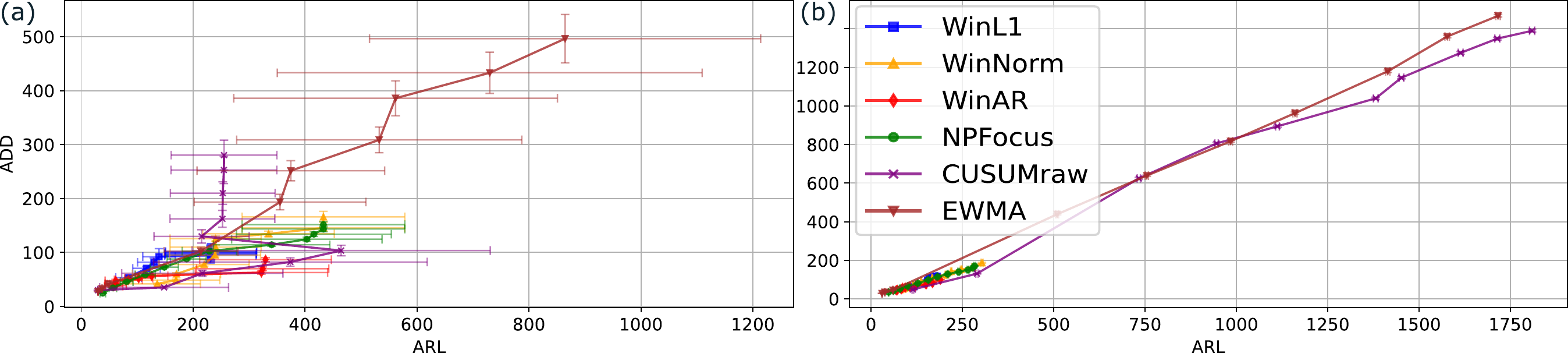} 
    \caption{
        \textbf{Original figure of Fig.~\ref{fig:ARLADDcurve_WISDM_cpmodel_main}.} Our KM-ARL and KM-ADD are more interpretable than the conventional LB-ARL and LB-ADD. 
    }
    \label{fig:ARLADDcurve_WISDM_cpmodel_wide_appendix}
\end{figure*}

While we use the machine-labeled subset in the main text, Fig.~\ref{fig:ARLADDcurve_WISDM_labeled_appendix} presents the ARL-ADD curves of QCD models evaluated on the user-labeled subset (``labeled'' subset) of the WISDM Actitracker dataset, which is about $10^3 (= 51326/83)$ times smaller than the machine-labeled subset (``unlabeled'' subset) of the WISDM Actitracker dataset (see Tab.~\ref{tab:WISDMstatistics_appendix} for statistics).
Evaluation is challenging on this subset because the number of sequences is limited to only $83$.
For such small datasets, we recommend using min-max-based metrics, such as the box plot, rather than average-based metrics, such as the ARL and ADD.

\begin{figure*}[htbp]
    \centering
    \includegraphics[width=1.0\textwidth]{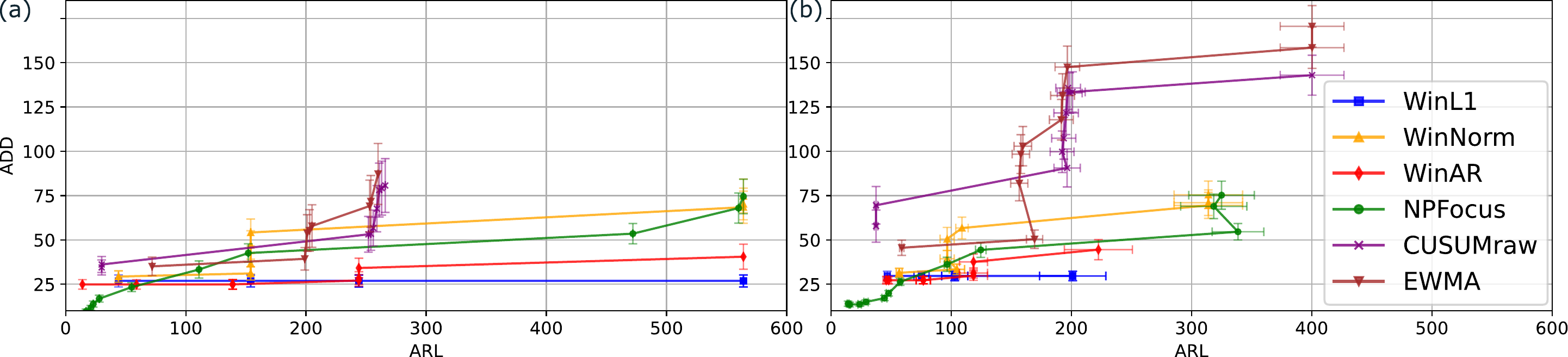} 
    \caption{
        \textbf{Evaluation of QCD models on real-world dataset (user-labeled subset of WISDM Actitracker).}
        The user-labeled subset (``labeled'' subset) of the WISDM Actitracker is used, which is about $10^3 (= 51326/83)$ times smaller than the machine-labeled subset (``unlabeled'' subset) of the WISDM Actitracker dataset (see Tab.~\ref{tab:WISDMstatistics_appendix} for statistics).
        Evaluation is challenging on this subset because the number of sequences is limited to only $83$.
        For such small datasets, we recommend using min-max-based metrics, such as the box plot, rather than average-based metrics, such as the ARL and ADD.
    }
    \label{fig:ARLADDcurve_WISDM_labeled_appendix}
\end{figure*}

\subsection{ARL-ADD with CUSUM} \label{app:CUSUM}
While we use the GSR procedure in the main text, we provide additional experimental results obtained with the CUSUM procedure \citep{page1954continuous_CUSUM_org} with ground-truth statistics. 
The ARL-ADD curves are provided in Fig.~\ref{fig:ARLADD_CUSUM_appendix} and support our findings in the main text.
The CUSUM procedure is evaluated under various thresholds.
\begin{figure*}[htbp]
    \centering
    \includegraphics[width=1.0\textwidth]{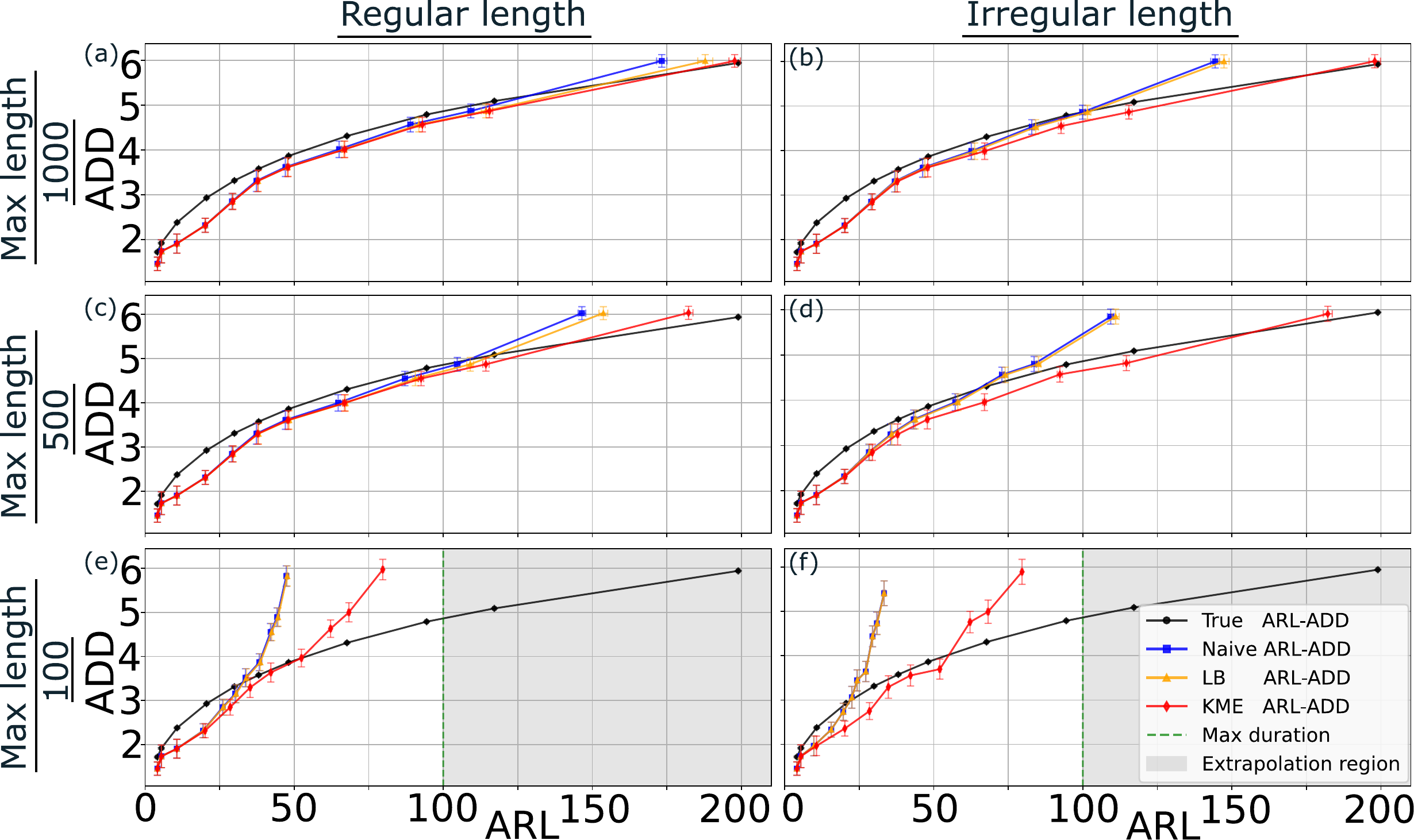} 
    \caption{
        \textbf{ARL-ADD curve of CUSUM procedure with ground-truth statistics.} 
        The results closely parallel those obtained with the GSR procedure in the main text: The KM-ARL and KM-ADD provide more accurate estimates of the true ARL-ADD curve, even when sequence lengths are limited and irregular. 
        CUSUM is evaluated under various thresholds.
        The experiments use the Gaussian process dataset, comprising $10000$ sequences. 
        Changepoint locations are sampled uniformly.
        $50\%$ of sequences contain a changepoint.
        Error bars represent the standard error of the mean.
        The gray areas indicate regions of extrapolation (excluding the true ARL), where ARLs cannot be estimated due to the absence of data, unless additional assumptions are imposed on the underlying distribution.
        \textbf{        
                \textbf{(a)} Sequence length is $1000$.
                \textbf{(b)} Sequence lengths vary irregularly in the range $[100, 1000]$.
                \textbf{(c)} Sequence length is $500$.
                \textbf{(d)} Sequence lengths vary irregularly in the range $[50, 500]$.
                \textbf{(e)} Sequence length is $100$.
                \textbf{(f)} Sequence lengths vary irregularly in the range $[10, 100]$.}
    }
    \label{fig:ARLADD_CUSUM_appendix}
\end{figure*}

\subsection{ARL-ADD with Geometric Changepoint Distribution} \label{app:Geometric Changepoint Distribution}
We provide additional experimental results obtained with the geometric changepoint distribution, instead of the uniform distribution.
We evaluate the ARL-ADD tradeoff curve on a Gaussian process dataset. 
The results are provided in Fig.~\ref{fig:ARLADD_geometric_appendix} and support our findings in the main text.
\begin{figure*}[htbp]
    \centering
    \includegraphics[width=1.0\textwidth]{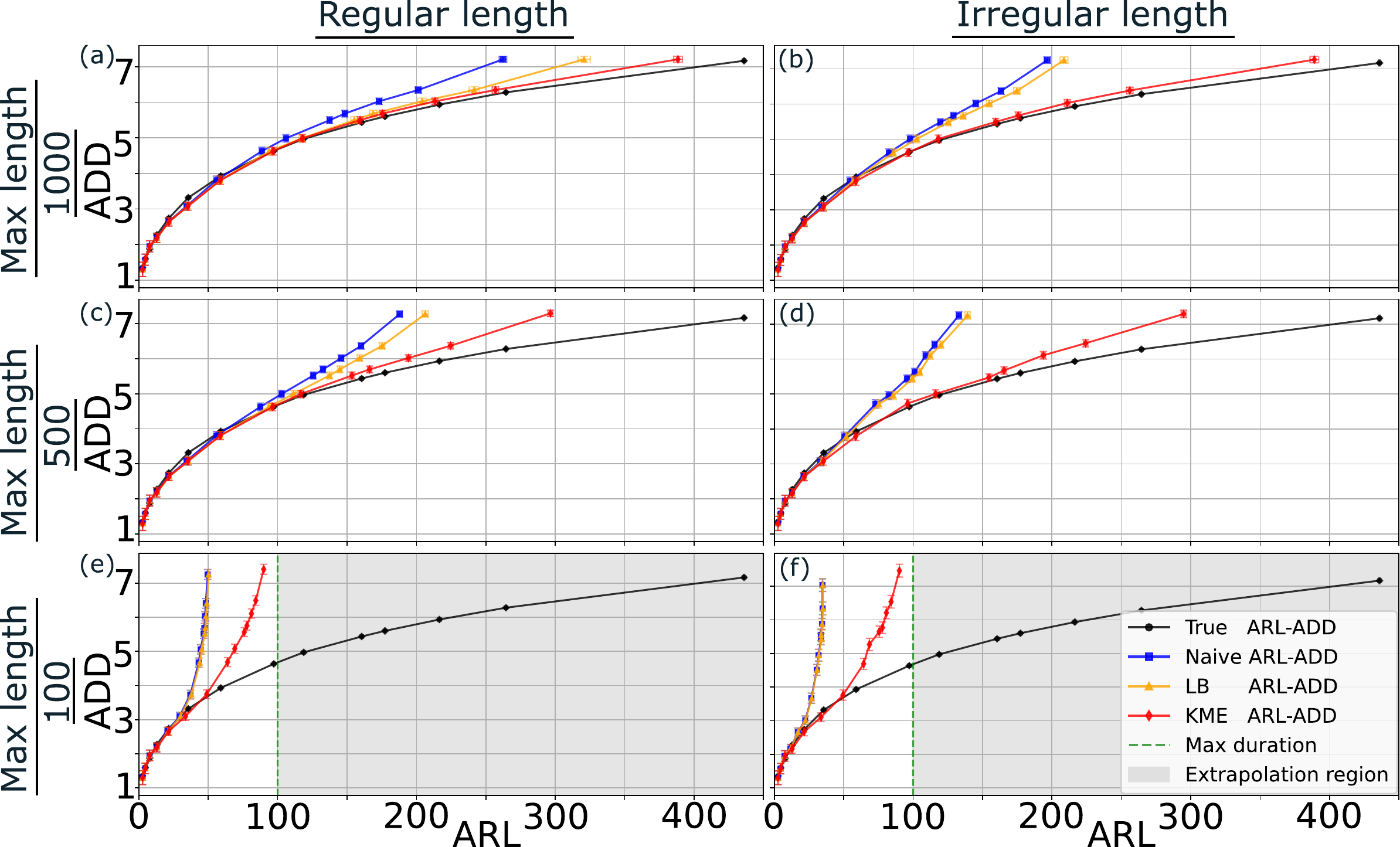} 
    \caption{
        \textbf{ARL-ADD curve on Gaussian process dataset with geometric changepoint distribution.} 
        The results closely parallel those obtained with the uniform changepoint distribution in the main text: The KM-ARL and KM-ADD provide more accurate estimates of the true ARL-ADD curve, even when sequence lengths are limited and irregular. 
        The experiments use the Gaussian process dataset, comprising $10000$ sequences. 
        The employed QCD algorithm is the GSR procedure using ground-truth statistics. 
        Changepoint locations are sampled from a geometric distribution with the success probability $p = 0.001$. 
        Error bars represent the standard error of the mean.
        The gray areas indicate regions of extrapolation (excluding the true ARL), where ARLs cannot be estimated due to the absence of data, unless additional assumptions are imposed on the underlying distribution.
        \textbf{        
                \textbf{(a)} Sequence length is $1000$.
                \textbf{(b)} Sequence lengths vary irregularly in the range $[100, 1000]$.
                \textbf{(c)} Sequence length is $500$.
                \textbf{(d)} Sequence lengths vary irregularly in the range $[50, 500]$.
                \textbf{(e)} Sequence length is $100$.
                \textbf{(f)} Sequence lengths vary irregularly in the range $[10, 100]$.}
    }
    \label{fig:ARLADD_geometric_appendix}
\end{figure*}

\clearpage
\section{Detailed Experimental Settings} \label{app:Detailed Experimental Settings}

\paragraph{Runtime.}
The computational costs of metric evaluation, including KM-ARL and KM-ADD, are negligible in our experiments compared with the time required to run the QCD algorithms. Computing a single data point in the figures takes from a few seconds to several hours, depending on the sequence length, the number of sequences, and the QCD algorithm used.

\paragraph{Computing Infrastructure.}
We use Python 3.11.9 \citep{Python3}, 
Numpy 1.26.4 \citep{numpy}, 
lifelines 0.30.0 \citep{lifelines}, 
changepoint-online 1.2.1 \citep{changepoint_online}, 
ruptures 1.1.9 \citep{ruptures}, 
ocpdet 0.0.6 \citep{ocpdet} (we pick up relevant implementations only (\texttt{ocpdet/CUSUM.py} and \texttt{ocpdet/EWMA.py}), and our implementation does not require TensorFlow \citep{tensorflow}, which is required when installing ocpdet.).
PyTorch 2.6.0 \citep{pytorch} (used only for saving and loading the Gaussian and Poisson datasets and can be replaced with Numpy etc.), 
The operating system (OS) is Ubuntu 20.04.
Intel(R) Core(TM) i9-7980XE CPU @ 2.60 GHz (18 cores) are used.
The total random access memory (RAM) size of our server is 125 GBs.
We do not use GPUs.

\subsection{Online QCD Models} \label{app:Online QCD Models}
To see a practical relevance of our KME-based metrics, we evaluate six QCD models on the simulated Gaussian process dataset (App.~\ref{app:Gaussian Process Dataset}) and real-world WISDM Actitracker dataset (main text and App.~\ref{app: WISDM Actitracker Dataset}).
We use the following online QCD models: Window L1 \citep{bai1995least}, Window Normal \citep{lavielle1999detection, lavielle2006detection}, Window AR \citep{bai2000vector}, non-parametric focused changepoint detection (NP-FOCuS) \citep{NP_FOCuS}, CUSUM \citep{page1954continuous_CUSUM_org}, and exponentially weighted moving average (EWMA) \citep{roberts2000control_EWMA_org}.
For Window L1, Window Normal, Window AR, the discrepancy measures for detection are derived from \textit{cost functions} as specified in \citep{ruptures}.

Window L1 detects changes in the median and uses the $L^1$ norm for the cost function. 
Window Normal detects changes in the mean and covariance matrix of a sequence of multivariate Gaussian random variables. It uses the log likelihood of empirical Gaussian distribution, for the computation of the cost function.
Window AR estimates the least-squares estimates of the break dates obtained from a piecewise autoregressive (AR) model.
NP‑FOCuS is an online, non-parametric changepoint detection algorithm designed to efficiently detect distributional changes in real-time data streams by leveraging functional pruning techniques. 
For multivariate time-series, we instantiate $n$ detectors and take the minimum detection time, where $n$ is the number of features.
CUSUM is a well-known QCD algorithm, which detects persistent shifts in the mean of a sequence. For multivariate time-series, we use the norm of the feature vector for the CUSUM chart.
EWMA computes a running average of datapoints, placing exponentially decreasing weights on older observations. 

Window L1, Window Normal, Window AR are implemented with \texttt{ruptures} \citep{ruptures}, NP-FOCuS is implemented with \texttt{changepoint-online} \citep{changepoint_online}, and CUSUM and EWMA are implemented with \texttt{ocpdet} \citep{ocpdet}.
We adapt window-based models from \texttt{ruptures}, originally designed for offline changepoint detection, to online QCD.

We fix both the window size and the burn-in interval at $30$ frames, which is about three times smaller than the minimum maximum sequence length in our simulation experiments ($100$). It is also much smaller than the average lengths in the WISDM Actitracker. Most other hyperparameters remain at their default values; otherwise, they are specified in our code.

\subsection{Preprocesses for WISDM Actitracker} \label{app:Preprocesses for WISDM Actitracker}
Our preprocesses for the WISDM Actitracker dataset for both ``labeled'' and ``unlabeled'' subsets are comprised of the following procedures:
\begin{enumerate}
    \item Extract user IDs (``user''), features (``X0'', ``X1'', ``X2'', ..., ``RESULTANT''), and frame labels (class'').
    \item Convert the string labels (``Jogging'', ``Walking'', ``Stairs'', ``Sitting'', ``Standing'', ``LyingDown'') to binary numeric labels ($0$ for pre-change and $1$ for post-change). In the ``labeled'' subset, ``Sitting'' is mapped to $0$ and all other activities to $1$. In the ``unlabeled'' subset, ``Walking'' and ``Sitting'' are mapped to $1$, while the remaining activities are mapped to $0$.
    \item Split sequences that have a label transition from $1$ to $0$ to remove turn-back sequences.
    \item Zero-pad outlier features (value $> 10^{12}$) (although some huge values such as $10^{10}$ still remain).
    \item Normalize each feature to the range $[-1, 1]$.
    \item Save feature vectors, labels, and changepoints to a file.
\end{enumerate}
All the preprocesses are detailed in our code (\texttt{WISDMactitracker.ipynb}).
Finally, we manually exclude the sequences of length $1$ when evaluating QCD algorithms (in \texttt{calc\_esARL\_WISDM\_cpmodels.py} and \texttt{calc\_esADD\_WISDM\_cpmodels.py} of our code).

\subsection{Statistics of WISDM Actitracker} \label{app:Statistics of WISDM Actitracker}
Tab.~\ref{tab:WISDMstatistics_appendix} summarizes the statistics of the WISDM Actitracker dataset used in our experiments.
The user-labeled and the machine-labeled subsets correspond to the ``labeled'' and the ``unlabeled'' subsets, respectively, in the original WISDM Actitracker dataset. 
Note that the ``unlabeled'' subset is, in fact, labeled with a system developed in the WISDM Lab \citep{kwapisz2011activity_WISDM_Actitracker_dataset}.
\begin{table*}[tbp]
\centering
\begin{tabular}{rcc}
                                     & User-labeled set & Machine-labeled set \\
    \hline
    \#Sequences                      & 83    & 51,326 \\
    \#Frames                         & 5,435 & 1,369,349 \\
    \#Seqs. w/ 100\% positive labels & 37    & 76 \\
    \#Seqs. w/ 0\% positive labels   & 17    & 61 \\
    \#Seqs. w/ mixed labels          & 29    & 51,189 \\
    Positive frame ratio             & 0.741 & 0.684 \\
    Mean length                      & 65.5  & 26.7 \\
    Min length                       & 1     & 1 \\
    Max length                       & 565   & 54,401 
\end{tabular}
\caption{
    \textbf{Statistics of WISDM Actitracker after preprocesses.} A positive label here means that a frame (timestamp) is sampled from the post-change distribution.
    The user-labeled set and the machine-labeled set corresponds to the ``labeled'' set and the ``unlabeled'' set in the original WISDM Actitracker dataset. 
    Note that the ``unlabeled'' set is actually labeled with a system developed in the WISDM Lab.
}
\label{tab:WISDMstatistics_appendix}
\end{table*}

Fig.~\ref{fig:WISDMlengths_appendix} shows histograms of sequence lengths of the user-labeled and machine-labeled subsets of the WISDM Actitracker dataset after the preprocesses, which are detailed in App.~\ref{app:Preprocesses for WISDM Actitracker}.
The sequence lengths exhibit substantial irregularity, and estimating the ARL and ADD is challenging.
\begin{figure}[tbp]
    \centering
    \includegraphics[width=0.7\columnwidth]{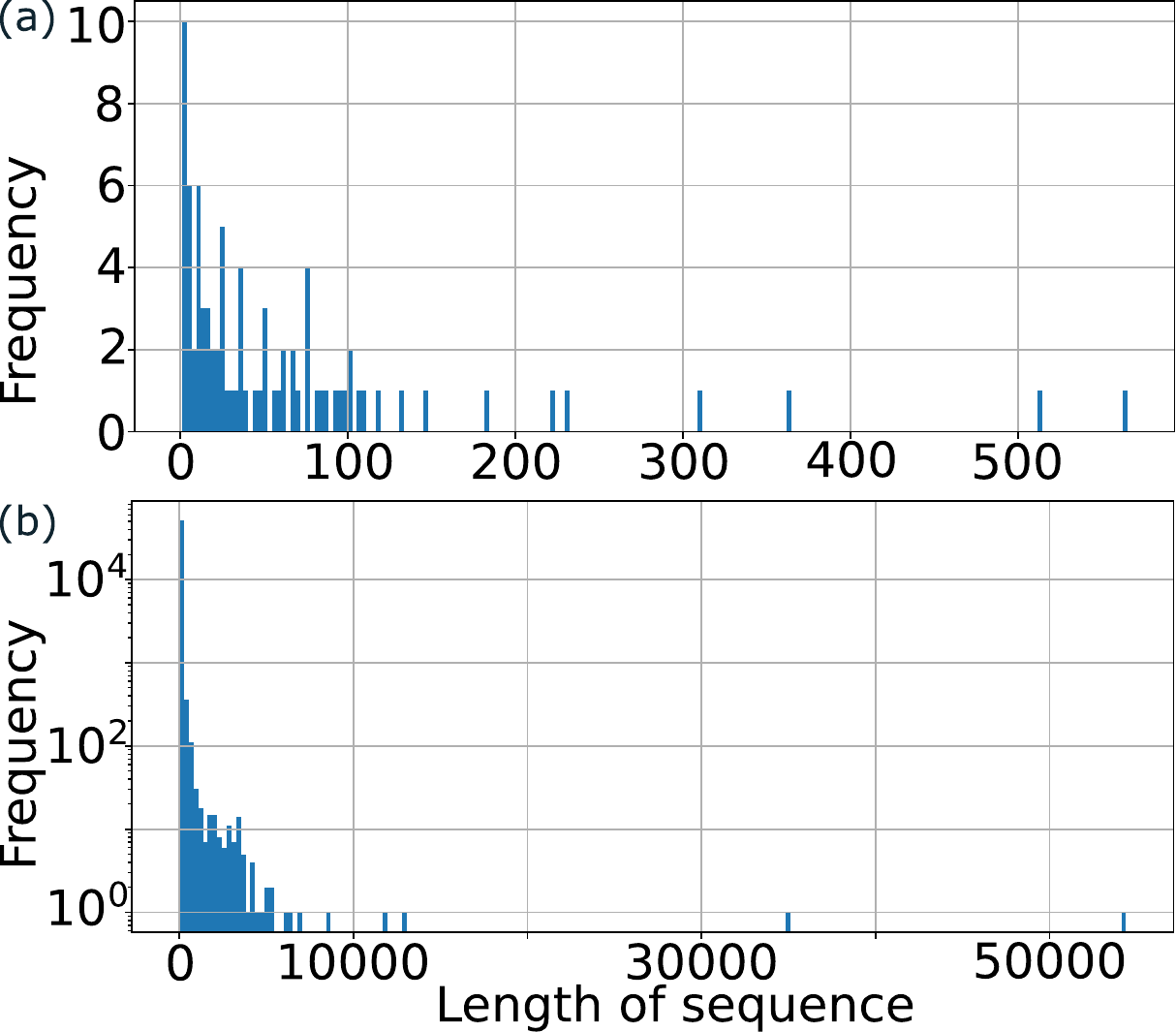} 
    \caption{
        \textbf{Lengths of WISDM Actitracker sequences.} 
        The sequence lengths exhibit substantial irregularity, and estimating the ARL and ADD is challenging.
        \textbf{(a) User-labeled subset ($y$-linear scale).} 
        \textbf{(b) Machine-labeled subset ($y$-log scale).}
        The user-labeled and the machine-labeled subsets correspond to the ``labeled'' and the ``unlabeled'' subsets, respectively, in the original WISDM Actitracker dataset. 
        Note that the ``unlabeled'' subset is, in fact, labeled with a system developed in the WISDM Lab \citep{kwapisz2011activity_WISDM_Actitracker_dataset}.
    }
    \label{fig:WISDMlengths_appendix}
\end{figure}

Fig.~\ref{fig:WISDMcps_appendix} shows histograms of changepoint indices (timestamps) of the user-labeled and machine-labeled subsets of the WISDM Actitracker dataset after the preprocesses, which are detailed in App.~\ref{app:Preprocesses for WISDM Actitracker}.
They contain a variety of pre-change lengths, but the number of without-change sequences are limited, and estimating the ARL is challenging. 
\begin{figure}[tbp]
    \centering
    \includegraphics[width=0.7\columnwidth]{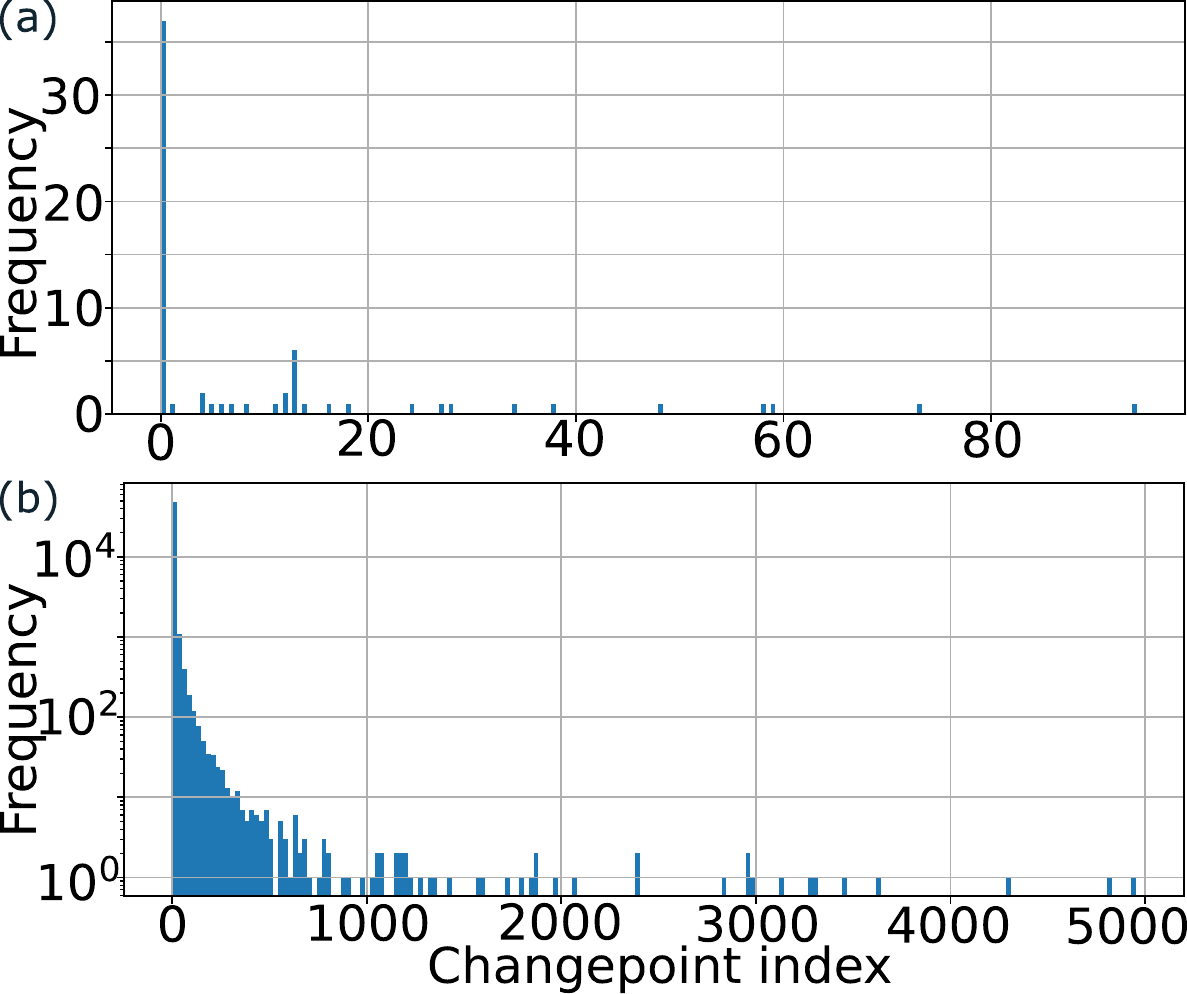} 
    \caption{
        \textbf{Changepoint indices of WISDM Actitracker sequences.} 
        Changepoint index here means the timestamp at which a change occurs, i.e., the pre-change length.
        They contain a variety of pre-change lengths, but the number of without-change sequences are limited, and estimating the ARL is challenging. 
        \textbf{(a) User-labeled subset ($y$-linear scale).} 
        The number of sequences without a changepoint is $17$ out of $83$.
        \textbf{(b) Machine-labeled subset ($y$-log scale).}
        The number of sequences without a changepoint is $61$ out of $51326$.
        The user-labeled and the machine-labeled subsets correspond to the ``labeled'' and the ``unlabeled'' subsets, respectively, in the original WISDM Actitracker dataset. 
        Note that the ``unlabeled'' subset is, in fact, labeled with a system developed in the WISDM Lab \citep{kwapisz2011activity_WISDM_Actitracker_dataset}.
    }
    \label{fig:WISDMcps_appendix}
\end{figure}


\subsection{Numerical Computation of Variance} \label{app:Numerical Computation of Variance}
We describe the computation methods of the restricted variance of the detection time, $\hat{V} := \widehat{\mathrm{Var}} [ \min \{\tau, a\} ]$, used in our experiments.

The restricted variance of LB-ARL is defined as the naive empirical variance: $\hat{V} = \widehat{\mathrm{Var}} [ \min \{\tau, T \} ]$, which is consistent with the mean of LB-ARL ($\mathbb{E}[\min \{\tau, T \} ]$). Given a dataset with size $N$, it is computed as $\frac{1}{N_{\mathrm{LB}}} \sum_{i=1}^{N_{\mathrm{LB}}} (\tau_i - \bar{\tau})^2$. $\bar{\tau}$ is the empirical LB-ARL, i.e., the detection time averaged over the subset in which $\tau_i \leq$ sequence length. The subset size is denoted by $N_{\mathrm{LB}} (\leq N)$.

There are several methods to numerically compute the variance of RMST (KM-ARL). We adopt \texttt{lifelines} \citep{lifelines}, a standard Python library for survival analysis, to compute the KME and its variance.
Specifically, we use \texttt{lifelines.utils.restricted\_mean\_survival\_time} with \textit{return\_variance=True}, in which $\hat{V} = 2 \int_{0}^{a} t \hat{S}(t) d t - (\int_0^{a} \hat{S}(t) d t )^2$, where $\hat{S}(t)$ denotes the empirical survival function, computed from the given dataset with size $N$. The derivation is given in the appendix in \citep{royston2013restricted}, as is mentioned in the documentation\footnote{
    \url{https://lifelines.readthedocs.io/en/latest/lifelines.utils.html\#lifelines.utils.restricted\_mean\_survival\_time}.
}.

\paragraph{Why do KM-ARL/ADD reduce variance compared to LB-ARL/ADD?}
We elaborate on the discussion of our experimental results on the real-world dataset.
In short, $\hat{V}$ of LB-ARL tends to larger than that of KM-ARL because the former can use only $N_{\mathrm{LB}} \leq N$, while the latter can exploit all $N$ sequences.
LB-ARL/ADD become unstable and exhibit high variance, particularly at large QCD thresholds, where $N_{\mathrm{LB}}$ tends to be small because the detector often fails to raise an alarm within the sequence length.
In contrast, KM-ARL/ADD do not suffer from this issue because they are calculated from a constant number of sequences $N$, regardless of the threshold, which enhances their robustness against censoring.

\clearpage
\section{Supplementary Related Work} \label{app:Supplementary Related Work}
\paragraph{QCD metrics.}
General QCD metircs include
the Shiryaev's ARL \citep{shiryaev1963optimum},
Lorden's worst-case ADD \citep{lorden1971procedures}, 
Pollak's supremum conditional ADD \citep{pollak1985optimal},
exponential penalty detection delay \citep{poor1998quickest},
probability of false alarm (PFA) \citep{tartakovsky2019sequential_TartarBookQCD},
precision-recall \citep{vandenburg2020evaluation}, 
Jaccard index \citep{vandenburg2020evaluation}, 
$F_\beta$ score \citep{vandenburg2020evaluation},
range-based precision-recall \citep{tatbul2018precision},
NAB \citep{alexander2015evaluating_NAB},
SoftED \citep{salles2024softed},
Hausdorff distance \citep{vandenburg2020evaluation},
adjusted Rand index \citep{hubert1985comparing},
variation of information \citep{arabie1973multidimensional},
and 
segmentation covering metric, \citep{everingham2010pascal, arbelaez2010contour}.
In this paper, we focus on the standard ARL and Bayesian ADD (expectation is taken over the changepoint) because they are intuitive for practical evaluation and are widely used in proving the optimality of many QCD algorithms \citep{tartakovsky2019sequential_TartarBookQCD}.

To our knowledge, no existing metric explicitly addresses random irregular sequence lengths in QCD.
Some metrics consider sequence truncation, such as the inverse of the false alarm rate over a finite time interval $\Delta t$ \citep{alakent2018application}, which approximates the ARL for large $\Delta t$.
The PFA in horizon \citep{huang2024high} is defined as the PFA measured on truncated sequences.
However, these metrics are designed for fixed, finite-length sequences and introduce significant bias when applied to irregular-length sequences.

\paragraph{ARL and ADD estimation.}
There are many studies that derive the ARL and ADD of specific control charts or distributions:
\citep{reynolds1975approximations, crowder1987simple, saccucci1990average, knoth1998exact, fu2002average, areepong2022integral, haq2022note, sunthornwat2023analytical}.
\citet{li2014computation} review simulation methods for computing the ARL and ADD of CUSUM \citep{page1954continuous_CUSUM_org} and EWMA \citep{roberts2000control_EWMA_org}, on simulation data.
The references therein analyze Markov chain methods and integral equation methods.
\citet{alakent2018application} define the ARL of the Shewhart chart as the inverse of the alarm probability.
In contrast, we propose the estimators of the ARL and ADD for arbitrary control charts, QCD models, and underlying distributions.
\citet{qiu2013introduction} defines a maximum length of the sequence for the ARL computation and excludes sequences that fail to detect changepoints by the maximum length. This estimator is similar to the LB-ARL and LB-ADD and exhibits a more substantial
negative bias than our ARL estimator due to truncation, as proven in Sec.\ref{sec:Bias Analysis} and demonstrated in Sec.\ref{sec:Experiments}.

\paragraph{Is ARL always informative?}
\citep{mei2008average} is the first paper in which the author systematically questions the appropriateness of the ARL.
He claims that 
(i) a detection scheme with finite detection delay can have infinite ARL, where the detector can raise false alarms with probability $1$,
and (ii) under the standard minimax formulation with the ARL, we are in danger of finding a detection scheme that focuses on detecting larger changes instead of smaller changes.
Claim (i) is theoretically relevant but practically irrelevant. It points out, in theory, that if the ARL is not assumed to be finite, one can construct a detector that raise false alarms. In practice, however, a detector with a small finite detection delay and an (approximately or numerically) infinite ARL is considered to perform well. This is not problematic in real-world applications.
We agree with Claim (ii) because the minimax formulation of QCD, or more generally, the min and max operations are sensitive to outliers and fail to capture the average behavior of the model. For this reason, when used, they are typically complemented by average-based metrics such as the ARL and ADD. 
In summary, we encourage researchers to choose evaluation metrics with a clear understanding of their strengths and limitations.

\paragraph{QCD model for finite sequence length.}
To our knowledge, \citep{huang2024high} is the first and only work on the QCD problem on sequences of finite length.
They propose a CUSUM variant for a fixed finite sequence length.
If the model fails to detect the change before the horizon, they set the delay to $T - \nu$, causing a downward bias in the ADD.

\paragraph{QCD for survival analysis.}
There are many studies on the application of changepoint detection to survival analysis.
In \citep{biswas2008risk, sego2009risk, gandy2010risk, phinikettos2014omnibus, gomon2024cgr, sasikumr2025comprehensive}, they propose variants of CUSUM to estimate survival time on right-censored data, which is orthogonal to our focus: accurately estimating ARLs and ADDs of arbitrary QCD algorithms on right-censored sequential data with irregular lengths.
\citet{gierz2020inference} consider changepoint problems concerning a shift in a distribution for a set of time-ordered observations under censoring or truncation.
\citet{polunchenko2016exact} derive a closed-form formula for the survival function of the generalized Shiryaev-Roberts (GSR) detection time under Brownian motion.

\paragraph{Parametric and closed-form approaches to ARL and ADD.}
The exponential approximation of the underlying distribution provides accurate estimation of ARL and ADD in some scenarios. 
Also, closed-form expressions for ARL are known for some limited cases (e.g., \citep{fu2002average, areepong2022integral, sunthornwat2023analytical}.
However, it has been also known that:
(1) the exponential approximation does not work well if the distribution deviates from exponential \citep{borror2003robustness, pehlivan2010impact}; 
and (2) closed-form expressions are not always available—it depends on distributions and detectors.
This is why we emphasize that our estimators are applicable to \textit{arbitrary} underlying distributions (=non-parametric) and \textit{arbitrary} QCD models, a key contributions that clearly distinguishes our work from prior approaches.

\paragraph{Supplement to main text.}
\citet{ahmed2025deepllrcusum} propose DeepLLR-CUSUM and uses KME and RMST to empirically evaluate it under \textit{fixed} length truncation.
Theoretical analysis of ARL and ADD estimations is not their primary focus.

\section{Supplementary Discussion} \label{app:Supplementary Discussions}
\paragraph{Terminology.}
In general, the terms ``average'', ``mean'', and ``expected'' are used interchangeably when referring to the ARL and ADD. 
The ARL is also referred to as the ARL to false alarm \citep{tartakovsky2019sequential_TartarBookQCD} or the average time to signal \citep{li2014computation}.
The ADD with a fixed changepoint is also referred to as the conditional expected delay to detection (CEDD) \citep{tartakovsky2019sequential_TartarBookQCD}.
The ADD with taking expectation over the changepoint is referred to as the Bayesian ADD or, simply, ADD \citep{tartakovsky2019sequential_TartarBookQCD}.
In our paper, we refer the Bayesian ADD as the ADD (with $T=\infty$).
In the statistical theory of control charts, including change detection in survival monitoring (or monitoring time-to-event data), the ARL and ADD are sometimes referred to as the in-control or zero-state ARL and the out-of-control or steady-state ARL, respectively \citep{saccucci1990average, sasikumr2025comprehensive, lim2025efficient}.
Strictly speaking, the steady-state ARL is defined as $\mbe[\tau \mid \nu=0]$, not $\mbe[\tau - \nu \mid \tau \geq \nu, \nu < \infty]$.

\paragraph{How to define $T$ and $T_{\mathrm{max}}^{*}$.}
We clarify how to define the distribution of the truncation length (denoted by $F_T$ here), focusing on ARL, $T$ (truncation length), and $T_{\mathrm{max}}^{*}$ (the least upper bound for the support of $F_T$, i.e., the maximum possible sequence length).
A parallel discussion applies to ADD.
In brief, given an evaluation dataset, we can arbitrarily define $F_T$.
In practice, once a test dataset is collected for model evaluation, $F_T$ may be taken as the empirical CDF of the test dataset (or a mollified (smoothed) version of it).
Accordingly, $T_{\mathrm{max}}^{*}$ can be defined as the maximum sequence length in the test dataset, denoted by $T_{\mathrm{max}}$ in the main text.

\paragraph{Tighter bound.}
In Cor.~1.1 in \citep{stute1994bias}, an additional tighter \textit{upper} bound is available, which potentially can be used to derive tighter bounds for the KM-ARL and KM-ADD. 

\paragraph{Theoretical computational efficiency.}
The computational complexity of KM-ARL is $\mathcal{O}(N) + \mathcal{O}(T_\mathrm{max})$ (plus event time sorting if necessary), where $N$ is the number of sequences in the dataset and $T_\mathrm{max}$ is the max sequence length. Bucketing the sequences into $T_\mathrm{max}$ bins takes $\mathcal{O}(N)$, counting $n_j^\mathrm{ARL}$ and $d_j^\mathrm{ARL}$ takes $\mathcal{O}(T_\mathrm{max})$, computing $\hat{S}^\mathrm{ARL}$ takes $\mathcal{O}(T_\mathrm{max})$, and integrating the step-like function $\hat{S}^\mathrm{ARL}$ takes $\mathcal{O}(T_\mathrm{max})$. 

\paragraph{Empirical computational efficiency.}
As mentioned in App.~\ref{app:Detailed Experimental Settings}, the computational costs of metric evaluation, including KM-ARL and KM-ADD, are negligible in our experiments compared with the time required to run the QCD algorithms.
For significantly longer sequences (e.g., $\gtrsim 10^3$, which is our maximum length) and larger datasets (e.g., $\gtrsim 10^4$, which is our maximum size), we recommend splitting the dataset and aggregating the ARLs and ADDs, although it can degrade evaluation accuracy.

\paragraph{Evaluation on significantly small datasets.}
Evaluation of ARLs and ADDs on significantly small datasets such as the user-labeled subset of the WISDM Actitracker is challenging, as shown in Fig.~\ref{fig:ARLADDcurve_WISDM_labeled_appendix} in App.~\ref{app: WISDM Actitracker Dataset}.
In such cases, consider applying finite-sample bias correction methods (e.g., bootstrapping) or increasing samples. 

\paragraph{Implementation in Python.}
We implement our estimators based on Python \citep{Python3} because it is more compatible with machine learning development than R \citep{R_statistical_software}. 



\paragraph{Variance estimation.}
While our primary focus is on bias estimation, we also provide a rough estimate of variance.
The variance of LB-ARL (naive, standard definition of variance) is given by
\begin{equation}
    \int_0^a (t - \bar{t})^2  dF(t),
    \label{eq:naive_variance}
\end{equation}
where $a$ denotes the cut-off time, $F$ the CDF of the event time, and $\bar{t}$ the mean event time.
On the other hand, \citet{akritas2000central} provides the asymptotic variance of the RMST—analogous to the variance of KM-ARL in our setting:
\begin{equation}
    \int_0^a \frac{S(t)}{1 - H(t)} (t - \bar{t})^2  dF(t)    
    \label{eq:KME_variance}
\end{equation}
(Eq. (4) in \citep{akritas2000central}), where $S$ denotes the survival function and $H$ the CDF of $\min \{\mathrm{event\,time}, \mathrm{censoring} \}$.
Compared with LB-ARL \eqref{eq:naive_variance}, the variance of KM-ARL \eqref{eq:KME_variance} is computed as a weighted expectation of $(t - \bar{t})^2$, with its scale determined by the weight $\frac{S(t)}{1 - H(t)}$.
Because $S(t) \in [0, 1]$, this weight can grow large when $1-H(t)$ is small unless $S(t)$ decreases sufficiently in the same region. Consequently, the variance integral in \eqref{eq:KME_variance} can be dominated by the interval where $t$ is large and close to the least upper bound of $H(t)$. Therefore, the variance of KM-ARL can exceed that of LB-ARL when both event time and censoring have heavy tails and $a$ is large; otherwise, it may be smaller.

Finally, the assumptions and background theory in \citet{akritas2000central} are intricate and require careful examination to validate this expression in QCD—one of the main challenges in developing our bias theory. A full treatment would warrant a separate paper.

\subsection{More about Independent Censoring Assumption} \label{app:More_About_Independent_Censoring_Assumption}
We extend our discussion in Sec.~\ref{sec:Discussion and Limitations} about potential alleviation of the independent censoring assumption.

\subsubsection{Effect of Heavy, Dependent Censoring} \label{app:Effect_of_Heavy_Dependent_Censoring}
Demonstrating the numerical effect of bias under violations of the independent censoring assumption would be helpful for practitioners.
According to Fig.~2 in \citep{malmquist2025what}, under a constant hazard rate and with the number of subjects (sequences) = 150, it is reported that the mean squared error of the survival function over time is approximately given by:
\begin{itemize}
    \item For independent censoring (uniform censoring):
        \begin{itemize}
            \item $< 0.01$ for 10\% censoring.
            \item $< 0.01$ for 50\% censoring.
            \item $< 0.01$ for 90\% censoring.
        \end{itemize}
    \item For dependent censoring (occurring just before event time (detection)):
        \begin{itemize}
            \item $< 0.01$ for 10\% censoring.
            \item $\approx 0.4$ for 50\% censoring.
            \item $\approx 2.1$ for 90\% censoring.
        \end{itemize}
\end{itemize}
This result shows that heavy, dependent censoring inflates the bias.
It aligns with our statement in Sec.~\ref{sec:Discussion and Limitations}, where we clarify the appropriate use cases for KM-ARL and KM-ADD.

\subsubsection{Solutions}
\paragraph{Background: Restricted Mean Survival Time (RMST).}
To mitigate the bias arising from dependent censoring and finite time interval, we must quantify the effect of the bias on the survival function or its integral within a finite time interval, known as the restricted mean survival time (RMST) in survival analysis.
It is the counterpart of KM-ARL and KM-ADD in survival analysis and is defined as the expected survival time up to a specified horizon (denoted by $h$ here), equivalently the area under the survival curve from $0$ to $h$.
It has been known that the RMST is systematically biased under dependent censoring \citep{klein1984asymptotic, rivest2001martingale, ebrahimi2003survival}.

\paragraph{Solutions.}
Many studies have developed mitigation techniques for bias arising from dependent censoring.
The following works, among others, suggest promising directions for deriving KM-ARL/ADD under dependent (informative) censoring or alleviating the resulting bias.

\begin{itemize}
    \item \citep{ebrahimi2003survival}: Develops a consistent estimator of the survival function under dependent censoring, which may extend to estimating ARL and ADD in QCD.
    \item \citep{hsu2010robust}: Proposes a robust weighted KME that uses baseline prognostic covariates to correct bias in the marginal survival function when censoring is dependent.
    \item \citep{lin2023informative}: Applies inverse probability of censoring weighting (IPCW); at each time $t$, each subject (patient) still under observation is weighted by the inverse of their estimated probability of remaining uncensored up to $t$.
    \item \citep{crommen2025recent}: Summarizes several bias-correction strategies:
        \begin{itemize}
            \item Nonparametric marginal estimation under a known copula: Given a specified copula for $(\tau, C)$, observable probabilities uniquely determine the marginals. Step-function estimators are constructed that reduce to the KME under independence and yield a consistent estimator under dependent censoring.
            \item Likelihood estimation in copula models: Corrects bias by modeling the joint distribution of $(\tau, C)$; parametric copulas enable joint likelihood maximization with identifiable parameters and consistent, asymptotically normal estimates.
            \item Correction of regression functionals: Embeds dependent censoring in copula-based Cox and related semi-parametric models to adjust hazard ratios, covariate effects, and causal effects.
            \item Machine-learning and partial-identification approaches: Addresses bias when flexible modeling or weaker assumptions are required, using both deep and non-deep learning methods.
        \end{itemize}
\end{itemize}


\subsection{Relevance of Requiring Datasets with Multiple Sequences with Changepoint Labels} \label{app:RelevanceOfRequiringDatasetsWith}

In this paper, we focus on the situations where datasets of multiple sequences with changepoint labels are available.
This problem setting is common in machine learning.
Although there are many studies of QCD on unlabeled datasets or pure simulations, there are also many studies on labeled datasets.

There are a wide variety of labeled datasets with multiple sequences and labeled changepoints (or annotations that can be seen as changepoints).
\begin{itemize}
    \item Human sensing: WISDM Actitracker \citep{kwapisz2011activity_WISDM_Actitracker_dataset} includes both human- and machine-labeled changepoints. We use this real-world sensing data in our experiments.
    \item Twitter sentiment-change detection: Twitter Data Stream (US Airline Sentiment) \citep{makone2016twitterusairline_7}, a collection of tweets annotated with the tags of positive/negative/neutral, is used to detect sentiment changes in social media data stream \citep{bouchikhi2019kernel_6}: e.g., a sudden increase in negative tweets after an event.
    \item Speaker change detection: Many speech datasets, such as DIHARD dataset \citep{ryant2018first_11} and IITG-MV phase 3 dataset \citep{haris2012multivariability_12}, have speaker or language annotations, which can be used for speaker change detection \citep{mishra2024spoken_10}.
    \item Temporal action localization: THUMOS14 \cite{THUMOS14} and ActivityNet \citep{activitynet} are standard benchmark datasets in temporal action localization \citep{wang2023temporal}. They contain a number of videos with temporal annotations of human actions (e.g., Horseback riding, Diving, Long jump, etc.), where timestamps with action changes can be seen as changepoints.
    \item Earthquake detection: STEAD \citep{mousavi2019stanford} and several datasets in SeisBench \citep{woollam2022seisbench} offer datasets of seismic signals with annotation of event times.
\end{itemize}

Furthermore, we can create labeled datasets from unlabeled real-world dataset by injecting changepoints:
\begin{itemize}
    \item Sound anomaly detection: Normal and abnormal audio data recorded from industry machine are concatenated to synthesize changepoints \citep{gopalan2021bandit_1}. The base dataset is the MIMI Dataset \citep{purohit2019mimii_2}. 
    \item Social event detection: Based on the unlabeled MIT Cellphone Data \citep{eagle2006reality_4}, annotations are defined by linking changepoints to calender events (e.g., sponsor meeting, Presidents Day, spring break, etc.) \citep{keichange_3}. 
    \item Social event detection: Similarly to \citep{keichange_3}, annotations are provided for Enron Email Data \citep{klimt2004introducing_9} based on actual corporate events (e.g., Federal Energy Regulatory Commission's decisions, the CEO's public protests, the bankruptcy filing, etc.) \citep{wang2023change_8}. 
\end{itemize}

Of course, we can create labeled datasets via simulation (e.g., KDD Cup 1999 intrusion detection dataset \citep{kdd_cup_1999_data_5}).

Thus, real-world applications include: human sensing and action recognition  \citep{Bishop_PRML2006, wang2023temporal}, sentiment-change detection on social media data stream \citep{bouchikhi2019kernel_6}, speaker/language change detection \citep{mishra2024spoken_10}, earthquake detection \citep{woollam2022seisbench}, sound anomaly detection \citep{gopalan2021bandit_1}, event detection from cellphone data or email data \citep{keichange_3, wang2023change_8}, and intrusion detection \citep{kdd_cup_1999_data_5}, among others.

In summary, although our estimators cannot be used for unlabeled datasets, as noted in the main text, they apply to many real-world tasks listed above, among others. 

\paragraph{Our contributions to machine learning community.}
Finally, we would like to note that analyzing how we evaluate performance metrics has been of general interest in machine learning \citep{hernandez20041st, LFMR05, LFM06, roc2025}. Recent top-tier conferences themselves emphasize this by requiring careful reporting of metrics in their Author Instructions and Reviewer Instructions, underscoring that reproducible and interpretable evaluation is a first-class concern, not an afterthought. Our work advances more robust, interpretable evaluation of QCD models, enabling empirical, intuitive model selection, as stated in the Abstract and Experiment sections.


\end{document}